\numberwithin{equation}{section}
\numberwithin{figure}{section}
\theoremstyle{plain}
\newtheorem{thm}{\protect\theoremname}
\theoremstyle{plain}
\newtheorem{cor}[thm]{\protect\corollaryname}
\theoremstyle{plain}
\newtheorem{lem}[thm]{\protect\lemmaname}
\newtheorem{claim}[thm]{\protect\claimname}
\newtheorem{defn}[thm]{\protect\definitionname}
\numberwithin{thm}{section}
\newcommand{\grad}{\nabla}
\providecommand{\corollaryname}{Corollary}
\providecommand{\claimname}{Claim}
\providecommand{\questionname}{Question}
\providecommand{\lemmaname}{Lemma}
\providecommand{\theoremname}{Theorem}
\providecommand{\conjecturename}{Conjecture}
\providecommand{\propositionname}{Proposition}
\providecommand{\definitionname}{Definition}
\newcommand{\R}{\mathbb{R}}
\newcommand{\ut}{u_{\theta}}
\newcommand{\gut}{\grad \ut}
\newcommand{\gu}{\grad u}
\definecolor{mygreen}{RGB}{80,180,0}
\title{Spectral Clustering on Large Datasets: When Does it Work?\\Theory from Continuous Clustering and Density Cheeger-Buser}
\author{
Timothy Chu \\
  Google\\
  \texttt{timothyzchu@gmail.com}\\
\and
Gary L.~Miller\\
  CMU\\
  \texttt{glmiller@cs.cmu.edu}\\
\and
Noel J.~Walkington\\
  CMU\\
  \texttt{noelw@andrew.cmu.edu}\\
}
\begin{document}

\maketitle
\thispagestyle{empty} 
\setcounter{page}{0}

\begin{abstract}

Spectral clustering is one of the most popular clustering algorithms that has stood the test of time. It is simple to describe, can be implemented using standard linear algebra, and often finds better clusters than traditional clustering algorithms like $k$-means and $k$-centers.  The foundational algorithm for two-way spectral clustering, by Shi and Malik, creates a geometric graph from data and finds a spectral cut of the graph.

In modern machine learning, many data sets are modeled as a large number of points drawn from a probability density function. Little is known about when spectral clustering works in this setting -- and when it doesn't.  Past researchers justified spectral clustering by appealing to the graph Cheeger inequality (which states that the spectral cut of a graph approximates the ``Normalized Cut''), but this justification is known to break down on large data sets. 

We provide theoretically-informed intuition about spectral clustering on large data sets drawn from probability densities, by proving when a continuous form of spectral clustering considered by past researchers (the \textit{unweighted spectral cut}) finds good clusters of the underlying density itself. Our work suggests that Shi-Malik spectral clustering works well on data drawn from mixtures of Laplace distributions, and works poorly on data drawn from certain other densities, such as a density we call the `square-root trough'. 


Our core theorem proves that \textit{weighted} spectral cuts have low weighted isoperimetry for all probability densities. Our key tool is a new Cheeger-Buser inequality for all probability densities, including discontinuous ones.

\end{abstract}
\newpage 
\thispagestyle{empty}
\setcounter{page}{0}
\begingroup
  \hypersetup{hidelinks} 
  \tableofcontents
\endgroup
\newpage
\section{Introduction}\label{sec:intro}

Spectral clustering~\cite{ShiMalik97, NgSpectral01} is one of the most popular methods in machine learning for finding clusters in data~\cite{von2007tutorial, von2008consistency, belkin2004semisup, belkin2007convergence, belkin2005towards, lkll11spectral, yhj09spectral, kl19spectral, ccl21spectral, zr18spectral, acss20, k19spectral, ms22spectral, SpectralNet, bga20spectral}, and has inspired modern deep-learning based approaches to clustering and semantic segmentation~\cite{law17deepSpectral, SpectralNet, melas22deepSpectral}. The foundational paper in spectral clustering, by Shi and Malik~\cite{ShiMalik97}, gave a two-way normalized spectral clustering algorithm based on spectral cuts of a geometric graph from data. 

Despite this popularity, little is known theoretically about the quality of clusters given by spectral clustering on large data sets.  Researchers previously justified spectral clustering~\cite{belkin2004semisup,ms22spectral} by appealing to the graph Cheeger inequality~\cite{AlonM84}, which guarantees that the spectral cut of a geometric graph has a low surface area (sum of edge weights crossing the cut) to volume (sum of edge weights on the smaller side of the cut) ratio.~\footnote{This ratio is known as the \textit{isoperimetry} of the cut, and minimizing this quantity is known as the ``Normalized Cut"~\cite{ShiMalik97, von2007tutorial} or ``Sparsest Cut"~\cite{ms90sparse, lr99sparse, chawla05sparse} problem.} However, graph Cheeger gives very poor guarantees on large or infinite graphs~\cite{t21, bhlmy15buser, kkrt16buser}. \footnote{The Cheeger inequality guarantees that spectral cuts on a graph are a $1/\Phi$ approximation to optimal (where $\Phi$ is the minimum isoperimetry over all cuts). On geometric graphs like the $n$ point line or the $\sqrt{n} \times \sqrt{n}$ grid, $1/\Phi$ diverges since the surface area scales more slowly with $n$ than the volume. For large or infinite graphs, researchers needed graph \textit{Buser} inequalities for tighter spectral cut guarantees~\cite{t21, bhlmy15buser, kkrt16buser}. 
}

One fundamental assumption of machine learning is that data can be modeled as a points in Euclidean space drawn from an underlying probability density. Researchers have analyzed spectral clustering by examining its behavior as the number of samples drawn from a density grows large~\cite{von2008consistency, TrillosVariational15, TrillosConsis16}. Past work showed that normalized Shi-Malik spectral clustering~\cite{ShiMalik97} on a large sample from probability density $\rho$ is related to its continuous form: the \textbf{unweighted spectral cut} of $\rho^2$~\cite{TrillosVariational15}. 

\textbf{Our Contribution:} 

 \begin{mdframed}
\begin{figure}[H]
  \includegraphics[width=0.52\linewidth]{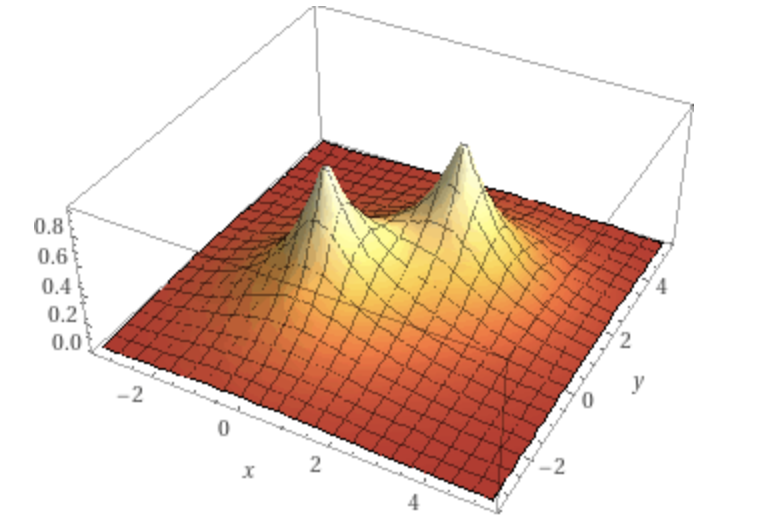}
  \caption{The mixture of Laplace distributions $e^{-\sqrt{x^2+y^2}} + e^{-\sqrt{(x-2)^2 + (y-2)^2}}$ scaled to have integral one. We prove continuous (normalized Shi-Malik) spectral clustering finds a cut with low surface area to volume ratio on
  mixtures of Laplace distributions. This gives intuition that Shi-Malik spectral clustering may find good clusters on data drawn from these densities.}
  \label{fig:laplace}
\end{figure}
\end{mdframed}

 \begin{mdframed}
\begin{figure}[H]
  \includegraphics[width=0.55\linewidth]{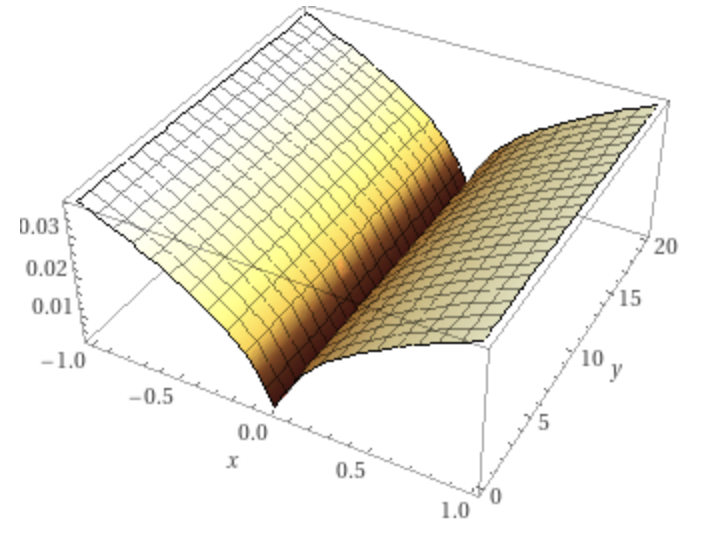}
  \caption{The `square-root trough' density $\sqrt{|x| + 1/2^{20}}$ on $[-1, 1] \times [0, 20]$, scaled to have integral one. We show the continuous (normalized Shi-Malik) spectral clustering finds the suboptimal cut $y=10$, when the cut $x=0$ is much smaller. This gives intuition that Shi-Malik spectral clustering may find bad clusters on data drawn from these densities.}
  \label{fig:sqrt-trough}
\end{figure}
\end{mdframed}

We prove that continuous (normalized Shi-Malik) spectral clustering works well on mixtures of Laplace distributions. This, combined with~\cite{TrillosVariational15}, gives theoretically informed intuition that normalized Shi-Malik spectral clustering works well on large data sets drawn from these densities -- but it stops short of formally proving that Shi-Malik spectral clustering on a sample drawn from a density converges to an good cut on these densities as the number of samples grows large. 
We also show that continuous (normalized Shi-Malik) spectral clustering works poorly on densities similar to the `square-root trough' density in Figure~\ref{fig:sqrt-trough}.

We prove the unweighted spectral cut on any function $f$ has a good surface area to volume ratio (also known as \textit{isoperimetry}), as long as the probability density does not vary by more than a constant factor on balls of radius $1$. Examples of such functions include mixtures of Laplace distributions. 
 Loosely speaking, a cut with good isoperimetry will cut through regions of low probability mass, while dividing the space into two regions of comparatively high probability mass. Some kind of condition on the function is necessary: in Appendix~\ref{app:counterexample}, we show that unweighted spectral cuts have very bad isoperimetry on the function $f(x, y) = x + \frac{1}{2^K}$ supported on $[-1, 1] \times [0, K]$, where $K$ is large.

Our core theorem proves that weighted spectral cuts have low weighted isoperimetry for all probability densities. This theorem implies the aforementioned results on unweighted spectral cuts. The weights are functions from each point $x$ in the domain to the positive reals, chosen based on how large of a ball can be drawn centered at the point $x$ so that the density at each point in the ball is within a constant factor of the density at any other point in the ball.  Our paper gives the first spectral cuts with any isoperimetry guarantee, weighted or otherwise, on all densities: even discontinuous ones.


Our core tool is a new Cheeger~\cite{Cheeger70} and Buser~\cite{Buser82} inequality on probability densities, which relates weighted isoperimetry of a probability density to its weighted fundamental eigenvalue. Weights are necessary: unweighted Buser inequalities are false for the function $\rho(x) = |x| + \epsilon$ on the interval $[-1, 1]$ (see Section 6 of~\cite{cmww20}). Our paper gives the first Buser-type inequality on all probability densities, including discontinuous ones. Past Buser inequalities on densities relied on log-concavity~\cite{KLS95, Lee18Survey}, Lipschitzness~\cite{cmww20}, or a bound on some analog of Ricci curvature for densities (one analog is based on the Hessian of the log density, see Theorem 1.4 in~\cite{Milman09}).  Buser inequalities were previously known for manifolds~\cite{Buser82, ledoux2004spectral, Milman09} and graphs~\cite{kkrt16buser, bhlmy15buser} with some notion of bounded curvature.  

The core difficulty in our work is proving the density Buser inequality: the density Cheeger inequality is straightforward from existing proofs of the Cheeger inequality in manifold~\cite{Cheeger70} and graph~\cite{AlonM84} settings.  
Using weights to ensure a correct Cheeger-Buser inequality is necessary even in the case of graphs~\cite{ChungBook97}\footnote{Cheeger inequalities using normalized Laplacians set the weights of each vertex to be its weighted degree.}, and should not be considered a major drawback of our approach. 
\subsection{Road Map}

\indent For the remainder of Section~\ref{sec:intro}, we provide definitions and give our main theorem: Theorem~\ref{thm:weighted-spectral-cut}, which states weighted spectral cuts have good weighted isoperimetry. We also give our core tool, Theorem~\ref{thm:cheeger-buser}, a weighted Cheeger-Buser inequality on densities. In Section~\ref{sec:cor}, we explain how our weighted spectral cut theorem proves our result on unweighted spectral cuts. We survey past work in Section~\ref{sec:past-work}. 

In Section~\ref{sec:1d}, we give a simple proof of the density Buser inequality in Theorem~\ref{thm:cheeger-buser}, in one dimension. In Section~\ref{sec:technique}, we overview proof techniques for the $d$-dimensional density Buser inequality, which requires considerably heavier computation than our one dimensional proof. We give future directions in Section~\ref{sec:conclusion}. 

Appendix~\ref{app:counterexample} shows that unweighted spectral cuts can have poor unweighted isoperimetry. Appendix~\ref{app:spectral} shows how to prove Theorem~\ref{thm:weighted-spectral-cut} from Theorem~\ref{thm:cheeger-buser}.  
 Appendix~\ref{app:cheeger} proves the Cheeger inequality in Theorem~\ref{thm:cheeger-buser}. Appendix~\ref{app:buser-start} proves the Buser inequality in Theorem~\ref{thm:cheeger-buser}. In Appendix~\ref{app:more-cors}, we present special cases of Theorem~\ref{thm:cheeger-buser}.

\subsection{Definitions} \label{sec:definitions}
\begin{defn}
Let $m, c : \R^d \rightarrow \R$. The \textbf{$(m,c)$-isoperimetry} of a set $A\subset \R^d$ is defined as 
\[ \Phi_{m, c}(A) := \frac{\int_{B(A)} c(x) dx}{\min \left(\int_{A} m(x)dx, \int_{\R^d - A} m(x)dx\right)} \]

where $B(A)$ is the boundary of $A$. 
\end{defn}
The \textbf{unweighted isoperimetry} of set $A$ with respect to $\rho$ is defined as $\Phi_{\rho, \rho}(A)$.\footnote{The unweighted isoperimetry of $A$ corresponds to the $\rho$-weighted surface area to volume ratio for the set $A$ or $\R^d \backslash A$, depending which has smaller $\rho$-weighted mass.}
\begin{defn}
Let $m: \R^d \rightarrow \R$ and $c: \R^d \rightarrow \R$ be integrable functions.  Define the \textbf{$(m, c)$ isoperimetric constant $\Phi_{m, c}$} as:

\[ \Phi_{m, c} := \inf_{A \subset \R^d} \Phi_{m, c}(A) \]
\end{defn}
The \textbf{unweighted isoperimetric constant} on probability density function $\rho$ is defined as $\Phi_{\rho, \rho}$

\begin{defn}
Let $m: \R^d \rightarrow \R$ and $s : \R^d \rightarrow \R$.

Define the \textbf{$(m, s)$ Rayleigh quotient of $u: \R \rightarrow \R$} to be: 
\[Q_{u, m, s} = \frac{\int_{\R^d} s(x) |\grad u(x)|^2 dx}{\int_{\R^d} m(x) |u(x)|^2 dx}
\]

Define the \textbf{$(m, s)$ fundamental eigenvalue $\lambda_2^{m, s}$} as:
\[\lambda_2^{m, s} :=\inf_{\int  m(x)u(x) dx = 0} Q_{u, m, s}
\]

Define the \textbf{$(m, s)$ fundamental eigenfunction $u_2^{m, s}$} be:
\[u_2^{m, s} :={\arg\min}_{\int  m(x)u(x) dx = 0} Q_{u, m, s}
\]
\end{defn}
This definition is a modeled off the definition of a fundamental eigenvalue of a graph Laplacian with edge and mass weights. The \textbf{unweighted fundamental eigenvalue} of probability density function $\rho$, is defined as $\lambda_2^{\rho, \rho}$, and the \textbf{unweighted fundamental eigenfunction} is $u_2^{\rho, \rho}$

\begin{defn}
Let $m, s, c: \R^d \rightarrow \R$. The \textbf{$(m,s,c)$ weighted spectral cut} is the set $S_{t'}$ defined as follows: first, find the $(m, s)$ fundamental eigenfunction $u_2^{m, s}$. Next, define $S_t := \{x \in \R^d | u_2^{m, s}(x) \geq t\}$. Finally, set 
\[ t' := \arg \min_t \Phi_{m,c}(S_t) \]
\end{defn}
Note that the weighted spectral cut is defined as a subset of $\R^d$, similar to how a cut on a graph can be defined as a subset of the graph's vertices.
\begin{defn} The \textbf{unweighted spectral cut} of a probability density $\rho$ is defined as the $(\rho, \rho, \rho)$ weighted spectral cut. 
\end{defn}
\subsection{Main Theorems: Weighted Spectral Cuts and Density Cheeger-Buser}\label{sec:theorems}
The core theorem of our paper is Theorem~\ref{thm:weighted-spectral-cut}. Our core tool is Theorem~\ref{thm:cheeger-buser}. Proving Theorem~\ref{thm:weighted-spectral-cut} using our core tool is straightforward (Appendix~\ref{app:spectral}). The core technical contribution is the Buser inequality in Theorem~\ref{thm:cheeger-buser}.
 \begin{thm}\label{thm:weighted-spectral-cut} (Weighted Spectral Cut) 
 Let $\rho:\R^d \rightarrow \R^{\geq 0}$, let $C$ be a constant, and let $R: \R^d \rightarrow \R^{\geq 0}$ be an $L$-Lipschitz function such that $\frac{\rho(y)}{\rho(z)} \leq C$ for all $y, z$ in a ball of radius $R(x)$ centered at $x$.  Let $A$ be the $(\rho, R^2\rho, R\rho)$ weighted spectral cut of $\rho$. Then:
\[ \Phi_{\rho, R\rho} \leq \Phi_{\rho, R\rho}(A) \leq O_{C}\left(\sqrt{d(L+1)\Phi_{\rho, R\rho}} + \sqrt{d} \cdot \Phi_{\rho, R \rho}\right) \]
\end{thm}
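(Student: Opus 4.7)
The lower bound $\Phi_{\rho,R\rho}\leq \Phi_{\rho,R\rho}(A)$ is immediate from the definition of $\Phi_{\rho,R\rho}$ as an infimum over all measurable sets. The plan for the upper bound is the classical Cheeger level-set rounding in the weighted setting: I will produce from the $(\rho,R^2\rho)$-fundamental eigenfunction a sweep cut whose $(\rho,R\rho)$-isoperimetry is at most $2\sqrt{\lambda_2^{\rho,R^2\rho}}$, and then plug in the Buser half of Theorem~\ref{thm:cheeger-buser} to convert the $\lambda_2^{\rho,R^2\rho}$ bound into a bound in terms of $\Phi_{\rho,R\rho}$, $d$, and $L$. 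The weight triple $(\rho, R^2\rho, R\rho)$ is exactly what makes the Cauchy--Schwarz step below factor cleanly, since $R\rho=\sqrt{\rho\cdot R^2\rho}$.

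In more detail: let $u:=u_2^{\rho,R^2\rho}$, so $\int \rho u=0$ and $Q_{u,\rho,R^2\rho}=\lambda_2^{\rho,R^2\rho}$. Let $m$ be a $\rho$-weighted median of $u$, and take $v$ to be whichever of $(u-m)^+, (u-m)^-$ has smaller $(\rho,R^2\rho)$-Rayleigh quotient. Using $|\grad(u-m)^+|^2+|\grad(u-m)^-|^2=|\grad u|^2$ almost everywhere and $\int\rho((u-m)^+)^2+\int\rho((u-m)^-)^2=\int\rho u^2+m^2\!\int\rho\geq \int\rho u^2$, the combined-support ratio is at most $\lambda_2^{\rho,R^2\rho}$, so one of the two Rayleigh quotients is too; fix the winner. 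By choice of the median, every super-level set $S_r:=\{v\geq r\}$ has $\rho$-mass at most half the total, so $\min(\int_{S_r}\rho,\int_{S_r^c}\rho)=\int_{S_r}\rho$. Applying the coarea formula to $v^2$ and then Cauchy--Schwarz,
\[
\int R\rho\,|\grad v^2|\,dx \;=\; 2\int R\rho\,v|\grad v|\,dx \;\leq\; 2\sqrt{\,\int R^2\rho\,|\grad v|^2\,dx\;\cdot\;\int \rho v^2\,dx\,},
\]
and comparing with the layer-cake identity $\int_0^\infty\!\int_{\{v^2\geq t\}}\rho\,dx\,dt=\int \rho v^2$, a standard averaging argument over $t$ produces a level $r^\ast$ with $\Phi_{\rho,R\rho}(S_{r^\ast})\leq 2\sqrt{Q_{v,\rho,R^2\rho}}\leq 2\sqrt{\lambda_2^{\rho,R^2\rho}}$. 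Since $S_{r^\ast}$ (or, in the $(u-m)^-$ branch, its complement, which has the same $(\rho,R\rho)$-isoperimetry) is a super-level set of $u_2^{\rho,R^2\rho}$, and these sets are exactly the candidates the weighted spectral cut $A$ minimizes over, $\Phi_{\rho,R\rho}(A)\leq 2\sqrt{\lambda_2^{\rho,R^2\rho}}$.

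It remains to feed in the Buser direction of Theorem~\ref{thm:cheeger-buser}; reading off the shape of the target I expect it to provide $\lambda_2^{\rho,R^2\rho}\leq O_C\bigl(d(L+1)\,\Phi_{\rho,R\rho}+d\,\Phi_{\rho,R\rho}^2\bigr)$, after which a square root and $\sqrt{a+b}\leq\sqrt a+\sqrt b$ give the advertised $O_C(\sqrt{d(L+1)\Phi_{\rho,R\rho}}+\sqrt d\,\Phi_{\rho,R\rho})$ bound. All the genuine difficulty lives in Theorem~\ref{thm:cheeger-buser}, specifically in its Buser half for arbitrary (possibly discontinuous) $\rho$; the reduction here is routine, with the only per-weight points to check being (a) that the identity $R\rho=\sqrt{\rho\cdot R^2\rho}$ is what makes Cauchy--Schwarz tight and hence is the \emph{reason} the spectral-cut weight triple was chosen this way, and (b) that the weighted coarea identity holds under only integrability of $\rho$, which it does in the BV sense.
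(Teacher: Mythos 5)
Your proposal is correct and follows essentially the same route as the paper: the paper's own proof of Theorem~\ref{thm:weighted-spectral-cut} is exactly the two-step reduction you describe, citing the sweep-cut Cheeger bound (Lemma~\ref{lem:cheeger-strong}, whose validity rests on the fact that $R\rho=\sqrt{\rho\cdot R^2\rho}$ makes the Cauchy--Schwarz factor equal to one) together with the Buser half of Theorem~\ref{thm:cheeger-buser}, then taking square roots. The only difference is that you spell out the median/level-set rounding proof of that sweep-cut lemma, which the paper merely asserts as following from the same Cauchy--Schwarz argument as its Theorem in Appendix~\ref{app:cheeger}.
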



This theorem shows that weighted spectral cuts have low weighted isoperimetry. This is the core theorem in our paper. We prove Theorem~\ref{thm:weighted-spectral-cut} using a new, weighted Cheeger/ Buser inequality:

\begin{thm} \label{thm:cheeger-buser}(Weighted Cheeger/Buser Inequality for Probability Densities)

Let $\rho : \R^d \rightarrow \R^{\geq 0}$ with $\int_{x \in \R^d} \rho(x) < \infty$. For any $R : \R^d \rightarrow \R$, 

\[ \frac{\Phi_{\rho, R\rho}^2}{4} \leq \lambda_{2}^{\rho, R^2\rho} \]

Suppose furthermore that $R$ has the property that $\frac{1}{2}\rho(x) \leq \rho(y) \leq 2\rho(x)$ for any $y$ in a ball of radius $R(x)$ around $x$, for some $L$-$Lipschitz$ function $R$. Then: \[ \lambda_2^{\rho, R^2\rho} \leq d \cdot O\left((L+1)  \Phi_{\rho, R\rho} + \Phi_{\rho, R\rho}^2\right) \] 
\end{thm}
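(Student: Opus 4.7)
This is the easier half and follows the classical Cheeger argument, adapted to weighted densities. Let $u = u_2^{\rho, R^2\rho}$, let $t$ be a $\rho$-weighted median of $u$, and set $v = u-t$. Since $u$ is $\rho$-orthogonal to constants, the positive part $v_+$ satisfies $Q_{v_+, \rho, R^2\rho} \leq \lambda_2^{\rho, R^2\rho}$, and by choice of median $\int_{\{v_+>0\}} \rho \leq \tfrac{1}{2}\int \rho$. The co-area formula applied to $v_+^2$, combined with the definition of $\Phi_{\rho, R\rho}$ on each superlevel set $\{v_+^2 \geq s\}$, gives
\[
\int R\rho\,|\grad(v_+^2)|\,dx \;=\; \int_0^\infty \!\!\int_{\{v_+^2 = s\}} R\rho\,dH^{d-1}\,ds \;\geq\; \Phi_{\rho,R\rho}\int \rho v_+^2\,dx.
\]
Writing $|\grad(v_+^2)| = 2v_+|\grad v_+|$ and applying Cauchy--Schwarz bounds the left side by $2\sqrt{\int R^2\rho|\grad v_+|^2}\sqrt{\int \rho v_+^2}$. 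Rearranging and squaring yields $\Phi_{\rho,R\rho}^2/4 \leq \lambda_2^{\rho, R^2\rho}$.

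\textbf{Plan for the Buser direction.} I would build an explicit mean-zero test function $u$ from a near-minimizer $A$ of $\Phi_{\rho, R\rho}$ and estimate its $(\rho, R^2\rho)$ Rayleigh quotient. The function $u$ should look like a signed indicator of $A$ smoothed at the \emph{local} scale $R(x)$. Concretely: let $\delta(x)$ be the signed distance from $x$ to $\partial A$ in the Riemannian-like metric $g = dx^2/R(x)^2$ (positive in $A$, negative outside), set $u(x) = \psi(\delta(x))$ for a $1$-Lipschitz cutoff $\psi$ that equals $\pm 1$ outside $[-1,1]$ and is linear inside, and shift $u$ by a constant so that $\int \rho u\,dx = 0$. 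An eikonal computation for $g$ gives $|\grad \delta| \leq 1/R(x)$ away from the cut locus, hence $R^2\rho|\grad u|^2 \leq \rho$ on the transition set $T=\{|\delta|\leq 1\}$ and vanishes off $T$. The denominator estimate $\int \rho u^2 \gtrsim \int \rho - O\!\left(\int_T \rho\right)$ is immediate after the constant shift, since $|u|\geq \tfrac{1}{2}$ outside a thin middle band of $T$. These two ingredients reduce the upper bound to a purely geometric estimate on the $g$-tube $T$:
\[
\int_T \rho\,dx \;\leq\; O\!\bigl(d(L+1)\bigr)\int_{\partial A} R\rho\,dH^{d-1} \;\leq\; O\!\bigl(d(L+1)\bigr)\,\Phi_{\rho,R\rho}\,\min\!\Bigl(\textstyle\int_A\rho,\int_{A^c}\rho\Bigr).
\]
When $\int_T \rho \ll \int\rho$ this yields $\lambda_2^{\rho,R^2\rho} \leq O(d(L+1)\Phi_{\rho,R\rho})$; when the tube consumes a constant fraction of $\int\rho$ the mean-shift correction is no longer negligible, and tracking it contributes the additional $d\,\Phi_{\rho,R\rho}^2$ term.

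\textbf{Main obstacle.} The crux is the tube volume estimate displayed above, which is what replaces the Ricci-curvature / heat-semigroup arguments of the classical manifold Buser inequality. Because $\rho$ may be discontinuous and $\partial A$ carries no a priori regularity, the only control on the tube's geometry comes from the $R$-adaptive metric and the two hypotheses: the constant-ratio condition on $\rho$ is what lets a covering argument transfer $R\rho$ surface measure on $\partial A$ into $\rho$ volume in the tube, while the $L$-Lipschitz hypothesis on $R$ guarantees that the Euclidean thickness of the $g$-tube is $\asymp R(y)$ at each $y\in\partial A$ (rather than collapsing or exploding). The $d$-factor then comes from comparing Euclidean surface to volume on a radius-$R$ ball, and cannot be avoided by this method. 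In one dimension the covering argument collapses to a single interval computation, which is why Section~\ref{sec:1d} can handle the 1D case by an elementary ansatz before the full $d$-dimensional machinery is developed.
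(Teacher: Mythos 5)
Your Cheeger half is fine and is essentially the paper's argument (Appendix~\ref{app:cheeger}): set $g=v_+^2$, use the coarea formula and Cauchy--Schwarz with the weights $m=\rho$, $c=R\rho$, $s=R^2\rho$; nothing more is needed there.

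The Buser half, however, has a genuine gap, and it sits exactly where you flag the ``main obstacle.'' Your reduction to the tube estimate $\int_T\rho\,dx\leq O\bigl(d(L+1)\bigr)\int_{\partial A}R\rho\,dH^{d-1}$ is false for general near-minimizers $A$, because the hypotheses give no lower bound on the surface density of $\partial A$ at scale $R$. Concretely, take any near-optimal cut and attach to $A$ a ball of radius $\epsilon\ll R(x_0)$ centered at a point $x_0$ far from the main interface: the weighted perimeter increases only by about $\rho(x_0)R(x_0)\epsilon^{d-1}$ and the volumes change negligibly, so $A$ is still a near-minimizer of $\Phi_{\rho,R\rho}$; but the unit $g$-tube around the new component is a Euclidean ball of radius $\approx R(x_0)$, whose $\rho$-mass is $\approx\rho(x_0)R(x_0)^d$ independently of $\epsilon$. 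The ratio of added tube mass to added perimeter blows up as $\epsilon\to 0$, and correspondingly your test function $\psi(\delta(x))$ performs a full $-1$-to-$+1$ transition across that spurious tube, so its Rayleigh numerator picks up $\approx\rho(x_0)R(x_0)^d$, which need not be $O\bigl(d(L+1)\Phi_{\rho,R\rho}\min(\int_A\rho,\int_{A^c}\rho)\bigr)$. The constant-ratio condition on $\rho$ and the Lipschitz bound on $R$ control the weights at scale $R$, but they cannot make a covering argument transfer $R\rho$-surface into $\rho$-volume when a ball of radius $R$ meets only an $\epsilon$-sized sliver of $\partial A$. Rescuing the distance-function construction would require an existence/regularity or ``cleaning'' step for near-minimizing cuts with merely measurable, bounded-ratio weights, which is a substantial missing ingredient. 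The paper avoids the issue by a different smoothing: it mollifies the indicator over balls of radius $\theta R(x)$ and integrates by parts against $\nabla\phi$, so that $|\nabla u_\theta(x)|$ is bounded by the local average of the surface measure (Claims~\ref{claim:theta-comm}--\ref{claim:num-one}, Lemma~\ref{lem:num-inf}) rather than by $1/R(x)$ uniformly on the tube; tiny boundary components then contribute to the numerator only in proportion to their own weighted surface area, and no regularity of $\partial A$ is ever needed.
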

We call the first inequality the \textbf{weighted density Cheeger inequality}, and the second inequality the \textbf{weighted density Buser inequality}.  Proving Theorem~\ref{thm:weighted-spectral-cut} from Theorem~\ref{thm:cheeger-buser} is standard (see Appendix~\ref{app:spectral}).  The proof for weighted density Cheeger is straightforward (see Appendix~\ref{app:cheeger}). The core difficulty is proving the Buser inequality of Theorem~\ref{thm:cheeger-buser}, which is in Appendix~\ref{app:buser-start}. A simplified proof of our Buser inequality in one dimension is given in Section~\ref{sec:1d}.

For Theorems~\ref{thm:weighted-spectral-cut} and~\ref{thm:cheeger-buser}, the restriction that $R$ is $L$-Lipschitz is not as restrictive as it seems: for any density $\rho$, setting $R(x)$ to be the \textit{maximum} possible value so that $\rho$ doesn't vary by more than a constant factor on balls of radius $R(x)$ centered at $x$, results in $R$ being $1$-Lipschitz. Thus, we can state corollaries that do not depend on $L$:

 \begin{cor}\label{cor:weighted-spectral-cut} (Weighted Spectral Cut, no Lipschitz Condition) 
 Let $\rho:\R^d \rightarrow \R^{\geq 0}$, and let $R: \R^d \rightarrow \R$ be defined such that for all $x$, $R(x)$ is the maximum value such that $\frac{\rho(y)}{\rho(z)} \leq 2$ for all $y, z$ in a ball of radius $R(x)$ centered at $x$.  Let $A$ be the $(\rho, R^2\rho, R\rho)$ weighted spectral cut of $\rho$. Then:
\[ \Phi_{\rho, R\rho} \leq \Phi_{\rho, R\rho}(A) \leq O\left(\sqrt{d\Phi_{\rho, R\rho}} + \sqrt{d} \cdot \Phi_{\rho, R \rho}\right) \]
\end{cor}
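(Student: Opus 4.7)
The plan is to derive Corollary~\ref{cor:weighted-spectral-cut} directly from Theorem~\ref{thm:weighted-spectral-cut} by showing that the specific choice of $R$ defined in the corollary satisfies the hypotheses of that theorem with $L = 1$ and $C = 2$. Since these constants are absolute, the $O_C$ factor and the $(L+1)$ term in the conclusion of Theorem~\ref{thm:weighted-spectral-cut} can be absorbed into a plain $O(\cdot)$, yielding the stated bound. The lower bound $\Phi_{\rho, R\rho} \leq \Phi_{\rho, R\rho}(A)$ is immediate from the definition of the isoperimetric constant as an infimum. The only nontrivial step is to verify that the maximal $R$ defined in the corollary is $1$-Lipschitz.

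To establish this, I would fix $x_1, x_2 \in \R^d$ and write $\delta = |x_1 - x_2|$; without loss of generality assume $R(x_1) \geq R(x_2)$. The key observation is that, by the triangle inequality, every point in the ball $B(x_2, R(x_1) - \delta)$ lies within distance $R(x_1)$ of $x_1$, hence inside $B(x_1, R(x_1))$, on which $\rho$ varies by at most a factor of $2$ by the definition of $R(x_1)$. Consequently $\rho$ also varies by at most a factor of $2$ on $B(x_2, R(x_1) - \delta)$, and by the maximality in the definition of $R(x_2)$ we conclude $R(x_2) \geq R(x_1) - \delta$, i.e., $|R(x_1) - R(x_2)| \leq \delta$. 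If the supremum defining $R$ is not attained on some ball (for instance, because $\rho$ is discontinuous), I would repeat the argument with $R(x_1) - \epsilon$ in place of $R(x_1)$ and let $\epsilon \to 0$.

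The proof is essentially routine once Theorem~\ref{thm:weighted-spectral-cut} is in hand; the only step of substance is the Lipschitz verification above, which relies on the stability of the constant-factor oscillation condition under shifting the center of a ball. The main bookkeeping point is to confirm that the factor-$2$ tolerance in the definition of $R$ matches the hypothesis $\tfrac{1}{2}\rho(x) \leq \rho(y) \leq 2\rho(x)$ corresponding to $C = 2$ in Theorem~\ref{thm:weighted-spectral-cut}, so that no hidden constant needs to be tracked into the final estimate. I do not anticipate a genuine obstacle here: the whole corollary should follow in a few lines once the Lipschitz property is stated.
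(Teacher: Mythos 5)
Your proposal is correct and follows essentially the same route as the paper: both reduce the corollary to Theorem~\ref{thm:weighted-spectral-cut} with $C=2$ and verify that the maximal $R$ is $1$-Lipschitz via a triangle-inequality/maximality argument (the paper phrases it as a ball of radius $R(x)$ not being strictly contained in a ball of radius $R(y)$, which is the same observation you make). Your write-up is just a more detailed version, including the $\epsilon$-limiting step when the supremum is not attained.
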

\begin{cor} \label{cor:cheeger-buser}(Weighted Buser, no Lipschitz Condition) Let $\rho : \R^d \rightarrow \R^{\geq 0}$ with $\int_{x \in \R^d} \rho(x) < \infty$. Let $R: \R^d \rightarrow \R$ be defined such that for all $x$, $R(x)$ is the maximum value such that $\frac{\rho(y)}{\rho(z)} \leq 2$ for all $y, z$ in a ball of radius $R(x)$ centered at $x$. Then  \[ \lambda_2^{\rho, R^2\rho} \leq d \cdot O(\Phi_{\rho, R\rho} + \Phi_{\rho, R\rho}^2) \] 
\end{cor}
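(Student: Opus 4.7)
The plan is to derive Corollary~\ref{cor:cheeger-buser} as a direct black-box consequence of the Buser half of Theorem~\ref{thm:cheeger-buser}, by verifying that the radius function $R$ defined in the corollary automatically satisfies the Lipschitz hypothesis of that theorem with Lipschitz constant $L = 1$. Once this is established, one applies Theorem~\ref{thm:cheeger-buser} with $L = 1$ and the constants $1+L = 2$ get absorbed into the big-$O$.

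\textbf{Key step: $R$ is $1$-Lipschitz.} Fix $x, y \in \R^d$ and set $d_{xy} = \|x - y\|$. I would show $R(x) \geq R(y) - d_{xy}$; the symmetric statement then yields $|R(x) - R(y)| \leq d_{xy}$. The argument is a triangle-inequality reasoning: for any $r < R(y)$, the ball $B(y,r)$ satisfies $\rho(u)/\rho(v) \leq 2$ for all $u,v \in B(y,r)$. Since $B(x, r - d_{xy}) \subseteq B(y, r)$ by the triangle inequality, the same ratio bound holds on $B(x, r - d_{xy})$. By the maximality (or supremality) in the definition of $R(x)$, this forces $R(x) \geq r - d_{xy}$, and letting $r \uparrow R(y)$ gives the claim.

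\textbf{Matching hypotheses and applying the theorem.} The defining condition $\rho(u)/\rho(v) \leq 2$ for $u, v \in B(x, R(x))$, specialized to $u = x$, gives $\tfrac{1}{2}\rho(x) \leq \rho(y) \leq 2\rho(x)$ for all $y \in B(x, R(x))$, which is exactly the pointwise hypothesis required by Theorem~\ref{thm:cheeger-buser}. Combined with the $1$-Lipschitz property established above, Theorem~\ref{thm:cheeger-buser} applies with $L = 1$ and yields
\[ \lambda_2^{\rho, R^2\rho} \;\leq\; d \cdot O\!\left( 2\,\Phi_{\rho, R\rho} + \Phi_{\rho, R\rho}^2 \right) \;=\; d \cdot O\!\left( \Phi_{\rho, R\rho} + \Phi_{\rho, R\rho}^2 \right), \]
which is precisely the conclusion of Corollary~\ref{cor:cheeger-buser}.

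\textbf{Expected obstacle.} There is essentially no substantive obstacle here; all the genuine work lives in Theorem~\ref{thm:cheeger-buser} itself. The only minor delicacy is the well-definedness of $R(x)$ when the set of admissible radii fails to attain a maximum (which can happen for discontinuous $\rho$). I would handle this uniformly by interpreting $R(x)$ as the supremum $\sup\{r : \rho(u)/\rho(v) \leq 2 \text{ on } B(x,r)\}$; the Lipschitz argument above was phrased to work with this convention by taking limits $r \uparrow R(y)$, so no additional care is required and the same bound follows.
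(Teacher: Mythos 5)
Your proposal is correct and follows essentially the same route as the paper: the paper also notes that, by maximality, a ball $B(x,R(x))$ cannot lie strictly inside $B(y,R(y))$, concludes that $R$ is $1$-Lipschitz, and then invokes Theorem~\ref{thm:cheeger-buser} with $L=1$. Your version simply spells out the triangle-inequality/supremum details that the paper leaves implicit.
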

\begin{proof} (of Corollaries~\ref{cor:weighted-spectral-cut} and~\ref{cor:cheeger-buser}) Observe that any ball of radius $R(x)$ centered at $x$ cannot be contained strictly in the interior of a ball of radius $R(y)$ centered at $y$, by definition of $R$. Therefore, $R$ is $1$-Lipschitz, and our corollaries follow from Theorems~\ref{thm:weighted-spectral-cut} and~\ref{thm:cheeger-buser}.
\end{proof}

\subsubsection{Theorems for Continuous Spectral Clustering} \label{sec:cor}
We state and prove our main result on unweighted spectral cuts of $\rho$ (equivalently, continuous Shi-Malik spectral clustering on $\sqrt{\rho}$). 

 \begin{cor}\label{thm:unweighted-spectral-cut} (Continuous Spectral Clustering / Unweighted Spectral Cut) 
 Let $\rho:\R^d \rightarrow \R^{\geq 0}$, which does not vary by more than a constant multiple $C$ on all balls of radius $1$. Let $A$ be the unweighted spectral cut of $\rho$. Then:
\[ \Phi_{\rho, \rho} \leq \Phi_{\rho, \rho}(A) \leq O_{C,d}(\sqrt{\Phi_{\rho, \rho}} + \Phi_{\rho,  \rho}) \]
\end{cor}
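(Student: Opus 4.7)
The plan is to derive Corollary~\ref{thm:unweighted-spectral-cut} as a direct specialization of Theorem~\ref{thm:weighted-spectral-cut}, by exhibiting a specific choice of the weight function $R$ that turns the weighted spectral cut into the unweighted one. The natural candidate is the constant function $R(x) \equiv 1$: then $R$ is trivially $0$-Lipschitz, and the hypothesis that $\rho$ does not vary by more than a constant factor $C$ on balls of radius $1$ is exactly the hypothesis $\rho(y)/\rho(z) \leq C$ for all $y,z$ in the ball of radius $R(x)$ centered at $x$ that Theorem~\ref{thm:weighted-spectral-cut} requires.

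With $R \equiv 1$, the weighted triple $(\rho, R^2\rho, R\rho)$ collapses to $(\rho, \rho, \rho)$, so the $(\rho, R^2\rho, R\rho)$-weighted spectral cut is by definition the unweighted spectral cut of $\rho$. The set $A$ in the statement of Corollary~\ref{thm:unweighted-spectral-cut} thus coincides with the set $A$ in Theorem~\ref{thm:weighted-spectral-cut}, and the two isoperimetries being compared ($\Phi_{\rho,\rho}$ and $\Phi_{\rho, R\rho}$) are identical.

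Plugging $R \equiv 1$ and $L = 0$ into the conclusion of Theorem~\ref{thm:weighted-spectral-cut} gives
\[ \Phi_{\rho,\rho} \leq \Phi_{\rho,\rho}(A) \leq O_C\!\left(\sqrt{d(0+1)\,\Phi_{\rho,\rho}} + \sqrt{d}\,\Phi_{\rho,\rho}\right), \]
and absorbing the $\sqrt{d}$ factors into the implicit constant (since $d$ is fixed) yields the claimed bound $\Phi_{\rho,\rho}(A) \leq O_{C,d}(\sqrt{\Phi_{\rho,\rho}} + \Phi_{\rho,\rho})$. The lower bound $\Phi_{\rho,\rho} \leq \Phi_{\rho,\rho}(A)$ is immediate from the definition of $\Phi_{\rho,\rho}$ as an infimum.

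In short, there is no real obstacle here: once Theorem~\ref{thm:weighted-spectral-cut} is in hand, the only task is to check that the choice $R \equiv 1$ is admissible, which is immediate from the hypothesis of the corollary. The substantive work (the density Cheeger--Buser inequality and the translation from eigenvalue bounds to isoperimetric bounds via the sweep cut) has already been carried out in Theorem~\ref{thm:cheeger-buser} and Theorem~\ref{thm:weighted-spectral-cut}.
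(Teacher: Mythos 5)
Your proposal is correct and is exactly the paper's argument: the paper also obtains this corollary by specializing Theorem~\ref{thm:weighted-spectral-cut} to a constant weight function $R$ (so that $(\rho, R^2\rho, R\rho)=(\rho,\rho,\rho)$ and the Lipschitz constant plays no role), absorbing the dimension factors into the $O_{C,d}$ constant. Your write-up just spells out the details that the paper leaves implicit.
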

\begin{proof} This follows directly from Theorem~\ref{thm:weighted-spectral-cut} when $R$ is constant. \end{proof}
If $\rho$ has the property described in the corollary, so does $\rho^2$.  Thus, the $(\rho^2, \rho^2, \rho^2)$ spectral cut has good $(\rho^2, \rho^2)$ isoperimetry when $\rho$ doesn't vary by more than a constant factor on balls of constant radius. Since $(\rho^2, \rho^2, \rho^2)$ spectral cuts are a continuous form of spectral clustering on $\rho$~\cite{TrillosVariational15} (see past work in Section~\ref{sec:past-work-spectral-clustering}), this corollary informs us about when continuous spectral clustering on $\rho$ has a good $(\rho^2, \rho^2)$ isoperimetry. We now state our unweighted Cheeger-Buser inequality on probability densities.
\begin{cor} \label{cor:log-lip} (Unweighted Cheeger-Buser)\label{cor:unweighted} If $\rho : \R^d \rightarrow \R$ does not vary by more than a constant multiple $C$ on balls of constant radius $K$, then
\[
\Omega(\Phi_{\rho, \rho}^2) \leq \lambda_2^{\rho, \rho} \leq O_{C,K}(d\Phi_{\rho, \rho}  + \Phi_{\rho, \rho}^2) \]
\end{cor}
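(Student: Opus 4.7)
My plan is to obtain this corollary as a direct specialization of Theorem~\ref{thm:cheeger-buser} with the weight function $R$ chosen to be the constant function $R(x) \equiv K$. First I would verify the hypotheses: a constant function is trivially $0$-Lipschitz, so we may take $L=0$. The density hypothesis, that $\rho$ does not vary by more than a factor of $C$ on balls of radius $K$, is exactly the ratio condition required by Theorem~\ref{thm:cheeger-buser}, modulo the fact that the theorem is stated for the specific constant $2$ rather than a general $C$. The constant $2$ is only a normalization; rerunning the proof of Theorem~\ref{thm:cheeger-buser} with $C$ in place of $2$ introduces only $C$-dependent multiplicative factors, which can be absorbed into the $O_C(\cdot)$ in the final bound.

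The key step is then a scaling computation that reduces the weighted quantities in Theorem~\ref{thm:cheeger-buser} to their unweighted counterparts. Because $R$ is constant, we have
\[
\lambda_2^{\rho, R^2 \rho} \;=\; K^2 \, \lambda_2^{\rho, \rho}, \qquad \Phi_{\rho, R\rho} \;=\; K \, \Phi_{\rho, \rho},
\]
since $K$ and $K^2$ factor out of the numerators in the Rayleigh quotient and the isoperimetric ratio respectively, while the denominators are unchanged.

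Substituting into the Cheeger side of Theorem~\ref{thm:cheeger-buser} gives
\[
\tfrac{1}{4} K^2 \Phi_{\rho, \rho}^2 \;\leq\; K^2 \lambda_2^{\rho, \rho},
\]
and cancelling $K^2$ yields the lower bound $\Omega(\Phi_{\rho, \rho}^2) \leq \lambda_2^{\rho, \rho}$. Substituting into the Buser side (with $L=0$) gives
\[
K^2 \lambda_2^{\rho, \rho} \;\leq\; d \cdot O_C\bigl( K \Phi_{\rho, \rho} + K^2 \Phi_{\rho, \rho}^2 \bigr),
\]
and dividing by $K^2$ yields $\lambda_2^{\rho, \rho} \leq O_{C,K}\bigl( d\,\Phi_{\rho, \rho} + d\,\Phi_{\rho, \rho}^2 \bigr)$, which matches the claimed upper bound (the extra factor of $d$ on the quadratic term is harmlessly absorbed into the big-$O$).

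There is essentially no obstacle here beyond bookkeeping: all the heavy lifting lives in Theorem~\ref{thm:cheeger-buser}. The only conceptual point worth flagging is the passage from the theorem's normalization constant $2$ to a general constant $C$; I would address this either by re-reading the proof of Theorem~\ref{thm:cheeger-buser} and noting that $2$ can be replaced by any $C>1$ with the constants tracked through, or, if one prefers to use the theorem as a black box, by covering each ball of radius $K$ by a bounded (in $C$) number of balls of smaller radius on which $\rho$ varies by at most a factor of $2$, and setting $R$ to that smaller radius instead of $K$ — the final scaling computation is identical.
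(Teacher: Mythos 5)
Your proposal is correct and follows essentially the same route as the paper, which simply specializes Theorem~\ref{thm:cheeger-buser} to constant $R$; your scaling bookkeeping ($\lambda_2^{\rho,K^2\rho}=K^2\lambda_2^{\rho,\rho}$, $\Phi_{\rho,K\rho}=K\Phi_{\rho,\rho}$, $L=0$) is exactly what that one-line derivation leaves implicit. Your explicit handling of the factor-$2$ versus general-$C$ normalization is a reasonable extra precaution that the paper glosses over.
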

\begin{proof} This follows directly from Theorem~\ref{thm:cheeger-buser} when $R$ is constant.\end{proof}

\subsection{Past Work and Contributions}\label{sec:past-work}

\subsubsection{Spectral Clustering on Large Datasets and Continuous Clustering}\label{sec:past-work-spectral-clustering}

Shi-Malik normalized spectral clustering~\cite{ShiMalik97} builds a geometric graph with data as vertices (using either the Gaussian kernel as edge weights, or by connecting the graph locally), and then performing a spectral sweep cut method based on Cheeger cut methods in spectral graph theory~\cite{ShiMalik97, NgSpectral01, von2007tutorial, AlonM84, ChungBook97}. Researchers have studied the behavior of spectral clustering on large data sets drawn i.i.d from a probability density~\cite{von2008consistency, TrillosVariational15, TrillosConsis16}, and have shown convergence properties of spectral clustering as the data set grows large.

Previous work established robust connections between widely used two-way spectral clustering algorithms (like the one in~\cite{ShiMalik97}), and spectral cuts applied to probability density~\cite{von2008consistency, TrillosVariational15, TrillosConsis16}. Loosely speaking,~\cite{TrillosVariational15} shows that normalized Shi-Malik spectral clustering\footnote{Normalized spectral clustering uses a normalized Laplacian of an intermediate geometric graph, while unnormalized spectral clustering uses an unnormalized Laplacian. See~\cite{von2007tutorial} for more information.} on datasets drawn i.i.d from a probability density, behaves similarly to $(\rho^2, \rho^2, \rho^2)$ weighted spectral cuts on the density when the number of samples grows large. They also show that unnormalized spectral clustering in the limit behaves similarly to $(\rho, \rho^2, \rho^2)$ spectral cuts~\footnote{These proofs currently rely on some initial assumptions about $\rho$ - for more details, refer to~\cite{TrillosVariational15}}. Corollary~\ref{cor:log-lip} show notions of isoperimetry guaranteed by normalized spectral cut methods. Our work provides theoretical justifications for normalized Shi-Malik spectral clustering on large datasets, for certain classes of densities.


 %
 
 Researchers have previously suggested that Shi-Malik clustering works due to the graph Cheeger inequality~\cite{belkin2004semisup, ms22spectral}.  As shown in the introduction, this argument breaks down for large datasets: the quality of approximation given by the graph Cheeger and Buser inequalities gets arbitrarily bad as the number of points gets large~\cite{t21}, which is undesirable in the large data setting.   On some densities, spectral cuts can give cuts with undesirable isoperimetry that defy machine learning intuition. See Appendix~\ref{app:counterexample} for more details. Other work on spectral clustering has shown its limitations on certain data sets: however, this past work focuses more on the limits of the ``normalized cut" approach that spectral clustering is based on~\cite{ng06spectrallimits}, whereas our work accepts that the normalized cut is a reasonable approach. 

\subsubsection{Cheeger and Buser Inequalities}
Cheeger and Buser inequalities have been foundational to spectral graph theory~\cite{ChungBook97, AlonM84}, manifold theory~\cite{Buser82, ledoux2004spectral, Milman09}, and probability densities~\cite{Lee18Survey, KLS95}. Our work significantly expands the scope of Buser inequalities for densities.

\textbf{For Probability Densities:} Past researchers have considered Buser-like inequalities on restricted probability densities, using unweighted isoperimetry and unweighted fundamental eigenvalues.    Log concave densities allow for a tighter bound of $\Phi^2 = \Theta_d(\lambda_2)$~\cite{KLS95, Lee18Survey}, where $d$ is the dimension of the density's domain. Log concave densities are relevant to volume estimation and sampling algorithms for convex bodies~\cite{Dyer89, Lee18Survey}, since the indicator function for any convex body is log-concave~\cite{Lee18Survey}. This line of work is related to the KLS conjecture -- a fundamental conjecture in convex geometry~\cite{KLS95, Lee18, y20, jlv22}. 
While the bounds for log concave densities are tighter than the bounds we obtain, these densities are very restricted. For example, log concave densities cannot have two local maxima. While this is not an issue for convex body sampling, past results on this line are less applicable to spectral clustering.

The work of~\cite{cmww20} shows a weighted Buser inequality relating $(\rho, \rho^2)$ isoperimetry with the $(\rho, \rho^3)$ fundamental eigenvalue for Lipschitz densities. This work suffers from a core drawback that both quantities are $0$ for densities like the Gaussian or Laplace distribution.  The spectral cut algorithm from their work, when applied to \textit{any} mixture of two Gaussians in one dimension, results in cuts arbitrarily far out on the tail. This is because the $(\rho, \rho^2)$ isoperimetry of cuts far out on the tail goes to 0.  Our theorems overcome this barrier: it can be shown that the $(\rho, R \rho)$ isoperimetry of a Gaussian mixture is non-zero, when $R: \R^d \rightarrow \R^{\geq 0}$ is chosen so that $\rho$ does not vary by more than a constant multiple on balls of radius $R(x)$ centered at $x$, and $R(x)$ is as large as possible.

The manifold Buser inequality (Theorem 1.4 in~\cite{Milman09}, first shown in~\cite{ledoux2004spectral}) addresses the case of probability densities on manifolds, but is limited to density functions whose Hessian of the log density is bounded.  In contrast, our work applies to all densities, including discontinuous ones.
\\[1mm]
 \indent \textbf{For Graphs:} The Cheeger inequality for graphs~\cite{AlonM84} states that up to constant factors, the fundamental eigenvalue of the normalized Laplacian is bounded below by the square of the graph isoperimetry\footnote{Isoperimetry in graphs is also known as sparsity or the normalized edge expansion}, and bounded above by the graph isoperimetry. This inequality is the foundation of spectral graph theory~\cite{AlonM84, ChungBook97}. Insights from these inequalities have been applied to algorithmic tasks including multi-way graph clustering~\cite{Louis12, LeeMultiway14}, Laplacian system solving~\cite{SpielmanTeng2004, KMP, ckmpprx14}, balanced cuts~\cite{osv12, cglnps20}, expander decomposition~\cite{SpielmanTeng2004, sw19},  maximum flow~\cite{CKMST}, sparsest cut~\cite{kllgt13, AlonM84, SpielmanTeng2004}, and more~\cite{GuMi95, Orecchia08, Orecchia2011, kw16, Kwok2016, Schild18, klt22cheeger}. For a deeper exposition on Cheeger's inequality and its proof, see~\cite{AlonM84, ChungBook97, SpielmanTeng2004}.

It is known that the upper bound of the Cheeger inequality graphs loses meaning as the number of vertices grows large~\cite{t21}. Researchers have established tighter upper bounds on the fundamental eigenvalue via $\textit{Buser}$ inequalities for large or infinite graphs, which require a bound on some discrete analog of Ricci curvature~\cite{t21, bhlmy15buser, kkrt16buser}. 
\\[1mm]
\indent \textbf{Cheeger and Buser Inequality for Manifolds:} The Cheeger~\cite{Cheeger70} and Buser~\cite{Buser82} inequalities on manifolds are foundational. They were first discovered in manifolds, and only later were they applied to graph theory~\cite{AlonM84} and other settings. The Cheeger inequality applies to all compact manifolds, while the Buser inequality only applies to compact manifolds with Ricci curvature bounded from below.

 Manifold Cheeger inequalities and Buser inequalities are useful to understand heat diffusion and mixing rates of random walks on manifolds~\cite{Buser82, ledoux2004spectral}. For more on Cheeger and Buser inequalities on manifolds, see~\cite{ledoux2004spectral}.

\section{A simple proof of density Buser in one dimension} \label{sec:1d}
We give a simple proof of Theorem~\ref{thm:cheeger-buser} in one dimension using the Hardy-Muckenhoupt inequality~\cite{muckenhoupt1972hardy, Schild18, MillerHardy18}. The Hardy-Muckenhoupt Inequality can be viewed as a bound on the fundamental eigenvalue of a mass-weighted, edge-weighted graph Laplacian, from a three-way split of the graph. See~\cite{MillerHardy18, Schild18} for details.

This section illustrates \textit{why} we choose our weights in Theorem~\ref{thm:cheeger-buser} the way that we do. Our choice of weights leads to a simple proof for Theorem~\ref{thm:cheeger-buser} in one dimension. 
 We conjecture (but do not prove) that the localization lemma~\cite{KLS95, LV17, jlv22} can generalize this proof to all higher dimensions.  
This paper's full proof of Theorem~\ref{thm:cheeger-buser} in higher dimension uses a different set of tools, and requires heavier computation compared to the simple proof in this section. 
\subsection{Preliminaries: Hardy Splits and Inequality}

Consider a one dimensional mass function $m: \R \rightarrow \R^{\geq}$, cut function $c: \R \rightarrow \R^{\geq 0}$, and spring function $s: \R \rightarrow R^{\geq 0}$, where $\int_{-\infty}^\infty m(x) dx < \infty$. Define a \textbf{Hardy Split} to be two reals $(a, b)$ with $a \leq b$. We define the \textbf{$(m, s)$ Hardy Value} of the Hardy Split $(a, b)$ to be:

\[
\frac{1 / \int_a^b \left(dx/s(x) \right)}{\min \left( \int_{-\infty}^a m(x) dx, \int_b^\infty m(x) dx \right) }
\]

The work of~\cite{Schild18, MillerHardy18} showed that this Hardy Value can be viewed in terms of effective    and mass weights on graphs. Loosely speaking, the numerator of the Hardy Value can be seen as the effective conductance between the sets $\{x \leq a\}$ and $\{x \geq b\}$ where $s$ is the infinitesimal conductance, and the denominator can be seen as the smaller of the masses of these two sets where $m$ is the infinitesimal mass.

\begin{mdframed}
\begin{figure}[H]
  \includegraphics[width=\linewidth]{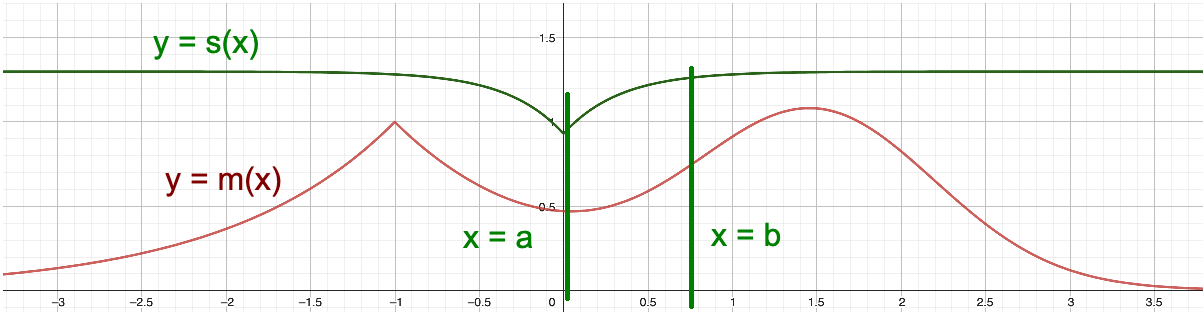}
  \caption{A Hardy split of $m$ defined by $(a, b)$. The $(m, s)$ Hardy Value of this split can be interpreted as the effective conductance between $x=a$ and $x=b$, divided by the mass of the smaller of $x < a$ and $x > b$. Here, each infinitesimal conductance is equal to $s(x)$ and each infinitesimal mass is equal to $m(x)$.}
  \label{fig:boat1}
\end{figure}
\end{mdframed}

The \textbf{Hardy-Muckenhoupt Inequality}~\cite{muckenhoupt1972hardy, MillerHardy18} states that, up to constant factors, the $(m,s)$ fundamental eigenvalue is bounded above by the $(m, s)$ Hardy value of any Hardy split, and bounded below by the infimum $(m, s)$ Hardy value over all Hardy splits.  In our proof, we will only use the upper bound on the fundamental eigenvalue.\footnote{The upper bound is straightforward from simple bounds on the Rayleigh quotient formulation of the fundamental eigenvalue~\cite{muckenhoupt1972hardy}. The difficulty in proving the Hardy-Muckenhoupt inequality is in the lower bound, which we do not use in this proof.}

\subsection{Density Cheeger-Buser in one dimension}
\begin{proof} (of Theorem~\ref{thm:cheeger-buser} in one dimension)

Let $\rho$ be any one-dimensional $L^1$ function $\R \rightarrow \R$. Let $R: \R \rightarrow \R^{\geq 0}$ be a $1$-Lipschitz function, where for all $z$, $\rho(x)/\rho(y)$ is bounded above by $2$ for all $x, y$ in balls of radius $R(z)$ centered at $z$. 
\begin{mdframed}
\begin{figure}[H]
  \includegraphics[width=\linewidth]{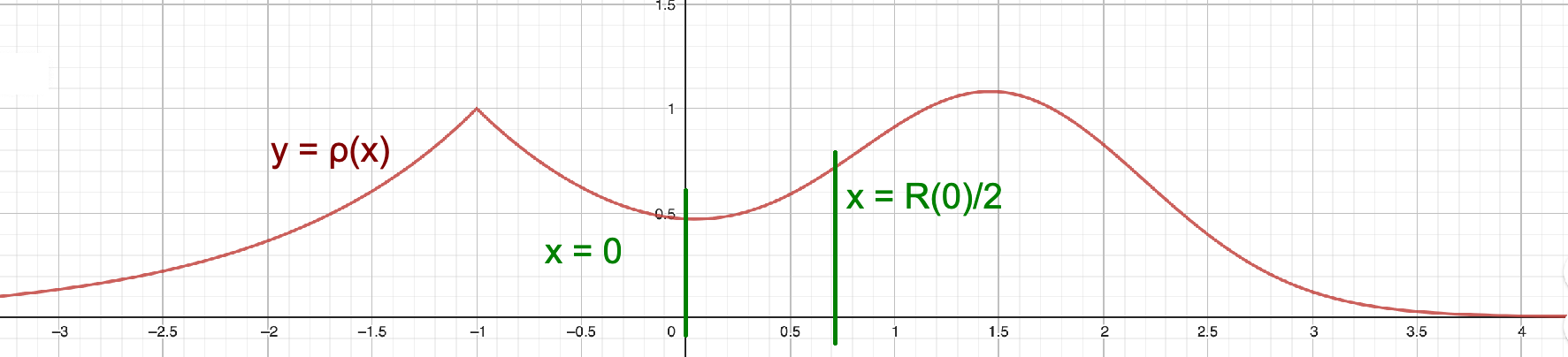}
  \caption{(Proof Sketch) Suppose our $(\rho, \rho R)$ isoperimetric cut is at $ x = 0$.  The Hardy-Muckenhoupt inequality tells us the $(\rho, \rho R^2)$ fundamental eigenvalue is upper bounded by the $(\rho, \rho R^2)$ Hardy value of the split $(0, R(0)/2)$. We will prove this Hardy value is upper bounded by the $(\rho, \rho R)$ isoperimetric constant. $R(0)$ has the property that $\rho$ does not vary by more than a multiple of $2$ on $[-R(0), R(0)]$.
\\
\\ 
The numerator of the $(\rho, \rho R^2)$ Hardy value will be bounded by a constant multiple of $\rho(0) R(0)$. The denominator of this Hardy value will be bounded by a constant multiple of the smaller of $\int_{-\infty}^0 \rho(x)dx$ and $\int_0^{\infty} \rho(x)dx$. 
}
  \label{fig:boat1}
\end{figure}
\end{mdframed}
First, we prove the Buser inequality, by upper bounding the $(\rho, \rho R^2)$ fundamental eigenvalue by a $(\rho, \rho R^2)$ Hardy value of some Hardy split, and then upper bounding that Hardy value by the $(\rho, \rho R)$ isoperimetric constant. In one dimension, any isoperimetric cut can be represented by a single point $x$, where the two sides of the cut are $\{y : y \leq x\}$ and $\{y : y > x\}$. Without loss of generality, suppose the $(\rho, \rho R)$ isoperimetric cut is at $x = 0$.

The $(\rho, \rho R^2)$ fundamental eigenvalue is upper bounded by the $(\rho, \rho R^2)$ Hardy value for any Hardy split, by the Hardy-Muckenhoupt inequality. Without loss of generality, suppose that $\int_{-\infty}^0 \rho(x) \leq \int_0^\infty \rho(x)$. Now consider the Hardy split defined by $(0, R(0)/2)$. The Hardy Value of the split equals:
\begin{align}\label{eq:cond}
    \frac{1/\left(\int_0^{R(0)/2} \frac{dx}{\rho(x) R(x)^2}\right)}{\min \left( \int_{\infty}^0 \rho(x)dx, \int_{R(0)/2}^\infty \rho(x) dx\right)} 
\end{align}

The numerator of the Hardy value of this split is equal to:
\begin{align}\label{eq:cond}
    \frac{1}{\int_0^{R(0)/2} \frac{dx}{\rho(x) R(x)^2}} 
\end{align}
By the definition of $R$, we know that $\rho(x)$ doesn't change by more than a constant factor in the interval $[0, R(0)/2]$. Since $R$ is $1$-Lipschitz, $R(x)$ also doesn't change by more than a constant factor on this interval.  Therefore, the value in Equation~\ref{eq:cond} is a constant multiple of 
\begin{align*}
&\frac{1}{\int_0^{R(0)/2} \frac{dx}{\rho(0) R(0)^2}}
\end{align*}
which equals $\rho(0)R(0)/2$.  Thus, the numerator of the Hardy Value is $\Theta(\rho(0)R(0))$.

Now we bound the denominator of the Hardy Value. We claim that $\int_{R(0)/2}^\infty \rho(x) dx =  \Theta\left(\int_{0}^\infty \rho(x) dx \right)$. This is because $\int_{R(0)/2}^{R(0)} \rho(x)dx = \Theta \left(\int_0^{R(0)} \rho(x)dx\right)$, by the definition of $R(0)$. Putting together the numerator and denominator bound, the $(\rho, \rho R^2)$ Hardy value of the Hardy Split $(0, R(0)/2)$ is 
\[ \Theta\left(\frac {\rho(0)R(0)}{\min\left( \int_{-\infty}^0 \rho(x) dx, \int_0^\infty \rho(x) dx \right) }\right) \]
which is a constant multiple of the isoperimetry of the $(\rho, \rho R)$ cut at $0$, which we assumed without loss of generality was optimal. Therefore, the $(\rho, \rho R^2)$ fundamental eigenvalue of any density is bounded above by a constant multiple of the $(\rho, \rho R)$ cut in one dimension, which proves the Buser inequality in Theorem~\ref{thm:cheeger-buser} for one dimension. The Cheeger inequality in this case is a straightforward derivation from the original proof~\cite{Cheeger70}, and has been deferred to Appendix~\ref{app:cheeger}.

\end{proof} 
\section{Technique for Density Cheeger-Buser in $d$ dimensions}\label{sec:technique}
We prove the Buser inequality in Theorem~\ref{thm:cheeger-buser} by using mollification over balls of radius $R$. Consider an indicator function of the $(\rho, \rho R)$ isoperimetric cut: the indicator function is $1$ on one side of the cut and $0$ on the other. For each point $x$, we create a new function that is the weighted average of our indicator function in a ball of radius $R(x)$ around $x$ (this is also known in the literature as `mollification' on a ball of radius $R(x)$ around $x$ ~\cite{mollifiers44}). We will show the $(\rho, \rho R^2)$ Rayleigh quotient of this new function is bounded above by the weighted isoperimetry of the cut.

Although the idea is simple, the details for proving that this idea works are technical, and have been deferred to Appendix (the entire appendix is a proof of Theorem~\ref{thm:cheeger-buser}). These computational details are \textit{not} standard: past proof techniques for Buser type inequalities generally involve mollification of balls of constant radius (or relied on heat kernel arguments with a constant specific heat capacity~\cite{ledoux2004spectral}), rather than with balls of varying radii. Mollifying with balls of varying radii introduces computational issues from extra terms not present when the radii are all the same~\footnote{For example, the integral of the function after mollification is \textit{not} equal to the integral of the function before mollification when the molliifcation balls have varying radii, in contrast to the case when the radius does not vary.}, and much of our proof is devoted to bounding these extra terms. Despite the technical complications, we have organized our proof in the appendix in a clear and organized way, and we believe it is accessible to the interested reader with limited background on mollification.

This technical argument is similar to that presented in~\cite{cmww20}, but differs in a key way: their work only applied only to the case when $\rho$ was $L$-Lipschitz, and their work can be interpreted as a strict corollary of ours when $R$ is forced to be $\rho/L$ (see Corollary~\ref{cor:lip}). Forcing $R$ in this fashion leads to spectral cuts for \textit{any} mixture of two one-dimensional Gaussians being arbitrarily far out on the tail, which is undesirable. Our work avoids such issue. Compared to~\cite{cmww20}, our proof is cleaner and disentangles $\rho$ and $R$ in a careful way.

We are allowed to have $\rho(x)$ equal to $0$ for some $x$, which differs from nearly every other proof about spectral cuts~\cite{von2008consistency, TrillosConsis16, TrillosContin16, TrillosRate15, TrillosVariational15}. No non-empty ball of radius $R(x)$ around point $x$ contains $y$ where $\rho(y) = 0$. This is key: if $\rho$ has a cut with $0$ isoperimetry, it is necessary for our proof method that any ball of mollification does not `cross' this $0$ cut.

As mentioned earlier, our technique for the Cheeger component of Theorem~\ref{thm:cheeger-buser} is straightforward from existing work, and can be found in Appendix~\ref{app:cheeger}.
\section{Conclusion and Open Questions}\label{sec:conclusion}
We give the first result on isoperimetry of spectral cuts, weighted or otherwise, on general probability densities. This informs us when continuous spectral clustering finds clusters with good isoperimetry, and when it doesn't. We list some future directions of work:

\textbf{One Cheeger-Buser to Rule Them All. }
Is there be a single, unified inequality for probability densities supported on manifolds, that covers all of: the Cheeger-Buser inequality in Theorem~\ref{thm:cheeger-buser}, the tighter Cheeger inequality for log concave densities (see Section~\ref{sec:past-work}), and the manifold Buser inequality~\cite{Buser82}? This setting is relevant to manifold learning~\cite{belkin2004semisup, belkin2007convergence}, which assumes data can be modeled as samples from a density supported on a manifold. 

\textbf{A Simpler Proof of Density Buser using Hardy Inequalities. }
In Section~\ref{sec:1d}, we use the Hardy-Muckenhoupt inequality to show a Buser inequality (Theorem~\ref{thm:cheeger-buser}) on probability densities in one dimension. Can we use the Localization Lemma~\cite{KLS95} to generalize this proof to arbitrary dimension? This would simplify our full proof in Appendix~\ref{app:buser-start} through~\ref{app:buser-end}. Can the localization lemma improve dimension dependence?

\textbf{Convergence of Spectral Clustering to Spectral Cuts. }
Our paper combined with~\cite{TrillosVariational15}, gives intuition that spectral clustering on a large number of samples drawn from a mixture of Laplace distributions, converges to a cut of the mixture with low isoperimetry. However, this intuition is not yet a formal proof.  Currently, the convergence proofs of~\cite{TrillosVariational15} (linking spectral clustering on large datasets to unweighted spectral cuts) require that the density be of bounded support: can we do away with this assumption?

\textbf{New Spectral Clustering Algorithms. }
Our work suggests there exist weighted variants of spectral clustering algorithms on samples from a density, which approximate our weighted spectral cut when the number of samples grows large. Do these algorithms work well in practice? What are these algorithms, and can we prove they converge to weighted spectral cuts on the underlying density? What if we know something about the structure of the underlying density, such as knowing it is the mixture of Gaussians? 

\textbf{Manifold Buser \textit{without} lower bounds on Ricci Curvature. }
Buser's inequality on manifolds~\cite{Buser82} requires a lower bound on the Ricci curvature. This is necessary as long as the isoperimetry and eigenvalue of the manifold are unweighted. If one allows for weighted isoperimetry and weighted eigenvalues, can we find a weighted Buser inequality that applies to all manifolds (possibly by including point-wise Ricci curvature into the weights), for which the manifold Buser inequality is a corollary?

\subsection{Acknowledgements}
We would like to thank Emanuel Milman, Luca Trevisan, and Michel Ledoux for explaining the state of the art in Buser inequalities on probability densities, manifolds, and graphs. We would also like to thank Alex Wang for helpful discussions.

\addcontentsline{toc}{section}{References}
\bibliographystyle{alpha}
\bibliography{main}

\newcommand{\etalchar}[1]{$^{#1}$}
\begin{thebibliography}{MKRLV22}

\bibitem[ACSS20]{acss20}
Josh Alman, Timothy Chu, Aaron Schild, and Zhao Song.
\newblock Algorithms and hardness for linear algebra on geometric graphs.
\newblock In {\em 2020 IEEE 61st Annual Symposium on Foundations of Computer
  Science (FOCS)}, pages 541--552. IEEE, 2020.

\bibitem[AM84]{AlonM84}
N.~Alon and V.~D. Milman.
\newblock Eigenvalues, expanders and superconcentrators.
\newblock In {\em Proceedings of the 25th Annual Symposium on Foundations of
  Computer Science, 1984}, FOCS '84, pages 320--322, Washington, DC, USA, 1984.
  IEEE Computer Society.

\bibitem[BGA20]{bga20spectral}
Filippo~Maria Bianchi, Daniele Grattarola, and Cesare Alippi.
\newblock Spectral clustering with graph neural networks for graph pooling.
\newblock In {\em International Conference on Machine Learning (ICML)}, pages
  874--883. PMLR, 2020.

\bibitem[BHL{\etalchar{+}}15]{bhlmy15buser}
Frank Bauer, Paul Horn, Yong Lin, Gabor Lippner, Dan Mangoubi, and Shing-Tung
  Yau.
\newblock Li-yau inequality on graphs.
\newblock {\em Journal of Differential Geometry}, 99(3):359--405, 2015.

\bibitem[BN04]{belkin2004semisup}
Mikhail Belkin and Partha Niyogi.
\newblock Semi-supervised learning on riemannian manifolds.
\newblock {\em Mach. Learn.}, 56(1-3):209--239, June 2004.

\bibitem[BN05]{belkin2005towards}
Mikhail Belkin and Partha Niyogi.
\newblock Towards a theoretical foundation for laplacian-based manifold
  methods.
\newblock In {\em International Conference on Computational Learning Theory},
  pages 486--500. Springer, 2005.

\bibitem[BN07]{belkin2007convergence}
Mikhail Belkin and Partha Niyogi.
\newblock Convergence of laplacian eigenmaps.
\newblock In {\em Advances in Neural Information Processing Systems (NeurIPS)},
  pages 129--136, 2007.

\bibitem[Bus82]{Buser82}
Peter Buser.
\newblock A note on the isoperimetric constant.
\newblock {\em Annales scientifiques de l'\'Ecole Normale Sup\'erieure}, Ser.
  4, 15(2):213--230, 1982.

\bibitem[CCLB21]{ccl21spectral}
Romain Couillet, Florent Chatelain, and Nicolas Le~Bihan.
\newblock Two-way kernel matrix puncturing: towards resource-efficient pca and
  spectral clustering.
\newblock In {\em International Conference on Machine Learning (ICML)}, pages
  2156--2165. PMLR, 2021.

\bibitem[CGL{\etalchar{+}}20]{cglnps20}
Julia Chuzhoy, Yu~Gao, Jason Li, Danupon Nanongkai, Richard Peng, and
  Thatchaphol Saranurak.
\newblock A deterministic algorithm for balanced cut with applications to
  dynamic connectivity, flows, and beyond.
\newblock In Sandy Irani, editor, {\em 61st {IEEE} Annual Symposium on
  Foundations of Computer Science, {FOCS} 2020, Durham, NC, USA, November
  16-19, 2020}, pages 1158--1167. {IEEE}, 2020.

\bibitem[CGR05]{chawla05sparse}
Shuchi Chawla, Anupam Gupta, and Harald R{\"{a}}cke.
\newblock Embeddings of negative-type metrics and an improved approximation to
  generalized sparsest cut.
\newblock In {\em Proceedings of the Sixteenth Annual {ACM-SIAM} Symposium on
  Discrete Algorithms, {SODA} 2005, Vancouver, British Columbia, Canada,
  January 23-25, 2005}, pages 102--111. {SIAM}, 2005.

\bibitem[Che70]{Cheeger70}
Jeff Cheeger.
\newblock A lower bound for the smallest eigenvalue of the laplacian.
\newblock In {\em Problems in Analysis: A Symposium in Honor of Salomon
  Bochner}, pages 195--199. Princeton Univ. Press, Princeton, N. J., 1970.

\bibitem[Che20]{y20}
Yuansi Chen.
\newblock An almost constant lower bound of the isoperimetric coefficient in
  the kls conjecture, 2020.

\bibitem[Chu97]{ChungBook97}
F.R.K. Chung.
\newblock {\em Spectral Graph Theory}, volume~92 of {\em Regional Conference
  Series in Mathematics}.
\newblock American Mathematical Society, 1997.

\bibitem[CKM{\etalchar{+}}11]{CKMST}
Paul Christiano, Jonathan~A. Kelner, Aleksander Madry, Daniel~A. Spielman, and
  Shang{-}Hua Teng.
\newblock Electrical flows, laplacian systems, and faster approximation of
  maximum flow in undirected graphs.
\newblock In Lance Fortnow and Salil~P. Vadhan, editors, {\em Proceedings of
  the 43rd {ACM} Symposium on Theory of Computing, {STOC} 2011, San Jose, CA,
  USA, 6-8 June 2011}, pages 273--282. {ACM}, 2011.

\bibitem[CKM{\etalchar{+}}14]{ckmpprx14}
Michael~B. Cohen, Rasmus Kyng, Gary~L. Miller, Jakub~W. Pachocki, Richard Peng,
  Anup~B. Rao, and Shen~Chen Xu.
\newblock Solving {SDD} linear systems in nearly
  \emph{m}log\({}^{\mbox{1/2}}\)\emph{n} time.
\newblock In David~B. Shmoys, editor, {\em Symposium on Theory of Computing,
  {STOC} 2014, New York, NY, USA, May 31 - June 03, 2014}, pages 343--352.
  {ACM}, 2014.

\bibitem[CMWW20]{cmww20}
Timothy Chu, Gary~L. Miller, Noel~J. Walkington, and Alex~L. Wang.
\newblock Weighted cheeger and buser inequalities, with applications to
  clustering and cutting probability densities.
  \url{https://arxiv.org/abs/2004.09589}.
\newblock {\em CoRR}, abs/2004.09589, 2020.

\bibitem[DF89]{Dyer89}
M.~Dyer and A.~Frieze.
\newblock A random polynomial time algorithm for approximating the volume of
  convex bodies.
\newblock In {\em Proceedings of the Twenty-first Annual ACM Symposium on
  Theory of Computing}, STOC '89, pages 375--381, New York, NY, USA, 1989. ACM.

\bibitem[Fri44]{mollifiers44}
Kurt~Otto Friedrichs.
\newblock The identity of weak and strong extensions of differential operators.
\newblock {\em Transactions of the American Mathematical Society}, 55:132--151,
  1944.

\bibitem[GM95]{GuMi95}
Stephen Guattery and Gary~L. Miller.
\newblock On the performance of the spectral graph partitioning methods.
\newblock In {\em SODA'95}, pages 233--242. ACM-SIAM, 1995.

\bibitem[GTS15]{TrillosRate15}
Nicol{\'a}s Garc{\'i}a~Trillos and Dejan Slep{\v{c}}ev.
\newblock On the rate of convergence of empirical measures in
  $\infty$-transportation distance.
\newblock {\em Canadian Journal of Mathematics}, 67(6):1358–1383, 2015.

\bibitem[GTS16]{TrillosContin16}
Nicol{\'a}s Garc{\'i}a~Trillos and Dejan Slep{\v{c}}ev.
\newblock Continuum limit of total variation on point clouds.
\newblock {\em Archive for Rational Mechanics and Analysis}, 220(1):193--241,
  04 2016.

\bibitem[JLV22]{jlv22}
Arun Jambulapati, Yin~Tat Lee, and Santosh~S. Vempala.
\newblock A slightly improved bound for the {KLS} constant.
\newblock {\em CoRR}, abs/2208.11644, 2022.

\bibitem[KKRT16]{kkrt16buser}
Bo'az Klartag, Gady Kozma, Peter Ralli, and Prasad Tetali.
\newblock Discrete curvature and abelian groups.
\newblock {\em Canadian Journal of Mathematics}, 68(3):655--674, 2016.

\bibitem[KL19]{kl19spectral}
Ioannis Koutis and Huong Le.
\newblock Spectral modification of graphs for improved spectral clustering.
\newblock {\em Advances in Neural Information Processing Systems (NeurIPS)},
  32, 2019.

\bibitem[KLL{\etalchar{+}}13]{kllgt13}
Tsz~Chiu Kwok, Lap~Chi Lau, Yin~Tat Lee, Shayan~Oveis Gharan, and Luca
  Trevisan.
\newblock Improved cheeger's inequality: analysis of spectral partitioning
  algorithms through higher order spectral gap.
\newblock In Dan Boneh, Tim Roughgarden, and Joan Feigenbaum, editors, {\em
  Symposium on Theory of Computing Conference, STOC'13, Palo Alto, CA, USA,
  June 1-4, 2013}, pages 11--20. {ACM}, 2013.

\bibitem[KLL16]{Kwok2016}
Tsz~Chiu Kwok, Lap~Chi Lau, and Yin~Tat Lee.
\newblock Improved cheeger's inequality and analysis of local graph
  partitioning using vertex expansion and expansion profile.
\newblock In {\em Proceedings of the Twenty-seventh Annual ACM-SIAM Symposium
  on Discrete Algorithms}, SODA '16, pages 1848--1861, Philadelphia, PA, USA,
  2016. Society for Industrial and Applied Mathematics.

\bibitem[KLS95]{KLS95}
Ravi Kannan, L{\'{a}}szl{\'{o}} Lov{\'{a}}sz, and Mikl{\'{o}}s Simonovits.
\newblock Isoperimetric problems for convex bodies and a localization lemama.
\newblock {\em Discrete {\&} Computational Geometry}, 13:541--559, 1995.

\bibitem[KLT22]{klt22cheeger}
Tsz~Chiu Kwok, Lap~Chi Lau, and Kam~Chuen Tung.
\newblock Cheeger inequalities for vertex expansion and reweighted eigenvalues.
\newblock In {\em 63rd {IEEE} Annual Symposium on Foundations of Computer
  Science, {FOCS} 2022, Denver, CO, USA, October 31 - November 3, 2022}, pages
  366--377. {IEEE}, 2022.

\bibitem[KMP10]{KMP}
Ioannis Koutis, Gary~L. Miller, and Richard Peng.
\newblock Approaching optimality for solving {SDD} linear systems.
\newblock In {\em 51th Annual {IEEE} Symposium on Foundations of Computer
  Science, {FOCS} 2010, October 23-26, 2010, Las Vegas, Nevada, {USA}}, pages
  235--244. {IEEE} Computer Society, 2010.

\bibitem[KSAM19]{k19spectral}
Matth{\"a}us Kleindessner, Samira Samadi, Pranjal Awasthi, and Jamie
  Morgenstern.
\newblock Guarantees for spectral clustering with fairness constraints.
\newblock In {\em International Conference on Machine Learning (ICML)}, pages
  3458--3467. PMLR, 2019.

\bibitem[KW16]{kw16}
Robert Krauthgamer and Tal Wagner.
\newblock Cheeger-type approximation for sparsest \emph{st}-cut.
\newblock {\em {ACM} Trans. Algorithms}, 13(1):14:1--14:21, 2016.

\bibitem[Led04]{ledoux2004spectral}
Michel Ledoux.
\newblock Spectral gap, logarithmic {S}obolev constant, and geometric bounds.
\newblock {\em Surveys in differential geometry}, 9(1):219--240, 2004.

\bibitem[LGT14]{LeeMultiway14}
James~R. Lee, Shayan~Oveis Gharan, and Luca Trevisan.
\newblock Multi-way spectral partitioning and higher-order cheeger
  inequalities.
\newblock {\em Journal of the ACM}, 61(6)(37), 2014.

\bibitem[LLKL11]{lkll11spectral}
Mu~Li, Xiao{-}Chen Lian, James~T. Kwok, and Bao{-}Liang Lu.
\newblock Time and space efficient spectral clustering via column sampling.
\newblock In {\em The 24th {IEEE} Conference on Computer Vision and Pattern
  Recognition, {CVPR} 2011, Colorado Springs, CO, USA, 20-25 June 2011}, pages
  2297--2304. {IEEE} Computer Society, 2011.

\bibitem[LR99]{lr99sparse}
Tom Leighton and Satish Rao.
\newblock Multicommodity max-flow min-cut theorems and their use in designing
  approximation algorithms.
\newblock {\em Journal of the ACM (JACM). Prelim version in ACM STOC 1988},
  46(6):787--832, 1999.

\bibitem[LRTV12]{Louis12}
Anand Louis, Prasad Raghavendra, Prasad Tetali, and Santosh Vempala.
\newblock Many sparse cuts via higher eigenvalues.
\newblock In {\em Proceedings of the forty-fourth annual ACM symposium on
  Theory of computing}, STOC 2012, pages 1131--1140. ACM, 2012.

\bibitem[LUZ17]{law17deepSpectral}
Marc~T Law, Raquel Urtasun, and Richard~S Zemel.
\newblock Deep spectral clustering learning.
\newblock In {\em International conference on machine learning (ICML)}, pages
  1985--1994. PMLR, 2017.

\bibitem[LV17]{LV17}
Y.~T. {Lee} and S.~S. {Vempala}.
\newblock Eldan's stochastic localization and the kls hyperplane conjecture: An
  improved lower bound for expansion.
\newblock In {\em 2017 IEEE 58th Annual Symposium on Foundations of Computer
  Science}, FOCS 2017, pages 998--1007, Oct 2017.

\bibitem[LV18a]{Lee18Survey}
Yin~Tat Lee and Santosh~S. Vempala.
\newblock The {K}annan-{L}ovász-{S}imonovits conjecture, 2018.

\bibitem[LV18b]{Lee18}
Yin~Tat Lee and Santosh~S. Vempala.
\newblock Stochastic localization + stieltjes barrier = tight bound for
  log-{S}obolev.
\newblock In {\em Proceedings of the 50th Annual ACM SIGACT Symposium on Theory
  of Computing}, STOC 2018, pages 1122--1129, New York, NY, USA, 2018. ACM.

\bibitem[Mil09]{Milman09}
Emanuel Milman.
\newblock On the role of convexity in isoperimetry, spectral gap and
  concentration.
\newblock {\em Inventiones mathematicae}, 177(1):1--43, 2009.

\bibitem[MKRLV22]{melas22deepSpectral}
Luke Melas-Kyriazi, Christian Rupprecht, Iro Laina, and Andrea Vedaldi.
\newblock Deep spectral methods: A surprisingly strong baseline for
  unsupervised semantic segmentation and localization.
\newblock In {\em Proceedings of the IEEE/CVF Conference on Computer Vision and
  Pattern Recognition (CVPR)}, pages 8364--8375, 2022.

\bibitem[MS90]{ms90sparse}
David~W Matula and Farhad Shahrokhi.
\newblock Sparsest cuts and bottlenecks in graphs.
\newblock {\em Discrete Applied Mathematics}, 27(1-2):113--123, 1990.

\bibitem[MS22]{ms22spectral}
Peter Macgregor and He~Sun.
\newblock A tighter analysis of spectral clustering, and beyond.
\newblock In {\em International Conference on Machine Learning (ICML)}, pages
  14717--14742. PMLR, 2022.

\bibitem[Muc72]{muckenhoupt1972hardy}
Benjamin Muckenhoupt.
\newblock Hardy's inequality with weights.
\newblock {\em Studia Mathematica}, 44(1):31--38, 1972.

\bibitem[MWW18]{MillerHardy18}
Gary~L. Miller, Noel~J. Walkington, and Alex~L. Wang.
\newblock Hardy-muckenhoupt bounds for laplacian eigenvalues.
\newblock {\em RANDOM/APPROX. \url{http://arxiv.org/abs/1812.02841}}, 2018.

\bibitem[NG06]{ng06spectrallimits}
Boaz Nadler and Meirav Galun.
\newblock Fundamental limitations of spectral clustering.
\newblock {\em Advances in neural information processing systems}, 19, 2006.

\bibitem[NJW01]{NgSpectral01}
Andrew~Y. Ng, Michael~I. Jordan, and Yair Weiss.
\newblock On spectral clustering: Analysis and an algorithm.
\newblock In {\em Proceedings of the 14th International Conference on Neural
  Information Processing Systems: Natural and Synthetic}, NIPS'01, pages
  849--856, Cambridge, MA, USA, 2001. MIT Press.

\bibitem[OSV12]{osv12}
Lorenzo Orecchia, Sushant Sachdeva, and Nisheeth~K Vishnoi.
\newblock Approximating the exponential, the lanczos method and an o (m)-time
  spectral algorithm for balanced separator.
\newblock In {\em Proceedings of the forty-fourth annual ACM symposium on
  Theory of computing (STOC)}, pages 1141--1160, 2012.

\bibitem[OSVV08]{Orecchia08}
Lorenzo Orecchia, Leonard~J Schulman, Umesh~V Vazirani, and Nisheeth~K Vishnoi.
\newblock On partitioning graphs via single commodity flows.
\newblock In {\em Proceedings of the fortieth annual ACM symposium on Theory of
  computing}, STOC '08, pages 461--470. ACM, 2008.

\bibitem[OV11]{Orecchia2011}
Lorenzo Orecchia and Nisheeth~K. Vishnoi.
\newblock Towards an sdp-based approach to spectral methods: A
  nearly-linear-time algorithm for graph partitioning and decomposition.
\newblock In {\em Proceedings of the Twenty-second Annual ACM-SIAM Symposium on
  Discrete Algorithms}, SODA '11, pages 532--545, Philadelphia, PA, USA, 2011.
  Society for Industrial and Applied Mathematics.

\bibitem[Sch18]{Schild18}
Aaron Schild.
\newblock A schur complement cheeger inequality.
\newblock In {\em ITCS '19}, 2018.

\bibitem[SM97]{ShiMalik97}
J.B. Shi and Jitendra Malik.
\newblock Normalized cuts and image segmentation.
\newblock {\em IEEE Trans. on Pattern Anal. and Mach. Intell.}, 22:888--905, 01
  1997.

\bibitem[SSL{\etalchar{+}}18]{SpectralNet}
Uri Shaham, Kelly Stanton, Henry Li, Boaz Nadler, Ronen Basri, and Yuval
  Kluger.
\newblock Spectralnet: Spectral clustering using deep neural networks.
\newblock {\em International Conference on Learning Representations (ICLR)},
  2018.

\bibitem[ST04]{SpielmanTeng2004}
Daniel~A. Spielman and Shang-Hua Teng.
\newblock Nearly-linear time algorithms for graph partitioning, graph
  sparsification, and solving linear systems.
\newblock In {\em Proceedings of the Thirty-sixth Annual ACM Symposium on
  Theory of Computing}, STOC '04, pages 81--90, New York, NY, USA, 2004. ACM.

\bibitem[SW19]{sw19}
Thatchaphol Saranurak and Di~Wang.
\newblock Expander decomposition and pruning: Faster, stronger, and simpler.
\newblock In Timothy~M. Chan, editor, {\em Proceedings of the Thirtieth Annual
  {ACM-SIAM} Symposium on Discrete Algorithms, {SODA} 2019, San Diego,
  California, USA, January 6-9, 2019}, pages 2616--2635. {SIAM}, 2019.

\bibitem[Tre21]{t21}
Luca Trevisan.
\newblock Buser inequalities in graphs.
\newblock
  \url{https://lucatrevisan.wordpress.com/2021/10/07/buser-inequalities-in-graphs/},
  2021.
\newblock Accessed: 2023-01-04.

\bibitem[TS18]{TrillosVariational15}
Nicolas~Garcia Trillos and Dejan Slep{\v{c}}ev.
\newblock A variational approach to the consistency of spectral clustering.
\newblock {\em Applied and Computational Harmonic Analysis}, 45(2):239--281,
  2018.

\bibitem[TSVB{\etalchar{+}}16]{TrillosConsis16}
Nicol\'{a}s~Garc\'{\i}a Trillos, Dejan Slep\v{c}ev, James Von~Brecht, Thomas
  Laurent, and Xavier Bresson.
\newblock Consistency of cheeger and ratio graph cuts.
\newblock {\em J. Mach. Learn. Res.}, 17(1):6268--6313, Jan 2016.

\bibitem[VL07]{von2007tutorial}
Ulrike Von~Luxburg.
\newblock A tutorial on spectral clustering.
\newblock {\em Statistics and computing}, 17(4):395--416, 2007.

\bibitem[VLBB08]{von2008consistency}
Ulrike Von~Luxburg, Mikhail Belkin, and Olivier Bousquet.
\newblock Consistency of spectral clustering.
\newblock {\em The Annals of Statistics}, pages 555--586, 2008.

\bibitem[YHJ09]{yhj09spectral}
Donghui Yan, Ling Huang, and Michael~I Jordan.
\newblock Fast approximate spectral clustering.
\newblock In {\em Proceedings of the 15th ACM SIGKDD international conference
  on Knowledge discovery and data mining}, pages 907--916, 2009.

\bibitem[ZR18]{zr18spectral}
Yilin Zhang and Karl Rohe.
\newblock Understanding regularized spectral clustering via graph conductance.
\newblock {\em Advances in Neural Information Processing Systems (NeurIPS)},
  31, 2018.

\end{thebibliography}
\pagebreak
\addcontentsline{toc}{section}{Appendix}
\begin{appendix}
\section*{Appendix}
\section{Density whose Unweighted Spectral Cut has poor Isoperimetry}~\label{app:counterexample}
In this section, we show that for any $K$, there exists a density function whose unweighted isoperimetric constant is $O(1/2^K)$, and whose unweighted spectral cut has unweighted isoperimetry $O(1/K)$.

Consider function $\rho$ supported on $[-1, 1] \times [0, K]$ for constant $K$, and let:

\[
\rho(x, y) = |x| + \frac{1}{2^K}
\]
$\rho$ is constructed to be the Cartesian product of $|x| + \frac{1}{2^K}$ from $[-1, 1]$, and function that is $1$ on $[0, K]$. The unweighted fundamental eigenvalue of $|x| + \frac{1}{2^K}$ on $[-1, -1]$ is $O(1/K)$, by the Hardy-Muckenhoupt inequality (see Section~\ref{sec:1d}). The unweighted fundamental eigenvalue of the function that is $1$ on $[0, K]$ is $O(1/K^2)$.

By symmetry arguments, it can be shown that the $(\rho, \rho)$ fundamental eigenfunction is either constant on the lines $x = C$ for all constants $C$, or else constant on the lines $y = C$ for all constants $C$.  From this, we can deduce that for large $K$, the unweighted spectral cut is defined by $y \leq K/2$, which has unweighted isoperimetry $O(1/K)$. The isoperimetric constant is $O(1/2^K)$, which occurs for the cut defined by $x \leq 0$.  Thus, the unweighted spectral cut can have unweighted isoperimetry that is exponentially worse than optimal. The computation is exactly the same when we scale $\sqrt{\rho}$ to have integral $1$, so that $\sqrt{\rho}$ a true probability density function (continuous spectral clustering on probability density $\sqrt{\rho}$ is the same as unweighted spectral cut on $\rho$, see Section~\ref{sec:past-work-spectral-clustering} and~\cite{TrillosVariational15}).

This work combined with with~\cite{TrillosVariational15} (see Section~\ref{sec:past-work-spectral-clustering}), gives evidence that two-way normalized spectral clustering may give undesirable clusters when applied to $\sqrt{\rho}$.
\section{Spectral Cut Theorems follow from Density Cheeger-Buse}\label{app:spectral}
We prove that Theorem~\ref{thm:cheeger-buser} implies Theorem~\ref{thm:weighted-spectral-cut}. This derivation is straightforward from past work, but we include it here for completeness.

\begin{proof} (Of Theorem~\ref{thm:weighted-spectral-cut})

Let $A$ be the $(\rho, \rho R^2, \rho R)$ weighted spectral cut, where $\rho, R, L, C, d$ are defined as in Theorem~\ref{thm:weighted-spectral-cut}. Then Lemma~\ref{lem:cheeger-strong} and the Buser inequality in Theorem~\ref{thm:cheeger-buser} imply:

\[ \Phi^2_{m,c} \leq \Phi^2_{m, c}(A) \leq O(\lambda_2^{m, s}) \leq O(d(L+1)(\Phi_{\rho, \rho R}) + d\Phi^2_{\rho, \rho R})\]
and Theorem~\ref{thm:weighted-spectral-cut} follows.
\end{proof}
\section{Proof of Cheeger Inequality on Densities}\label{app:cheeger}
In this section, we prove the Cheeger inequality portion of Theorem~\ref{thm:cheeger-buser}. This inequality is straightforward from existing work like~\cite{Cheeger70, AlonM84}, but we include a derivation here for completeness.

\begin{thm}
Let $m, c, s: \R^d \rightarrow \R_{\geq 0}$. Then:

\[ 
\Phi{m, c}^2 \leq 4 \| c / \sqrt{ms} \| _\infty^2 \lambda_2^{m, s}
\]
\end{thm}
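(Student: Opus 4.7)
The plan is to run the classical Cheeger sweep-cut argument, adapted to the three-weight setting $(m, c, s)$. Let $u = u_2^{m, s}$ be the fundamental eigenfunction, so $\int m\, u = 0$ and the Rayleigh quotient $Q_{u, m, s}$ equals $\lambda_2^{m, s}$. First, choose $\tau \in \R$ so that both $\{u \geq \tau\}$ and $\{u \leq \tau\}$ have $m$-measure at most $\tfrac{1}{2}\int m$, and set $v := (u - \tau)_+$, $w := (u - \tau)_-$. Then pointwise a.e.\ one has $|\grad v|^2 + |\grad w|^2 = |\grad u|^2$ and $v^2 + w^2 = (u-\tau)^2$, while $\int m(u-\tau)^2 = \int m u^2 + \tau^2 \int m \geq \int m u^2$. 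By the mediant inequality, one of $v, w$ has Rayleigh quotient at most $\lambda_2^{m,s}$; relabel so this is $v$. Then $v$ is supported on a set of $m$-measure at most $\tfrac{1}{2}\int m$ and satisfies $\int s|\grad v|^2 \leq \lambda_2^{m, s}\int m v^2$.

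Next I would apply Cauchy--Schwarz together with the chain rule $|\grad(v^2)| = 2 v |\grad v|$ to obtain
\[
\int s|\grad v|^2 \cdot \int m v^2 \;\geq\; \left(\int \sqrt{sm}\, v\,|\grad v|\, dx\right)^2 \;=\; \tfrac{1}{4}\left(\int \sqrt{sm}\,|\grad(v^2)|\, dx\right)^2.
\]
Using the pointwise bound $\sqrt{sm}(x) \geq c(x)/\|c/\sqrt{ms}\|_\infty$ and the co-area formula,
\[
\int \sqrt{sm}\,|\grad(v^2)|\, dx \;\geq\; \frac{1}{\|c/\sqrt{ms}\|_\infty} \int_0^\infty \int_{\partial\{v^2 > t\}} c(x)\, d\sigma(x)\, dt.
\]
Because each level set $\{v^2 > t\}$ is contained in $\operatorname{supp}(v)$, it has $m$-measure at most $\tfrac{1}{2}\int m$, so the definition of $\Phi_{m,c}$ gives $\int_{\partial\{v^2 > t\}} c \;\geq\; \Phi_{m,c}\, \int_{\{v^2>t\}} m\,dx$ for every $t$. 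Layer-cake then yields $\int_0^\infty \int_{\{v^2 > t\}} m\, dx\, dt = \int m v^2\, dx$.

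Chaining these estimates cancels one factor of $\int m v^2$ on each side and produces
\[
\lambda_2^{m,s} \;\geq\; \frac{\int s|\grad v|^2}{\int m v^2} \;\geq\; \frac{1}{4}\cdot \frac{\Phi_{m,c}^2}{\|c/\sqrt{ms}\|_\infty^2},
\]
which is the claimed inequality after rearrangement. The only non-routine point I anticipate is the very first step: verifying that the positive/negative-part decomposition is valid in the weak-gradient sense, so that $|\grad(v^2)| = 2v|\grad v|$ a.e.\ and the co-area formula applies to $v^2$. This is standard in the Sobolev/$BV$ setting and can be handled by smoothing $u$ and passing to the limit, but it is where the technicalities live; everything else is a direct transcription of the classical Cheeger argument of~\cite{Cheeger70, AlonM84} keeping the weights $m, c, s$ separate.
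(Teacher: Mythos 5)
Your proof is correct, and its engine is the same as the paper's: the weighted Cauchy--Schwarz step that writes $c = \frac{c}{\sqrt{ms}}\cdot\sqrt{ms}$ and uses $|\grad(v^2)| = 2v|\grad v|$ to bound $\int c\,|\grad(v^2)|$ by $2\|c/\sqrt{ms}\|_\infty\bigl(\int s|\grad v|^2\bigr)^{1/2}\bigl(\int m v^2\bigr)^{1/2}$, which is exactly the displayed chain in Appendix~\ref{app:cheeger}. Where you differ is in how the isoperimetric constant enters. The paper stops after the Cauchy--Schwarz chain and closes with a one-line remark about plugging in smooth approximations of the indicator of the isoperimetric cut; taken literally that step does not connect the left-hand side to $\lambda_2^{m,s}$ (the Rayleigh quotient of such approximations blows up), so the level-set part of the argument is left implicit there. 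You instead run the full classical sweep argument on the eigenfunction: median shift, positive/negative truncation with the mediant inequality to obtain a one-signed function of Rayleigh quotient at most $\lambda_2^{m,s}$ supported on a set of $m$-mass at most half, then co-area plus layer-cake to lower bound $\int c\,|\grad(v^2)|$ by $\Phi_{m,c}\int m v^2$. This buys a genuinely complete proof of the stated inequality with the same constant $4$, and, since the sets $\{v^2>t\}$ are sweep cuts of $u_2^{m,s}$, it essentially also proves the paper's stronger Lemma~\ref{lem:cheeger-strong}. Two small points to tidy beyond the regularity issues you already flag: at the median use the strict level sets $\{u>\tau\}$ and $\{u<\tau\}$ (the non-strict ones need not both have mass at most half when $u$ has flat pieces), and if the infimum defining $\lambda_2^{m,s}$ is not attained, run the argument on near-minimizers and pass to the limit.
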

\begin{proof}
Let $g: \R^d \rightarrow \R_{\geq 0}$ be a differentiable function, and let $g = u^2$. By Cauchy Schwarz:

\begin{align*}
&\int_{\R^d} c(x) |\nabla g(x)| dx 
\\
&= 2 \int_{\R^d} c(x) |u(x)||\nabla u(x)| dx 
\\
&\leq  2 \sqrt{\int_{\R^d} c(x)^2/m(x) |\nabla u(x)|^2 dx } \sqrt{\int_{\R^d} m(x) u(x)^2 dx} 
\\
& \leq 2 \| c/\sqrt{ms} \|_\infty \sqrt{\int_{\R^d} s(x) |\nabla u(x)|^2 dx} \sqrt{\int_{\R^d} m(x) u(x)^2 dx}
\end{align*}

Dividing through by $\int_{\R^d} m(x) g(x) dx $, setting $g$ as members of the sequence of differentiable functions that converge to the indicator function given by the isoperimetric cut, and squaring both sides gives us our desired result.

\end{proof}
When we set $ m = \rho, c = \rho R, s = \rho R^2$ for any $\rho, R : \R^d \rightarrow \R_{\geq 0}$, the Cheeger inequality of Theorem~\ref{thm:cheeger-buser} follows. We can prove a slightly stronger theorem with the same proof, which will be useful in proving Theorem~\ref{thm:weighted-spectral-cut} on weighted spectral cut isoperimetry.

\begin{lem}\label{lem:cheeger-strong}
Let $A$ be the $(m, s, c)$ spectral sweep cut. Then:
\[
\Phi_{m, c}(A)^2 \leq 4\lambda_2^{m, s}
\]
\end{lem}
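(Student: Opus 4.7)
The plan is to adapt the classical sweep-cut Cheeger argument (compare~\cite{Cheeger70,AlonM84}) to this weighted continuous setting, recycling the Cauchy--Schwarz step from the immediately preceding theorem of Appendix~\ref{app:cheeger}. First I would take $u := u_2^{m,s}$ and subtract an $m$-weighted median $t_m$, obtaining $v := u - t_m$. Shifting by a constant leaves $\nabla v = \nabla u$ unchanged and can only increase $\int m v^2$ (it goes up by $t_m^2 \int m$, since $\int m u = 0$), so the $(m,s)$-Rayleigh quotient of $v$ is still at most $\lambda_2^{m,s}$; by the choice of $t_m$ both $\{v \geq 0\}$ and $\{v \leq 0\}$ have $m$-mass at most half the total.

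Next I would write $v = v_+ - v_-$ with $v_\pm := \max(\pm v, 0)$. Because $\nabla v_+$ and $\nabla v_-$ have essentially disjoint supports, the Dirichlet form and the $L^2$ norm split additively: $\int s|\nabla v|^2 = \int s|\nabla v_+|^2 + \int s|\nabla v_-|^2$ and $\int m v^2 = \int m v_+^2 + \int m v_-^2$. The mediant inequality then forces one side, say $v_+$ without loss of generality, to satisfy $\int s|\nabla v_+|^2 / \int m v_+^2 \leq \lambda_2^{m,s}$. The crucial observation is that for every $t > 0$ the superlevel set $\{v_+ \geq t\}$ coincides with the sweep cut $S_{t_m + t}$ of $u_2^{m,s}$ and has $m$-mass at most half the total, so it is automatically the ``smaller side'' in the definition of $\Phi_{m,c}$, and by optimality of the spectral sweep cut $A$ its $(m,c)$-isoperimetry is at least $\Phi_{m,c}(A)$.

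The last step is to apply the preceding theorem's argument to $g := v_+^2$. The coarea formula gives $\int c|\nabla g|\, dx = \int_0^\infty \int_{\partial\{g \geq t\}} c \, dS\, dt$ and $\int m g\, dx = \int_0^\infty m(\{g \geq t\})\, dt$, so integrating the per-level-set bound above yields $\int c|\nabla g| \geq \Phi_{m,c}(A) \int m g$. On the other hand, the chain rule rewrites $\int c|\nabla g| = 2 \int c\, v_+ |\nabla v_+|$, and Cauchy--Schwarz exactly as in the preceding theorem bounds this above by $2\sqrt{\int s|\nabla v_+|^2}\sqrt{\int m v_+^2}$ (the factor $\|c/\sqrt{ms}\|_\infty$ appearing in the preceding theorem equals $1$ in the $(m,s,c) = (\rho, R^2\rho, R\rho)$ setting relevant to Theorem~\ref{thm:weighted-spectral-cut}, where $c^2 = ms$). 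Chaining the two inequalities and canceling $\sqrt{\int m v_+^2}$ from each side gives $\Phi_{m,c}(A) \leq 2\sqrt{\lambda_2^{m,s}}$, which squares to the lemma.

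The main obstacle I expect is the ``without loss of generality replace $v$ by its positive part'' reduction: one has to justify the additive splitting of the Dirichlet form (which requires that $\nabla v_\pm$ vanishes on the support of the other part, a mild issue since $\rho$ is allowed to be discontinuous here) and then use the mediant inequality to transport the Rayleigh-quotient bound from $v$ onto whichever part lives on the small-mass side of $t_m$. A secondary, purely technical point is justifying the coarea formula when $\rho$ may vanish or be discontinuous; this is absorbed into the same smooth-approximation step already used in the preceding Cheeger proof in Appendix~\ref{app:cheeger}.
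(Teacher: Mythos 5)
Your proposal is correct, and it is in fact more complete than what the paper itself supplies: the paper never writes out a proof of Lemma~\ref{lem:cheeger-strong}, asserting only that it follows ``with the same proof'' as the preceding theorem of Appendix~\ref{app:cheeger}, whose written argument applies the weighted Cauchy--Schwarz step to smooth approximations of the indicator of the optimal isoperimetric cut and, as literally stated, says nothing about the sweep cut $A$. The machinery you add --- normalizing the eigenfunction by an $m$-weighted median, splitting into $v_{\pm}$ and using the mediant inequality to retain the Rayleigh-quotient bound on one part, then combining the layer-cake/coarea identities with the same Cauchy--Schwarz estimate and invoking the optimality of $A$ among sweep cuts at every level --- is exactly what is needed to convert that computation into a statement about $A$, so your route is the classical sweep-cut argument the paper is gesturing at, with the missing steps made explicit. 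You are also right to flag the constant: for arbitrary $(m,s,c)$ the lemma cannot hold without the factor $\|c/\sqrt{ms}\|_{\infty}$ (rescale $c$), and it is only invoked with $(m,s,c)=(\rho,R^2\rho,R\rho)$, where $c^2=ms$, a hypothesis your proof states and the paper leaves implicit. Two small points to tighten: the median gives $m(\{v>0\})\le\tfrac12\int m$ and $m(\{v<0\})\le\tfrac12\int m$, whereas the closed sets $\{v\ge 0\}$ and $\{v\le 0\}$ can both exceed half the mass if $\{v=0\}$ carries mass --- harmless for you, since you only use levels $t>0$; and in the ``without loss of generality'' step, if the good part is $v_{-}$ then its superlevel sets are complements of sweep cuts rather than sweep cuts, so you should add the observation that $\Phi_{m,c}$ is invariant under complementation (same boundary integral, same minimum in the denominator), after which the per-level bound holds for almost every level and the argument goes through unchanged.
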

\section{Proof of Buser Inequality on Densities}\label{app:roadmap}\label{app:buser-start}
The remainder of the appendix is devoted to the technical proof of the Buser inequality in Theorem~\ref{thm:cheeger-buser}. The Cheeger portion of the inequality is proven in Appendix~\ref{app:cheeger}.

\begin{itemize}
    \item Section~\ref{sec:notation} contains notation and conventions that we use throughout the Appendix.
    \item Section~\ref{sec:rayleigh} introduces two Lemmas to bound the numerator and denominator of a key Rayleigh quotient. We will show how to prove Buser's inequality from these two lemmas. The numerator lemma will be proven in Section~\ref{sec:num-proof}, and the denominator lemma will be proven in Section~\ref{sec:denom-proof}.
    \item Section~\ref{sec:tools} gives two tools we use to prove the numerator and denominator bounds.
    \item Section~\ref{sec:num-proof} proves the numerator bound Lemma from Section~\ref{sec:rayleigh}.
    \item Section~\ref{sec:denom-proof} proves the denominator bound Lemma from Section~\ref{sec:rayleigh}.
\end{itemize}

\subsection{Notation}\label{sec:notation}
In this section, we list notation to simplify our calculations.
\begin{itemize}
\item $u_{\theta}(x) := \int_{y \in \R^d} u(x - \theta R(x) y) \phi(y) dy$ for some function $R:\R^d \rightarrow R$ and some mollifier $\phi: \R^d \rightarrow \R^{\geq 0}$ supported on the unit ball centered at $0$, which we notate as $B(0, 1)$. Here, we force $\int \phi(y) = 1$. In our write-up, $\theta$ is always a constant. 
\begin{itemize}
    \item This is equivalent to taking the ball of radius $\theta R(x)$ around x, and finding the weighted average of $u$
    in that ball.
    \item $u_\theta$ depends on both $R$ and $\phi$.
\end{itemize}
\item $\overline{u}$ denotes the constant $\left. \int_{\R^d} \rho(y) u(y) dy \middle/ \int_{\R^d} \rho(y) dy \right.$, which is equal to the $\rho$-weighted average of $u$.
\item  $a \lesssim b$ will mean $a <= Cb$ for some constant $C$.

 \item $a \approx b$ will mean $cb \leq a \leq Cb$ for some positive constants $c$ and $C$.
\item  $\grad_x(f(x, y))$ means that $f(x, y)$ will be treated as a function in $x$ with constant $y$, and the gradient will be taken accordingly.  
 
\item  $\gu (f(x))$ means that the gradient of $u$ will be applied to $f(x)$. This is in contrast with $\grad(u(f(x))$, where the gradient is applied to the function $u \circ f$

\item The notation $\rho R (f(x))$ is equivalent to $\rho(f(x)) R(f(x))$. This shorthand will be convenient in simplifying certain equations.
\item $\|u\|_q = \left(\int\|u(y)\|^q dy\right)^{1/q} $
\item $R(x)$ in this paper will generally have the property that $\rho$ doesn't vary by more than a factor of $2$ on the ball of radius $R(x)$ centered at $x$.
\item The notation $(\rho R |\grad u|_2)_{\theta}$ is equivalent to defining $f(x) := \rho(x) R(x) |\gu (x)|_2$ and computing $f_\theta(x)$.
 \item $J_x(f)(x')$ refers to the Jacobian of function $f: \R^{d_1} \rightarrow \R^{d_2}$, as a function of variable $x \in \R^d$, applied at point $x' \in \R^d$.
\end{itemize}
No serious attempts were made to optimize the non-dimensionally sensitive constants.
\subsection{Rayleigh Quotient Bound}\label{sec:rayleigh}

\begin{lem} \label{lem:overall} (Buser Inequality) 

For all indicator functions $u$ that is $1$ on $A \subset \R^d$ and $0$ elsewhere, $\theta < \min(1, \frac{1}{2L})$, $\rho$, and $L$-Lipschitz $R$ where $\rho(y) R(y)$ varies by at most a factor of $4$ for y in ball of radius $R(x)$ around x: 
\begin{align}\label{eq:overall}
\lambda_2 \leq \frac{\|\rho R^2 \| \gut \|_2^2\|_1}
{\|\rho (\ut - \overline{\ut})^2\|_1} \lesssim \frac{( dL + d / \theta )\Phi_u }{\frac{1}{4} - \theta \Phi_u}
\end{align}
where 
\[ \Phi_u :=  \frac{\| {\rho |\gu|_1}\|_1}{\|\rho (u - \overline{u})\|_1}
\approx
\frac{\partial_{\rho R} (A)}{\min(Vol_\rho(A), Vol_\rho(A^c))} \]
Setting $\theta = \min(1, 1/(2L), 1/(8\Phi_u))$ and set $u$ to be the indicator function of $A$ corresponding to the optimal $(\rho,\rho R)$ cut:
\[
\lambda_2 \leq O(\max(\Phi (3dL + d), \Phi^2))
\]
\end{lem}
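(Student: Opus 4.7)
The plan is to apply the variational characterization of $\lambda_2^{\rho, \rho R^2}$ with $\ut - \overline{\ut}$ as the test function. Since $\overline{\ut}$ is the $\rho$-weighted mean of $\ut$, the function $\ut - \overline{\ut}$ is automatically $\rho$-orthogonal to the constants, so the first inequality in \eqref{eq:overall} holds immediately from the definition of $\lambda_2^{\rho, \rho R^2}$ as an infimum of Rayleigh quotients. The remainder of the argument is to bound the resulting Rayleigh quotient, which I would split into a numerator lemma (bounding $\|\rho R^2 |\gut|_2^2\|_1$ from above) and a denominator lemma (bounding $\|\rho(\ut - \overline{\ut})^2\|_1$ from below), matching the two lemmas deferred to Sections~\ref{sec:num-proof} and~\ref{sec:denom-proof}.

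For the numerator, the plan is to show
\[
\|\rho R^2 |\gut|_2^2\|_1 \lesssim (dL + d/\theta)\, \Phi_u \cdot \|\rho(u - \overline{u})\|_1.
\]
Differentiating $\ut(x) = \int u(x - \theta R(x) y)\phi(y)\,dy$ in $x$ produces two contributions: a translation term of size $1/(\theta R(x))$ coming from the Jacobian of the shift $y \mapsto x - \theta R(x) y$ and supported only where the shifted ball meets $\partial A$; and a scaling term of size $L$ coming from $\grad R(x)$ times $y$. Each contribution is converted to an integral over $B(0,1) \times \partial A$ by a change of variables, and then back to a $\rho R$-weighted surface integral on $\partial A$ using that $\rho R$ varies by at most a factor of four on any ball of radius $R(x)$. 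The two powers of $R$ in the weight $\rho R^2$ absorb the two factors of $1/R$ coming from squaring the translation gradient, leaving exactly the isoperimetric ratio $\Phi_u$ times the volume factor $\|\rho(u - \overline u)\|_1$.

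For the denominator, the plan is to show
\[
\|\rho(\ut - \overline{\ut})^2\|_1 \gtrsim \left(\tfrac{1}{4} - \theta \Phi_u\right) \|\rho(u - \overline{u})\|_1.
\]
Because $u$ is $\{0,1\}$-valued, a direct computation gives $\|\rho(u - \overline u)^2\|_1 = \tfrac{1}{2}\|\rho(u - \overline u)\|_1$, and in particular this quantity is at least $\tfrac{1}{2}\min(\mathrm{Vol}_\rho(A), \mathrm{Vol}_\rho(A^c))$. Mollification moves mass only within $\theta R(x)$ of $\partial A$, and the $\rho$-mass of that strip is controlled by $\theta \cdot \|\rho R |\gu|_1\|_1 = \theta \Phi_u \cdot \|\rho(u - \overline u)\|_1$. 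A triangle inequality in $L^2(\rho)$ comparing $\ut - \overline{\ut}$ to $u - \overline u$ then yields the claimed lower bound, with the $\theta \Phi_u$ subtraction being the worst-case loss from this boundary-strip perturbation.

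Dividing the two bounds gives the middle estimate in \eqref{eq:overall}. The final conclusion follows by setting $\theta = \min(1, 1/(2L), 1/(8\Phi_u))$: the first two caps are the hypotheses of the lemma, and the third ensures the denominator bracket stays above $1/8$. When $1/(8\Phi_u)$ is not binding, $\theta = \Theta(\min(1, 1/L))$ and the $d/\theta$ term produces the linear $(3dL + d)\Phi_u$ piece; when it is binding, $d/\theta = 8 d \Phi_u$ produces the quadratic $\Phi_u^2$ piece, and the maximum of the two is the stated bound. The main obstacle will be the numerator bound: varying $R$ breaks the translation invariance that makes standard constant-radius Buser arguments clean, so $\gut$ picks up an extra $\grad R$ term and every change of variables now carries a nontrivial Jacobian of the form $I - \theta y\, \grad R(x)^\top$. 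Controlling these extra terms requires repeatedly using $L$-Lipschitzness of $R$ and the doubling condition on $\rho R$ to move between integrals at $x$ and at $x - \theta R(x) y$ without losing control of the constants, and this is precisely the calculation the bulk of Sections~\ref{sec:tools}--\ref{sec:denom-proof} is devoted to.
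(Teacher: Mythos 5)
Your proposal is correct and follows essentially the same route as the paper: take $\ut - \overline{\ut}$ as the test function (mollification with variable radius $\theta R(x)$), bound the numerator by a pointwise gradient estimate of order $d/\theta + dL$ combined with comparability of the mollified weighted perimeter to $\|\rho R|\gu|_2\|_1$, bound the denominator via the exact identity for indicator functions plus a triangle inequality and the $O(\theta\|\rho R|\gu|_2\|_1)$ control on $\|\rho(\ut-u)\|_1$, and then optimize $\theta = \min(1, 1/(2L), 1/(8\Phi_u))$ to split into the linear and quadratic regimes. This matches the paper's decomposition into its numerator and denominator lemmas, including the identification of the variable-radius Jacobian terms as the main technical burden.
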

To prove this, we bound the numerator and denominator of the Rayleigh quotient in Equation~\ref{eq:overall}. 
\begin{lem}\label{lem:num} (\textbf{Numerator bound}) 

For $\rho$ and $R$ where $R$ is $L$-Lipschitz, $\rho R$ doesn't change by more than a factor of $4$ on a ball of radius $R$, and any $\theta$ where $\theta L \leq 1/2$. 
\[ 
\| \rho R^2 |\gut|_2^2 \|_1 \leq \frac{6d(2+3L)}{\theta} \|u\|_\infty \| \rho R | \gu |_2 \|_1
\]
\end{lem}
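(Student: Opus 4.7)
The plan is to bound $\|\rho R^2 |\nabla u_\theta|_2^2\|_1$ by controlling $|\nabla u_\theta|$ in two complementary ways and applying one bound to each factor of $|\nabla u_\theta|^2 = |\nabla u_\theta|\cdot|\nabla u_\theta|$, so that the right-hand side ends up linear (rather than quadratic) in $|\nabla u|$ and can match the surface-area quantity $\|\rho R |\nabla u|_2\|_1$. Differentiating $u_\theta(x) = \int u(x - \theta R(x) y)\phi(y)\,dy$ by the chain rule gives $\nabla u_\theta = A + B$ with
\[A(x) = \int \nabla u(w(x,y))\,\phi(y)\,dy, \qquad B(x) = -\theta \nabla R(x) \int \bigl(y \cdot \nabla u(w(x,y))\bigr)\phi(y)\,dy,\]
where $w(x,y) = x - \theta R(x) y$. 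The term $A$ is the usual mollification of $\nabla u$; $B$ is the correction from the variable radius and is absent in the standard constant-radius Buser proof. The triangle inequality together with $|\nabla R|\leq L$ and $|y|\leq 1$ immediately yields the first bound $|\nabla u_\theta(x)| \leq (1+\theta L)(|\nabla u|)_\theta(x)$.

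For the second bound I would integrate by parts in $y$, using $\nabla u(w) = -\tfrac{1}{\theta R(x)}\nabla_y u(w)$ to move the derivative off $u$ and onto $\phi$. This gives $|A(x)| \leq \|u\|_\infty \|\nabla\phi\|_1/(\theta R(x))$ and, after using $\nabla_y\cdot(y\phi) = d\phi + y\cdot\nabla\phi$, the estimate $|B(x)| \leq L\|u\|_\infty(d+\|\nabla\phi\|_1)/R(x)$; hence $|\nabla u_\theta(x)| \lesssim \|u\|_\infty(1/\theta + dL)/R(x)$. Applying this pointwise bound to one factor of $|\nabla u_\theta|^2$ and the mollification bound $(1+\theta L)(|\nabla u|)_\theta$ to the other, one power of $R$ cancels the $1/R$, leaving
\[\int \rho R^2 |\nabla u_\theta|^2\,dx \lesssim \|u\|_\infty (1/\theta + dL)(1+\theta L)\int \rho(x) R(x) (|\nabla u|)_\theta(x)\,dx.\]

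It remains to undo the mollification on the right. For each fixed $y\in B(0,1)$, I would change variables $z = w(x,y) = x - \theta R(x) y$: the Jacobian $I - \theta y(\nabla R(x))^T$ is a rank-one perturbation of the identity with determinant $1 - \theta(\nabla R(x))\cdot y$, which under $\theta L \leq 1/2$ is bounded below by $1/2$. Since $|x - z| \leq \theta R(x) \leq R(x)$, the hypothesis that $\rho R$ changes by at most a factor of $4$ on balls of radius $R(x)$ gives $\rho(x)R(x) \leq 4\rho(z)R(z)$. Combining the Jacobian and stability bounds yields $\int \rho R (|\nabla u|)_\theta\,dx \lesssim \int \rho R |\nabla u|\,dz$, and collecting constants (using $\theta L \leq 1/2$ to absorb $(1+\theta L)$ and to write $(1/\theta + dL)(1+\theta L) \lesssim d(2+3L)/\theta$) produces the stated inequality.

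The main obstacle is the correction term $B$: it is responsible for the factor of $dL$ and forces the stability hypothesis to be on $\rho R$ rather than on $\rho$ alone, since after the change of variables it is exactly $\rho(x) R(x)$ that we must compare to $\rho(z) R(z)$ at the nearby point $z$. A careless use of the cruder uniform bound $|\nabla u_\theta|\lesssim 1/(\theta R)$ for both factors would produce a right-hand side involving the volume of an $R$-neighborhood of $\partial A$ rather than its surface area, which would destroy the Buser-type scaling. A secondary, more routine technical obstacle is justifying the above manipulations for indicator $u$, which I would handle by a standard smooth approximation of $\mathbf{1}_A$ and passage to the limit.
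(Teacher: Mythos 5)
Your proposal is correct and takes essentially the same route as the paper: the split of $\rho R^2|\gut|_2^2$ into an $L_\infty$ factor bounded by integration by parts against the mollifier (yielding $R|\gut|_2\lesssim (d/\theta+dL)\|u\|_\infty$, the paper's Lemma~\ref{lem:num-inf}) times an $L_1$ factor handled by commuting the gradient with mollification, the stability of $\rho R$ on balls of radius $R$, and the change of variables with Jacobian determinant $1-\theta\, y\cdot\grad R(x)$ is exactly the paper's argument via Claims~\ref{claim:theta-comm}--\ref{claim:num-one} and Lemma~\ref{lem:moll}. Your explicit treatment of the variable-radius correction term via $\grad_y\cdot(y\phi)=d\phi+y\cdot\grad\phi$ is, if anything, slightly more careful than the paper's displayed integration by parts, and the constant bookkeeping matches up to the unoptimized constants the paper itself disclaims.
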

\begin{lem} \label{lem:denom} (\textbf{Denominator Bound})

For $\rho$ and $R$ where $R$ is $L$-Lipschitz, $\rho R$ doesn't change by more than a factor of $4$ on a ball of radius $R$, and any $\theta$ where $\theta L \leq 1/2$. 
\[
\| \rho (\ut - \overline{\ut})^2\|_1  \geq \frac{1}{4} \|\rho(u - \overline{u})\|_1 - 2\theta \|\rho R \gu \|_1
\]
\end{lem}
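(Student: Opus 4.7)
The plan is to reduce the target inequality to three pieces: a variance-minimization step that swaps $\overline{u_\theta}$ for $\overline{u}$, an indicator-function identity that converts an $L^2(\rho)$ oscillation into an $L^1(\rho)$ oscillation, and a mollification estimate of the form $\int \rho|u - u_\theta| \lesssim \theta \int \rho R|\grad u|$. The first two are essentially one-line manipulations; the main obstacle, and the source of the hypotheses $\theta L \leq 1/2$ and ``$\rho R$ varies by at most a factor of $4$ on balls of radius $R$'', lies in the third.

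For the first two pieces, I use that $\overline{u}$ minimizes $c \mapsto \int \rho(u - c)^2$, so taking $c = \overline{u_\theta}$ and applying $(a+b)^2 \leq 2a^2 + 2b^2$ yields
\begin{align*}
\int \rho(u-\overline{u})^2 \;\leq\; \int \rho(u - \overline{u_\theta})^2 \;\leq\; 2\int \rho(u - \ut)^2 + 2\int \rho(\ut - \overline{\ut})^2,
\end{align*}
which rearranges to $\int \rho(\ut - \overline{\ut})^2 \geq \tfrac{1}{2}\int \rho(u-\overline{u})^2 - \int \rho(u - \ut)^2$. Because $u$ is the indicator of $A$ with $\rho$-weighted average $\overline{u}$, splitting on $A$ and $A^c$ gives $\int \rho(u-\overline{u})^2 = \overline{u}(1-\overline{u})\int \rho = \tfrac{1}{2}\int \rho|u-\overline{u}|$, so substitution produces
\begin{align*}
\int \rho(\ut - \overline{\ut})^2 \;\geq\; \tfrac{1}{4}\|\rho(u-\overline{u})\|_1 - \int \rho(u-\ut)^2.
\end{align*}
Finally, since $u \in \{0,1\}$ and $\ut$ is a convex combination of $u$-values, both lie in $[0,1]$, so $(u - \ut)^2 \leq |u - \ut|$ and hence $\int \rho(u - \ut)^2 \leq \int \rho|u - \ut|$.

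It remains to prove the mollification estimate $\int \rho|u - \ut| \leq 2\theta \int \rho R|\gu|$. Writing $u(x) - \ut(x) = \int [u(x) - u(x - \theta R(x) y)]\, \phi(y)\, dy$ and applying the fundamental theorem of calculus along the segment from $x$ to $x - \theta R(x) y$ (after approximating the indicator $u$ by smooth functions, since $\gu$ is understood as the perimeter measure on $\partial A$) expresses this as an iterated integral of $|\gu|$ at points $x - s\theta R(x) y$, weighted by $\theta R(x)|y|$. Swapping the order of integration and performing the change of variables $x \mapsto x - s\theta R(x) y$ converts the $\rho(x) R(x)$ weight into a quantity comparable, up to a universal constant, with $\rho R$ evaluated where $|\gu|$ lives. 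The hypothesis $\theta L \leq 1/2$ keeps the Jacobian of this substitution bounded (since $R$ changes by at most a factor of two across the region of integration), and the assumption that $\rho R$ varies by at most a factor of $4$ on balls of radius $R$ converts source-point weights to target-point weights at bounded multiplicative cost. The main obstacle is assembling this change-of-variables argument cleanly; the two tools advertised in Section~\ref{sec:tools} are designed precisely for this, and combining them produces the constant $2$ in front of $\theta$ and completes the lemma.
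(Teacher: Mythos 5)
Your proposal is correct and takes essentially the same approach as the paper: its Lemma~\ref{lem:denom-one} establishes exactly your intermediate inequality $\|\rho(\ut-\overline{\ut})^2\|_1 \geq \tfrac14\|\rho(u-\overline{u})\|_1 - \|\rho(\ut-u)\|_1$ (via a triangle-inequality variant of your variance-minimization step plus the same indicator identity), and its Claim~\ref{claim:denom-utu} and Lemma~\ref{lem:denom-two} carry out your sketched mollification estimate by the same fundamental-theorem-of-calculus and change-of-variables argument, with Lemma~\ref{lem:moll} controlling the Jacobian under $\theta L \leq 1/2$ and the bounded variation of $\rho R$ transferring the weights. The only caveat is the constant: tracked carefully this argument yields roughly $8\theta\|\rho R|\gu|_2\|_1$ rather than $2\theta$ (as the paper's own proof of Lemma~\ref{lem:denom-two} in fact does), a discrepancy the paper absorbs under its explicit indifference to non-dimensional constants.
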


\begin{proof} (of Lemma~\ref{lem:overall}) Lemma~\ref{lem:num} and~\ref{lem:denom} tell us that, for indicator functions $u$ (guaranteeing $\|u\|_\infty \leq 1$):

\[
 \frac{\|\rho R^2 \| \gut \|_2^2\|_1}
{\|\rho (\ut - \overline{\ut})^2\|_1} \leq \frac{6d(2+3L) \|\rho R |\gu|_2\|_1 }{\theta \left(\frac{1}{4}\|\rho(u - \overline{u})\|_1 - \theta \|\rho R |\gu|_2\|_1\right)}
\]
Divide the numerator and denominator by $\|\rho(u - \overline{u})\|_1$, and Lemma~\ref{lem:overall} follows.
\end{proof}
Now we prove Lemma~\ref{lem:num} and~\ref{lem:denom}.    

\vspace{3 mm}
\subsubsection{Numerator Bound Proof}
We prove the numerator bound in Lemma~\ref{lem:num} by introducing two lemmas, whose proofs we defer to Section~\ref{sec:num-proof}:

\begin{lem}\label{lem:num-inf}

Given any $L$-Lipschitz function $R : \R^d \rightarrow \R^{\geq 0}$ and any function $u$,
\[ 
\| R(x) |\gut(x)|_2 \|_{\infty} \leq \left(\frac{O(d)}{\theta} + 2dL\right) \|u\|_\infty \]
\end{lem}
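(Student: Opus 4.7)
The plan is to differentiate $u_\theta(x) = \int u(x - \theta R(x) y)\, \phi(y)\, dy$ with respect to $x$, and then integrate by parts in the $y$-variable to move the gradient off of $u$ and onto the mollifier $\phi$, so that the resulting estimate depends only on $\|u\|_\infty$ rather than on $|\nabla u|$.

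Setting $w(x,y) := x - \theta R(x) y$, the chain rule with Jacobian $J_x w = I - \theta\, y (\nabla R(x))^{T}$ gives
\[
\nabla_x u_\theta(x) \;=\; \int \nabla u(w)\, \phi(y)\, dy \;-\; \theta\, \nabla R(x) \int \big(y \cdot \nabla u(w)\big)\, \phi(y)\, dy.
\]
The first integral is what appears in standard fixed-radius mollification; the second is an extra term that shows up precisely because the mollification radius $\theta R(x)$ depends on $x$ and therefore picks up a factor of $\nabla R$.

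To transfer the gradient off of $u$, I will use the identity $\nabla u(w) = -(\theta R(x))^{-1} \nabla_y u(w)$ and integrate by parts in $y$; boundary terms vanish because $\phi$ is supported in $B(0,1)$. The first integral becomes $(\theta R(x))^{-1} \int u(w)\, \nabla\phi(y)\, dy$. For the second, integration by parts of each $y_j \partial_{y_j}[u(w)]$ yields $-u(w)(\phi + y_j \partial_{y_j}\phi)$, and summing over the $d$ coordinates produces a factor of $d$, giving $(\theta R(x))^{-1} \int u(w)\,\big(d\,\phi(y) + y \cdot \nabla\phi(y)\big)\, dy$. Combining the two pieces, multiplying by $R(x)$, taking absolute values, and using $|u(w)| \leq \|u\|_\infty$, $|y| \leq 1$ on $\mathrm{supp}(\phi)$, $\int \phi = 1$, and the Lipschitz bound $|\nabla R(x)| \leq L$ produces
\[
R(x)\,|\nabla u_\theta(x)| \;\leq\; \|u\|_\infty \left( \frac{\|\nabla\phi\|_1}{\theta} \;+\; L d \;+\; L\, \|\nabla\phi\|_1 \right).
\]
Fixing once and for all a mollifier $\phi$ with $\|\nabla\phi\|_1 \leq d$ (this is essentially sharp for a mollified uniform density on $B(0,1)$, whose BV norm equals $d$ because the surface area of $S^{d-1}$ is $d$ times the volume of $B(0,1)$) and taking the supremum over $x$ yields the claimed estimate.

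The only real subtlety is the extra integral produced by $\nabla R(x)$: it has no analogue in fixed-radius mollifier proofs, and it is the sole source of the $2dL$ contribution. Getting the correct factor of $d$ out of the coordinate-wise integration by parts, and then absorbing $|\nabla R|$ using the Lipschitz bound together with $|y| \leq 1$, is the step that has to be done carefully; everything else is routine bookkeeping once that is in hand.
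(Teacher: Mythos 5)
Your proposal is correct and follows essentially the same route as the paper's proof: apply the chain rule to get the Jacobian $I - \theta\,\nabla R(x)\otimes y$, rewrite $\nabla u(w)$ as $-(\theta R(x))^{-1}\nabla_y[u(w)]$, integrate by parts onto $\phi$, and bound everything by $\|u\|_\infty$, $|\nabla R|\le L$, $|y|\le 1$, and $\|\nabla\phi\|_1 = O(d)$. In fact you are slightly more careful than the paper's displayed computation, which drops the divergence term $d\,\phi(y)$ arising from $\partial_{y_j}(y_j\phi)$; keeping it, as you do, only changes the dimension-insensitive constants (and your claim $\|\nabla\phi\|_1\le d$ should really be ``arbitrarily close to $d$,'' e.g.\ $\le 2d$ as in the paper, which is harmless).
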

\begin{lem}\label{lem:num-one}
For $\rho$ and $L$-Lipschitz $R$ where $\rho R$ doesn't change by more than a factor of $4$ on a ball of radius $R$, and any $\theta$ where $\theta L \leq 1/2$:
\[ 
\| \rho R |\gut|_2 \|_1 \leq 6 \| \rho R | \gu|_2 \|_1 
\] 
\end{lem}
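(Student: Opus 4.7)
The plan is to bound $|\gut(x)|_2$ pointwise by a weighted average of $|\gu|_2$ at nearby points, then integrate against $\rho R$, and finally change variables so that $\gu$ is evaluated at the integration variable. Writing $\ut(x) = \int u(f_y(x))\,\phi(y)\,dy$ with $f_y(x) := x - \theta R(x)\,y$, differentiation under the integral and the chain rule give
\[ \gut(x) = \int J_{f_y}(x)^{\!\top}\,(\gu)(f_y(x))\,\phi(y)\,dy, \]
where $J_{f_y}(x) = I - \theta\,y\,(\grad R(x))^{\!\top}$ is a rank-one perturbation of the identity. Since $|y|\le 1$ on the support of $\phi$ and $|\grad R|\le L$ almost everywhere, its operator norm is at most $1+\theta L \le 3/2$, so
\[ |\gut(x)|_2 \;\le\; (1+\theta L)\int |\gu(f_y(x))|_2\,\phi(y)\,dy. \]

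Multiplying by $\rho(x) R(x)$, integrating in $x$, and applying Fubini reduces the task to bounding, for each fixed $y$ with $|y|\le 1$, the integral $\int \rho(x) R(x)\,|\gu(f_y(x))|_2\,dx$ by a constant multiple of $\|\rho R|\gu|_2\|_1$. I would do this by changing variables to $z = f_y(x)$. The Jacobian determinant is $\det Df_y(x) = 1 - \theta\,y\cdot\grad R(x) \in [1-\theta L,\,1+\theta L]\subset[1/2,\,3/2]$, so $Df_y$ is nonsingular and $dx \le 2\,dz$ under the substitution. To legitimize this as a global change of variables I would verify that $f_y:\R^d\to\R^d$ is a bijection: injectivity follows from the contraction estimate $|f_y(x_1)-f_y(x_2)-(x_1-x_2)| \le \theta L\,|x_1-x_2| \le \tfrac12|x_1-x_2|$, and surjectivity follows because $f_y$ is then both open (by nondegeneracy of $Df_y$) and proper (by the same contraction estimate), so its image is simultaneously open, closed, and nonempty in the connected set $\R^d$.

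Finally, because $|x-z| = \theta R(x)|y| \le R(x)$ (using $\theta \le 1$, implicit in the setting of Lemma~\ref{lem:overall}), the point $z$ lies in the ball of radius $R(x)$ centered at $x$, on which the hypothesis yields $\rho(x) R(x) \le 4\,\rho(z) R(z)$. Substituting into the change-of-variables formula,
\[ \int \rho(x) R(x)\,|\gu(f_y(x))|_2\,dx \;\le\; \int 4\,\rho(z) R(z)\,|\gu(z)|_2 \cdot 2\, dz \;=\; 8\,\|\rho R|\gu|_2\|_1, \]
and combining with $(1+\theta L)$ and $\int \phi(y)\,dy = 1$ gives the inequality up to an explicit constant; the target constant $6$ is a matter of slightly sharper bookkeeping than the crude $\tfrac32\cdot 2\cdot 4 = 12$. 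The main obstacle I anticipate is the global change-of-variables step — showing that $f_y$ is a global diffeomorphism rather than merely a local one, via the contraction/openness/properness triad above — after which every remaining step is the chain rule, Fubini, or a direct invocation of the $\rho R$-variation hypothesis.
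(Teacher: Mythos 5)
Your proposal is correct and follows essentially the same route as the paper: a pointwise chain-rule bound with operator norm $1+\theta L$ (the paper's Claim~\ref{claim:theta-comm}), transferring the weight $\rho R$ from $x$ to $x-\theta R(x)y$ via the factor-$4$ variation hypothesis (Claim~\ref{claim:num-rhoR}), and the change of variables $z = x-\theta R(x)y$ with Jacobian determinant $1-\theta\,y\cdot\grad R(x)\ge 1-\theta L$ (the paper's Lemma~\ref{lem:moll}, which you reprove inline and whose global-bijectivity step you actually justify more carefully than the paper does). Your constant $12$ versus the stated $6$ is immaterial, since the paper explicitly does not optimize such constants.
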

\begin{proof} (of Lemma~\ref{lem:num}, numerator bound)
We know for all $a, b$ that $\|ab\|_1 \leq \|a\|_\infty \|b\|_1$. Set $a := R |\gut|_2$ and $b := \rho R |\gut|_2$, and apply Lemmas~\ref{lem:num-inf} and~\ref{lem:num-one}.
\end{proof}

\subsubsection{Denominator Bound Proof}

Now we prove the denominator bound in Lemma~\ref{lem:denom}, by introducing two lemmas. The proof of the latter is long, and deferred to Section~\ref{sec:num-proof}. 
\begin{lem} \label{lem:denom-one}
For all $\rho$ and $R$ (no restrictions) and indicator functions $u$
\[
\| \rho (\ut - \overline{\ut})^2 \|_1 
\geq
\frac{1}{4}\|\rho(u-\overline{u}) \|_1 - \| \rho(\ut - u) \|_1 
\]
\end{lem}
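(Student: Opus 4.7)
The plan is to exploit the indicator structure $u \in \{0,1\}$ in order to rewrite both sides of the inequality as ``variance-like'' quantities, and then transfer between $u$ and $\ut$ (and between their $\rho$-averages) using the $L^1$ error $\|\rho(\ut - u)\|_1$ together with the triangle inequality.

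Write $V := \int \rho$. Because $u$ is an indicator, a direct computation gives $\int \rho (u-\overline u)^2 = V\overline u(1-\overline u) = \tfrac12 \|\rho(u - \overline u)\|_1$, so the target right-hand side is essentially a fraction of $V\overline u(1-\overline u)$ minus the error term. For the left-hand side I would use the identity $\ut^2 = \ut - \ut(1-\ut)$ to obtain
\[
\int \rho (\ut - \overline{\ut})^2 \;=\; V\overline{\ut}(1-\overline{\ut}) \;-\; \int \rho\, \ut(1-\ut),
\]
and then handle the two terms on the right separately. For the first, I would use the factorization
\[
V\overline{\ut}(1-\overline{\ut}) - V\overline u(1-\overline u) = V(\overline{\ut} - \overline u)(1 - \overline{\ut} - \overline u),
\]
combined with $V\,|\overline{\ut} - \overline u| = |\int \rho(\ut - u)| \le \|\rho(\ut - u)\|_1$ and the bound $|1 - \overline{\ut} - \overline u| \le 1$, to get $V\overline{\ut}(1-\overline{\ut}) \ge V\overline u(1-\overline u) - \|\rho(\ut - u)\|_1$. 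For the second, the key pointwise observation is that whenever $u \in \{0,1\}$ and $\ut \in [0,1]$, one has $\ut(1-\ut) \le |\ut - u|$ (verify the two cases $u=0$ and $u=1$ separately), so that $\int \rho\, \ut(1-\ut) \le \|\rho(\ut - u)\|_1$.

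Assembling these ingredients gives the intermediate estimate $\int \rho(\ut - \overline{\ut})^2 \ge \tfrac12 \|\rho(u - \overline u)\|_1 - 2\,\|\rho(\ut - u)\|_1$, which is slightly stronger than the lemma when the error is small. To land exactly on the stated constants $\tfrac14$ and $1$, I would finish with a short dichotomy: if $\|\rho(\ut - u)\|_1 \le \tfrac14 \|\rho(u - \overline u)\|_1$, the intermediate estimate algebraically implies the claim; otherwise the right-hand side of the lemma is nonpositive while the left-hand side is nonnegative, so the claim holds trivially. I do not expect a serious obstacle; the one nontrivial choice is to decompose $\ut^2$ as $\ut - \ut(1-\ut)$ rather than expanding $(\ut - \overline{\ut})^2$ directly, so that the correction term has a definite sign and is pointwise dominated by $|\ut - u|$.
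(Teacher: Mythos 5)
Your proof is correct, and it takes a genuinely different route from the paper's. The paper argues in the weighted $L^2$ norm: it applies the triangle inequality to write $\|\sqrt{\rho}(\ut-\overline{\ut})\|_2 \geq \|\sqrt{\rho}(u-\overline{u})\|_2 - \|\sqrt{\rho}((\ut-\overline{\ut})-(u-\overline{u}))\|_2$, uses the elementary implication $a\ge b-c \Rightarrow a^2\ge \tfrac12 b^2-c^2$, controls the error term via the variance-minimizing property of the $\rho$-mean ($\int\rho(f-\overline f)^2\le\int\rho f^2$ with $f=\ut-u$) together with $\|\ut-u\|_\infty\le 1$, and converts $\int\rho(u-\overline u)^2$ into $\tfrac12\|\rho(u-\overline u)\|_1$ exactly as you do; this lands on the constants $\tfrac14$ and $1$ in one chain with no case analysis. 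You instead work with exact identities: the Bernoulli-type decomposition $\int\rho(\ut-\overline{\ut})^2=V\overline{\ut}(1-\overline{\ut})-\int\rho\,\ut(1-\ut)$, the factorization of $V\overline{\ut}(1-\overline{\ut})-V\overline u(1-\overline u)$ as $V(\overline{\ut}-\overline u)(1-\overline{\ut}-\overline u)$, and the pointwise bound $\ut(1-\ut)\le|\ut-u|$, all of which rely on $0\le\ut\le1$ — the same convex-averaging fact about the mollification of an indicator that the paper needs for $\|\ut-u\|_\infty\le1$, so no extra hypotheses are smuggled in. Your intermediate estimate $\int\rho(\ut-\overline{\ut})^2\ge\tfrac12\|\rho(u-\overline u)\|_1-2\|\rho(\ut-u)\|_1$ is sharper than the stated bound precisely in the regime that matters downstream (small mollification error), but it does not imply the stated constants unconditionally, so your closing dichotomy (when $\|\rho(\ut-u)\|_1>\tfrac14\|\rho(u-\overline u)\|_1$ the right-hand side of the lemma is nonpositive while the left-hand side is nonnegative) is genuinely needed, and you carry it out correctly. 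Either form of the bound suffices for the way the lemma is consumed in Lemma~\ref{lem:denom} and Lemma~\ref{lem:overall}, where the constants are absorbed anyway.
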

\begin{proof} (of Lemma~\ref{lem:denom-one})
We know by triangle inequality:

\[ \| \sqrt{\rho} (\ut - \overline{\ut}) \|_2 \geq \| 
\sqrt{\rho} (u - \overline{u}) \|_2 - \| \sqrt{\rho} \left( 
(\ut - \overline{\ut}) - (u - \overline{u}) \right) \|_2
\]
If $a \geq b - c$, then $a^2 \geq b^2 - 2bc + c^2 \geq b^2/2 - c^2$.  Thus:
\begin{align} \label{eq:ineq0}
\| \sqrt{\rho} (\ut - \overline{\ut}) \|^2_2 \geq \frac{1}{2}\| 
\sqrt{\rho} (u - \overline{u}) \|^2_2 - \| \sqrt{\rho} \left( 
(\ut - \overline{\ut}) - (u - \overline{u}) \right) \|^2_2
\end{align}

First, note that all functions $f$, $ \int \rho (f - \overline{f})^2 \leq \int \rho f^2$. This follows from basic calculus: this can be seen by replacing $\overline{f}$ in the previous computation with a constant $c$ and optimizing for $c$. In particular, this holds if $f = (\ut - u)$ and $\hat{f} = \overline{\ut - u}$   Second, note that $\|\ut - u\|_{\infty} \leq 1$ when $u$ is an indicator function. Therefore:
\begin{align}\label{eq:ineq1}
 \| \sqrt{\rho} \left( 
(\ut - \overline{\ut}) - (u - \overline{u}) \right) \|^2_2 \leq
\|\sqrt{\rho} (\ut - u) \|^2_2  \leq \|\rho (\ut - u) \|_1 \|\ut- u\|_\infty \leq \|\rho (\ut - u) \|_1 
\end{align}

Now let $A$ be the set that $u$ is an indicator function for. Note that $\int_A \rho(u - \overline{u}) = -\int_{A^C} \rho(u - \overline{u})$, which follows since $\int_A \rho(u - \overline{u}) = 0$ by the definition of $\overline{u}$. We note that on $A, u \geq \overline{u}$ and on $A^C, u \leq \overline{u}$, and thus $\int_A \rho(u - \overline{u}) = -\int_{A^C} \rho(u - \overline{u}) = \frac{1}{2} \int |\rho(u - \overline{u})|$. Now notice that $\int \rho (u - \overline{u})^2 = (1 - \overline{u})\int_A \rho(u - \overline{u}) - \overline{u} \int_{A^C} \rho (u - \overline{u}))   = \frac{1}{2} \int |\rho (u - \overline{u})|$. Thus:

\begin{align}\label{eq:ineq2}
 \frac{1}{2} \| \sqrt{\rho} (u - \overline{u})\|_2^2 = \frac{1}{4}\| \rho (u - \overline{u}\|_1 
 \end{align}

Combining Inequalities~\ref{eq:ineq0},~\ref{eq:ineq1}, and~\ref{eq:ineq2} completes the proof of our lemma.
\end{proof}

\begin{lem} \label{lem:denom-two}
For $\rho$ and $R$ where $R$ is $L$-Lipschitz, $\rho R$ doesn't change by more than a factor of $4$ on a ball of radius $R$, and any $\theta$ where $\theta L \leq 1/2$, 
\[ 
\| \rho(\ut - u) \|_1 \leq 2\theta \| \rho R \gu \|_1
\]
\end{lem}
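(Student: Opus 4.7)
\textbf{Proof proposal for Lemma~\ref{lem:denom-two}.}

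The plan is to write $\ut(x) - u(x)$ as a line integral of $\grad u$, take absolute values, and then convert the outer $x$-integral into an integral over the image of $z = x - t\theta R(x) y$. Concretely, for every $x$ and every unit direction $y$ supported in $B(0,1)$,
\[
u(x - \theta R(x) y) - u(x) = -\int_0^1 \theta R(x)\, y \cdot \gu(x - t\theta R(x) y)\, dt,
\]
so the triangle inequality together with $|y| \leq 1$ gives
\[
|\ut(x) - u(x)| \;\leq\; \theta R(x) \int_{B(0,1)} \phi(y) \int_0^1 |\gu(x - t\theta R(x) y)|_2\, dt\, dy.
\]
Multiplying by $\rho(x)$, integrating over $x$, and applying Fubini to move the $(y,t)$ integrals to the outside yields
\[
\|\rho(\ut - u)\|_1 \;\leq\; \theta \int_0^1 \int_{B(0,1)} \phi(y) \left( \int_{\R^d} \rho(x) R(x)\, |\gu(x - t\theta R(x) y)|_2\, dx \right) dy\, dt.
\]

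Next, for fixed $y$ with $|y| \leq 1$ and fixed $t \in [0,1]$, I would make the change of variables $z = z(x) := x - t\theta R(x) y$. The Jacobian is $J_x z = I - t\theta\, y\, (\grad R(x))^T$, so by the matrix-determinant lemma $\det(J_x z) = 1 - t\theta\, y \cdot \grad R(x)$. Since $R$ is $L$-Lipschitz, $|\grad R| \leq L$ almost everywhere, and our hypothesis $\theta L \leq 1/2$ forces $|t \theta\, y \cdot \grad R(x)| \leq 1/2$. Hence $|\det J_x z| \in [1/2, 3/2]$, so the map $x \mapsto z$ is a local diffeomorphism with Jacobian bounded above and below; a standard degree argument (or the proper-map criterion, using that $x \mapsto z(x)$ differs from the identity by a bounded perturbation) then shows it is a global diffeomorphism of $\R^d$ onto itself. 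Therefore $dx \leq 2\, dz$ after the substitution.

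To finish, I would transfer the weight $\rho(x) R(x)$ onto $\rho(z) R(z)$. Because $|x - z| = t\theta R(x)|y| \leq \theta R(x) \leq R(x)$, the point $z$ lies in the ball of radius $R(x)$ about $x$, and the hypothesis that $\rho R$ varies by at most a factor of $4$ on such a ball gives $\rho(x) R(x) \leq 4\, \rho(z) R(z)$. Combining these two bounds,
\[
\int_{\R^d} \rho(x) R(x) |\gu(x - t\theta R(x) y)|_2\, dx \;\leq\; 8 \int_{\R^d} \rho(z) R(z) |\gu(z)|_2\, dz \;=\; 8 \|\rho R\, \gu\|_1,
\]
and plugging back into the displayed inequality above (using $\int_0^1 dt = 1$ and $\int \phi = 1$) yields $\|\rho(\ut - u)\|_1 \leq 8\theta \|\rho R\, \gu\|_1$, which is the claimed bound up to a constant (the paper notes that non-dimensionally sensitive constants are not optimized).

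The main obstacle is the change-of-variables step: one has to verify that $x \mapsto z = x - t\theta R(x) y$ is actually a diffeomorphism of $\R^d$ (not merely a local one) so that one can invert it without losing mass, and one must keep track that the displacement $|x-z|$ stays within the radius on which $\rho R$ is comparable. Both of these follow from the combined assumptions $\theta L \leq 1/2$, $|y| \leq 1$, and the factor-of-$4$ variation of $\rho R$ on balls of radius $R$, but it is precisely this coupling between the radius of mollification and the regularity of $R$ and $\rho R$ that makes the argument work for varying-radius mollification --- whereas the constant-radius case avoids these determinant and comparability issues entirely.
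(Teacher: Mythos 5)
Your proposal is correct and follows essentially the same route as the paper: the paper's proof is exactly the fundamental-theorem-of-calculus line integral, the transfer of the weight $\rho R$ using its bounded variation on balls of radius $R$ (Claim~\ref{claim:denom-utu}), and then the change of variables $z = x - t\theta R(x)y$ with Jacobian determinant $1 - t\theta\, y\cdot\grad R(x)$ controlled via $\theta L \leq 1/2$, which the paper packages separately as Lemma~\ref{lem:moll} and which you simply inline. Your final constant $8\theta$ coincides with what the paper's own computation actually yields (its stated $2\theta$ is an unoptimized/inconsistent constant on the paper's side), so there is no substantive gap.
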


\begin{proof}(of Lemma~\ref{lem:denom}, denominator bound)
This follows immediately from Lemma~\ref{lem:denom-one} and~\ref{lem:denom-two}.
\end{proof}

Therefore, we have proven the Buser inequality in Lemma~\ref{lem:overall}, conditional on proofs for Lemma~\ref{lem:num-inf},~\ref{lem:num-one},~\ref{lem:denom-one}, and~\ref{lem:denom-two}. See Section~\ref{sec:denom-proof} for these proofs.

\subsection{Useful Tools}\label{sec:tools}
\begin{lem}\label{lem:moll} (\textbf{Bounding the $L_1$ norm of a mollified function})
Let $R$ be any $L$-Lipschitz function, and let $f: \R^d \rightarrow \R^{\geq 0}$ be a non-negative function. Then $\frac{1}{1+\theta L}\|f\|_1 \leq \|f_{\theta}\|_1 \leq \frac{1}{1-\theta L}\|f\|_1$ for any $f$ when $\theta L < 1$.
\end{lem}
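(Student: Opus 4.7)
The plan is to evaluate the $L^1$ norm of $f_\theta$ by swapping the order of integration and performing a change of variables in the spatial integral for each fixed mollifier argument. By Fubini (applicable since $f \ge 0$ and $\phi \ge 0$),
\[
\|f_\theta\|_1 = \int_{\R^d} \int_{B(0,1)} f(x - \theta R(x) y)\, \phi(y)\, dy\, dx = \int_{B(0,1)} \phi(y) \int_{\R^d} f(T_y(x))\, dx\, dy,
\]
where $T_y(x) := x - \theta R(x)\, y$. The whole task then reduces to showing that, for every fixed $y \in B(0,1)$, the inner integral $\int_{\R^d} f(T_y(x))\, dx$ lies in $[\|f\|_1/(1+\theta L),\, \|f\|_1/(1-\theta L)]$; integrating against $\phi$, which has unit mass, immediately delivers the lemma.

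The key computation is the Jacobian of $T_y$ for fixed $y$. One has $J_{T_y}(x) = I - \theta\, y\, (\nabla R(x))^T$, a rank-one perturbation of the identity, so by the matrix determinant lemma
\[
\det J_{T_y}(x) = 1 - \theta\, y \cdot \nabla R(x).
\]
Since $R$ is $L$-Lipschitz, $|\nabla R(x)| \le L$ almost everywhere (Rademacher's theorem), and $|y| \le 1$ forces $\det J_{T_y}(x) \in [1 - \theta L,\; 1 + \theta L]$. Once the change of variables $z = T_y(x)$ is justified, this gives
\[
\int_{\R^d} f(T_y(x))\, dx = \int_{\R^d} \frac{f(z)}{|\det J_{T_y}(T_y^{-1}(z))|}\, dz \in \left[\frac{\|f\|_1}{1+\theta L},\; \frac{\|f\|_1}{1-\theta L}\right],
\]
which is precisely the two-sided bound needed.

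The main obstacle is the regularity of $T_y$: a priori $R$ is merely Lipschitz, so $\nabla R$ exists only almost everywhere, and one must justify both the injectivity of $T_y$ and the validity of the change-of-variables formula for a non-smooth map. Injectivity is straightforward because the map $x \mapsto \theta R(x)\, y$ has Lipschitz constant at most $\theta L < 1$, so $T_y$ is a strict Lipschitz perturbation of the identity and is therefore a bi-Lipschitz bijection of $\R^d$ onto itself (this also gives the existence of $T_y^{-1}$ used above). For the change-of-variables formula I would take one of two routes: either invoke the area formula for Lipschitz maps (which converts the a.e.-defined Jacobian into the correct density once injectivity is in hand), or first prove the lemma for smooth $R$, where everything is classical, and then approximate a general $L$-Lipschitz $R$ by its standard mollifications $R_\varepsilon$, which remain $L$-Lipschitz, and pass $\varepsilon \to 0$ via dominated convergence to transfer the inequalities to the general case.
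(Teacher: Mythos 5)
Your proposal follows essentially the same route as the paper's proof: Fubini to fix the mollifier variable $y$, the change of variables $z = x - \theta R(x)y$ with Jacobian determinant $1 - \theta\, y \cdot \nabla R(x)$ via the matrix determinant lemma, and the bound $\det \in [1-\theta L,\, 1+\theta L]$ from $|y|\le 1$ and the Lipschitz bound on $R$. Your added care about the merely-Lipschitz case (Rademacher, bi-Lipschitz invertibility from the contraction perturbation, and the area formula or smooth approximation) is a welcome tightening of a point the paper glosses over by simply invoking the inverse function theorem, but it does not change the underlying argument.
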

We note that in the case of constant $R$, the inequalities become equality. 
We defer the proof to Appendix~\ref{app:tool-proof}
\begin{lem} \label{lem:moll-bound}
There exists a smooth mollifier $\phi(y)$ supported on the unit ball where $\int \phi(y) = 1$ and $\int_{y \in \R^d} |\grad \phi(y)|_2 dy  \leq 2d$.
\end{lem}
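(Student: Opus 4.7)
The plan is to exhibit an explicit radial, non-increasing smooth mollifier whose gradient is concentrated near the boundary of the unit ball, so that the factor $r^{d-1}$ appearing in polar coordinates is bounded by $1$ on the support of the derivative. Fix $\epsilon := 1 - 2^{-1/d}$ and let $\chi_\epsilon : [0, \infty) \to [0, 1]$ be a smooth, non-increasing function with $\chi_\epsilon \equiv 1$ on $[0, 1-\epsilon]$ and $\chi_\epsilon \equiv 0$ on $[1, \infty)$ (standard smooth bumps of this form exist). Define
\[
\phi(y) \;:=\; c_\epsilon\, \chi_\epsilon(|y|),
\]
where $c_\epsilon > 0$ is the unique constant making $\int_{\R^d} \phi = 1$. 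Then $\phi$ is $C^\infty$, radially symmetric, non-negative, and supported in $B(0,1)$, so only the gradient bound remains.

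The first step is to control $c_\epsilon$. Writing the normalization in polar coordinates and letting $\omega_{d-1}$ denote the surface area of the unit sphere in $\R^d$,
\[
1 \;=\; c_\epsilon\, \omega_{d-1} \int_0^1 \chi_\epsilon(r)\, r^{d-1}\, dr \;\geq\; c_\epsilon\, \omega_{d-1} \int_0^{1-\epsilon} r^{d-1}\, dr \;=\; c_\epsilon\, \omega_{d-1}\, \frac{(1-\epsilon)^d}{d},
\]
so $c_\epsilon \leq d / \bigl(\omega_{d-1}(1-\epsilon)^d\bigr)$. By the choice of $\epsilon$, $(1-\epsilon)^d = 1/2$, and hence $c_\epsilon \leq 2d/\omega_{d-1}$.

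For the gradient, since $\phi$ is radial, $|\nabla \phi(y)|_2 = c_\epsilon \, |\chi_\epsilon'(|y|)|$, and $\chi_\epsilon'$ is supported in $[1-\epsilon, 1]$. Therefore
\[
\int_{\R^d} |\nabla \phi(y)|_2\, dy \;=\; c_\epsilon\, \omega_{d-1} \int_{1-\epsilon}^1 |\chi_\epsilon'(r)|\, r^{d-1}\, dr \;\leq\; c_\epsilon\, \omega_{d-1} \int_{1-\epsilon}^1 |\chi_\epsilon'(r)|\, dr \;=\; c_\epsilon\, \omega_{d-1},
\]
using $r^{d-1} \leq 1$ on $[1-\epsilon, 1]$ and $\int_{1-\epsilon}^1 |\chi_\epsilon'| = \chi_\epsilon(1-\epsilon) - \chi_\epsilon(1) = 1$ by monotonicity. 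Combining this with the bound on $c_\epsilon$ yields $\int |\nabla \phi|_2\, dy \leq d/(1-\epsilon)^d = 2d$, as required.

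There is no real obstacle here; the proof is essentially bookkeeping in polar coordinates. The only point worth highlighting is why the factor of $2$ is necessary: if one tried to use a mollifier supported uniformly on the whole ball (so that $r^{d-1}$ varies over its full range on the gradient's support), the bound would degrade by a factor depending on $d$ from the small-$r$ region. Concentrating $\chi_\epsilon'$ in the thin shell $[1-\epsilon, 1]$ is precisely what avoids this, and choosing $\epsilon$ as above is the tightest trade-off between keeping $c_\epsilon$ small (which requires $\epsilon$ small) and keeping $r^{d-1}$ close to $1$ on the gradient's support (which is automatic once $\epsilon < 1$).
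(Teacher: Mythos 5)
Your construction is correct, and it is essentially the same approach the paper takes: the paper's proof simply asserts (as folklore) that a smooth approximation of the indicator of the unit ball, normalized to integrate to $1$, works. You have filled in the quantitative bookkeeping the paper omits -- choosing the plateau radius with $(1-\epsilon)^d = 1/2$ so that $c_\epsilon \leq 2d/\omega_{d-1}$ and the polar-coordinate gradient integral comes out to at most $2d$ -- and your $\phi$ is radial, radially non-increasing, and vanishes on the boundary, so it satisfies all the properties the paper later uses (Lemma~\ref{lem:weighted-moll-int} and the integration by parts in Lemma~\ref{lem:num-inf}).
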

\begin{proof}
This fact about mollifiers is known in the folklore, and one such mollifier can be constructed by letting $\phi$ be smooth approximations of the indicator function on the unit ball, scaled so that the integral of $\phi$ is $1$. 
\end{proof}

\begin{lem} \label{lem:weighted-moll-int}
For any differentiable radially symmetric $\phi:\R^d \rightarrow \R^{\geq 0}$ satisfying $\int \phi(y) = 1$ and $\phi$ decreasing radially outwards:
\[ 
    \left|\int_{\R^d} f(y) \grad \phi(y)dy \right|  \leq O(d)\|f\|_\infty
\] 
for any $f : \R^d \rightarrow \R$.
\end{lem}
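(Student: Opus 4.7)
My plan is to combine the triangle inequality for vector-valued integrals with the gradient $L^1$ bound already supplied by Lemma~\ref{lem:moll-bound}. First, I would observe that for any measurable vector field $g : \R^d \to \R^d$, the triangle inequality gives $\left|\int g \, dy\right| \leq \int |g|_2 \, dy$; applied to $g(y) = f(y) \grad \phi(y)$ and using $|f(y)| \leq \|f\|_\infty$ pointwise, this yields
\[ \left|\int_{\R^d} f(y) \grad \phi(y) \, dy \right| \leq \|f\|_\infty \int_{\R^d} |\grad \phi(y)|_2 \, dy. \]

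Second, I would invoke Lemma~\ref{lem:moll-bound}, which constructs a smooth mollifier $\phi$ supported on the unit ball that is radially symmetric, nonnegative, radially decreasing, integrates to $1$, and satisfies $\int_{\R^d} |\grad \phi(y)|_2 \, dy \leq 2d$. Substituting this into the previous display immediately gives $\left|\int f \grad \phi \, dy\right| \leq 2d \, \|f\|_\infty = O(d)\|f\|_\infty$, which is the desired bound.

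The crux of the argument is therefore entirely contained in Lemma~\ref{lem:moll-bound}: the $O(d)$ factor reflects the total-variation cost of a smoothed indicator of the unit ball, and no cancellation from the spherical integral of $f$ is needed---nor is any available in general, since the only assumption on $f$ is the $L^\infty$ bound. The one subtle point I would flag is that $\int |\grad \phi|_2 \leq O(d)$ is not uniform over the whole class of radial decreasing mollifiers (a $\phi$ whose mass is concentrated near the origin can have $\int |\grad \phi|_2$ arbitrarily large); so in the proof the lemma should be applied with the specific mollifier fixed in Section~\ref{sec:notation}, namely the one constructed in Lemma~\ref{lem:moll-bound}, which is also how it is used elsewhere in the appendix.
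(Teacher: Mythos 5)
Your proof is essentially identical to the paper's: the paper also deduces this lemma directly from Lemma~\ref{lem:moll-bound}, with the triangle-inequality step $\left|\int f\,\grad\phi\right| \leq \|f\|_\infty \int |\grad\phi|_2$ left implicit. Your closing caveat is also correct and worth keeping in mind --- the $O(d)$ bound does not hold uniformly over all radially decreasing mollifiers (a bump concentrated near the origin has arbitrarily large $\int|\grad\phi|_2$), and the paper indeed only ever applies the lemma to the specific unit-ball mollifier constructed in Lemma~\ref{lem:moll-bound}.
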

\begin{proof}
This follows directly from Lemma~\ref{lem:moll-bound}.
\end{proof}
\subsection{Proofs for Numerator Bound Lemmas} \label{sec:num-proof}
\begin{proof} (Of Lemma~\ref{lem:num-inf})
Recall the multivariable integration by parts formula:
\[ 
\int_V (\grad f) g = \int_V \left(\grad(fg) - f \grad(g)\right) = \oint_S fg\hat{n} - \int_V f \grad(g)
\]
where $S$ is the surface of $V$ and  $\hat{n}$ is the outward unit normal of $S$. The key to our proof is use of this formula. (Proof continued on next page).

\begin{align*} 
R&(x) |\grad \ut (x)|_2   = R(x) \left| \grad_x \left( \int_{y \in B(0,1)} u(x - \theta R(x) y) \phi(y) dy \right) \right|_2 \text{\qquad (def. of $u_\theta$)}
\\
& = R(x) \left|\int_{y \in B(0,1)} \grad_x (u(x-\theta R(x) y))\phi(y) dy\right|_2  \text{\qquad (linearity of integral)}
\\
& = R(x) \left|\int_y (I - \theta \grad R(x) \otimes y)\cdot \grad u(x - \theta R(x) y) \phi(y)\right|_2  dy
\\ \nonumber
& \text{\qquad (Chain rule, using $J_x(x - \theta R(x)y) = I - \theta \grad R(x) \otimes y)$}
\\ 
& = R(x) \left|\int_y (I - \theta \grad R(x) \otimes y)\cdot  \frac{\theta R(x) \grad u(x - \theta R(x) y)}{\theta R(x)} \phi(y)\right|_2  dy
\\
& = R(x) \left|\int_y (I - \theta \grad R(x) \otimes y)\cdot \frac{-\grad_y (u(x - \theta R(x) y))}{\theta R(x)} \phi(y)\right|_2  dy
\\
\nonumber 
& \text{\qquad \qquad (from chain rule: $\grad_y(u(x-\theta R(x) y) = -\theta R(x) \grad u(x-\theta R(x)y)$)}
\\
& = \left|\int_y (I - \theta \grad R(x) \otimes y)\cdot \frac{-\grad_y u(x - \theta R(x) y)}{\theta} \phi(y)\right|_2  dy
\\
& = \left|\int_y (I - \theta \grad R(x) \otimes y)\cdot \frac{u(x - \theta R(x) y)}{\theta} \grad \phi(y)\right|_2
\\
& \nonumber \text{\qquad \qquad (Integrate by parts, since $\phi(y)$ vanishes on the surface of $B(0, 1)$)}
\\
& = \left|\frac{\int_y u(x - \theta R(x)) \grad \phi( y) dy}{\theta}  - \int_y (\grad R(x) \otimes y) \cdot u(x - \theta R(x)  y) \grad \phi(y) dy \right|_2
\\
& \leq \frac{O(d)\|u\|_\infty}{\theta} + \left|\int_y (\grad R(x) \otimes y) \cdot u(x - \theta R(x)  y) \grad \phi(y) dy \right|_2 \text{\qquad (By Lemma~\ref{lem:weighted-moll-int})}
\\
& \leq \frac{O(d)\|u\|_\infty}{\theta} + L \int_y \left|u(x-\theta R(x) y )| |\grad \phi(y)|_2 dy \right|_2
\\ \nonumber 
& \text{\qquad (When $\grad \phi(y) \not= 0,$ then $|y|_2 \leq 1$ and operator norm of $\grad R(x) \otimes y \leq L$)}
\\
& \leq \left(\frac{O(d)}{\theta} + 2dL\right)\|u\|_\infty \text{\qquad (By Lemma~\ref{lem:moll-bound}, bounding $\int_y |\grad \phi(y)|_2$ with $2d$).}
\end{align*}
\end{proof}

This completes our proof of Lemma~\ref{lem:num-inf}. Now we prove Lemma~\ref{lem:num-one} via a series of claims:
\begin{claim}\label{claim:theta-comm}

\[ |\gut(x)|_2 \lesssim (|\gu|_2)_\theta(x)\]
(for $L$-Lipschitz function $R$, for all $u$ and all $x$). 
\end{claim}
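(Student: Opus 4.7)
The plan is to differentiate $u_\theta(x)=\int u(x-\theta R(x)y)\phi(y)\,dy$ under the integral sign and estimate the resulting Jacobian factor. By the chain rule, with $f(x):=x-\theta R(x)y$, we have $J_x(f) = I - \theta\,y\otimes \nabla R(x)$, so
\[
\nabla_x u(x-\theta R(x)y) \;=\; J_x(f)^{\!\top}\,\nabla u(x-\theta R(x)y) \;=\; \bigl(I - \theta\,\nabla R(x)\otimes y\bigr)\,\nabla u(x-\theta R(x)y),
\]
matching the convention used just above the claim (inside the proof of Lemma~\ref{lem:num-inf}). After moving the $x$-gradient inside the integral by dominated convergence, I would pass to the Euclidean norm via the triangle inequality for vector-valued integrals, obtaining
\[
|\nabla u_\theta(x)|_2 \;\leq\; \int \bigl\|I - \theta\,\nabla R(x)\otimes y\bigr\|_{\mathrm{op}}\,\bigl|\nabla u(x-\theta R(x)y)\bigr|_2\,\phi(y)\,dy.
\]

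The next step is to bound the operator-norm factor uniformly in $y$. The correction $\theta\,\nabla R(x)\otimes y$ is a rank-one matrix with operator norm $\theta\,|\nabla R(x)|_2\,|y|_2$. Since $R$ is $L$-Lipschitz we have $|\nabla R(x)|_2\le L$, and since $\phi$ is supported on $B(0,1)$ we may restrict to $|y|_2\le 1$. Under the paper's standing assumption $\theta L \le 1/2$ (used throughout the numerator/denominator lemmas), this gives $\|I-\theta\,\nabla R(x)\otimes y\|_{\mathrm{op}} \le 1+\theta L \le 3/2$, so we can pull a constant out:
\[
|\nabla u_\theta(x)|_2 \;\leq\; \tfrac{3}{2}\int \bigl|\nabla u(x-\theta R(x)y)\bigr|_2\,\phi(y)\,dy \;=\; \tfrac{3}{2}\,\bigl(|\nabla u|_2\bigr)_\theta(x),
\]
which is exactly the claimed $\lesssim$ inequality.

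The main obstacle, such as it is, is really bookkeeping rather than mathematics: one must apply the chain rule carefully so that the Jacobian lands in the correct transpose convention (the paper writes the multiplier as $I-\theta\,\nabla R(x)\otimes y$), and one must be comfortable passing $\nabla_x$ through the integral against a smooth compactly supported mollifier, which is standard. Nothing about $\rho$ or the $\rho R$-varies-by-a-factor-of-$4$ hypothesis enters here; this is a purely kinematic estimate about how mollification with a spatially varying radius interacts with the gradient, and the entire content is that the variable-radius correction is an $O(\theta L)=O(1)$ perturbation of ordinary convolution.
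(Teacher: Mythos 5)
Your proposal is correct and follows essentially the same route as the paper: differentiate under the integral, apply the chain rule to get the multiplier $I - \theta\,\grad R(x)\otimes y$, and bound its operator norm by $1+\theta L$ using $|\grad R|_2\le L$ and $|y|_2\le 1$ on the support of $\phi$, yielding $|\gut(x)|_2 \le (1+\theta L)\,(|\gu|_2)_\theta(x)$. Your explicit remark that $\theta L\le 1/2$ is what turns $1+\theta L$ into a universal constant is a point the paper leaves implicit in the $\lesssim$, but the arguments are otherwise identical.
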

\begin{claim}\label{claim:num-rhoR}
\[ (\rho R |\gut|_2)(x)\lesssim (\rho R|\gu|_2)_\theta(x) \]
(for all x, where we have restrictions on $\rho R$ and $\theta L$.) 
\end{claim}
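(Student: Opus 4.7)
\textbf{Proof plan for Claim~\ref{claim:num-rhoR}.} The strategy is to extract this claim as a short, two-step corollary of Claim~\ref{claim:theta-comm} together with the standing hypothesis that $\rho R$ is essentially constant on balls of radius $R$. First I would unfold Claim~\ref{claim:theta-comm} as
\[
|\gut(x)|_2 \lesssim (|\gu|_2)_\theta(x) = \int_{B(0,1)} |\gu(x - \theta R(x) y)|\, \phi(y)\, dy,
\]
and then multiply both sides by the $y$-independent factor $\rho(x) R(x)$, pulling it inside the integral to obtain
\[
\rho(x) R(x)\, |\gut(x)|_2 \lesssim \int_{B(0,1)} \rho(x) R(x)\, |\gu(x - \theta R(x) y)|\, \phi(y)\, dy.
\]

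The second step is to replace the prefactor $\rho(x) R(x)$ inside the integrand by $\rho R(x - \theta R(x) y)$, at the cost of a universal constant. To justify this swap, note that $\phi$ is supported on $B(0,1)$, and $\theta \leq 1$ is part of the ambient hypothesis of Lemma~\ref{lem:overall} (and implicit in the regime in which Lemma~\ref{lem:num} is applied). Consequently, for every $y$ that contributes to the integral we have $|{-}\theta R(x) y|_2 \leq \theta R(x) \leq R(x)$, so $x - \theta R(x) y \in B(x, R(x))$. The running hypothesis that $\rho R$ varies by at most a factor of $4$ on $B(x, R(x))$ then yields the pointwise bound $\rho(x) R(x) \leq 4\, \rho R(x - \theta R(x) y)$ under the integral. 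Substituting and recognizing the resulting expression as $(\rho R |\gu|_2)_\theta(x)$ delivers the claim.

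I do not expect a real obstacle here: all the hard analytic work (the Jacobian $I - \theta \grad R(x) \otimes y$ appearing from the chain rule, the $L$-Lipschitz control on $R$, the integration against the mollifier) has already been absorbed into Claim~\ref{claim:theta-comm}, and the only new ingredient is the pointwise comparability of $\rho R$, which is a direct hypothesis. The one spot that warrants care is confirming $\theta \leq 1$ so that the mollification ball around $x$ sits inside the comparability ball $B(x, R(x))$; if one wished to make the statement robust under only $\theta L \leq 1/2$ (without $\theta \leq 1$), one would instead argue through a chain of overlapping comparability balls using the $L$-Lipschitzness of $R$, but under the hypotheses in force this refinement is unnecessary.
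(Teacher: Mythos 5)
Your proposal is correct and is essentially the paper's own argument: the paper likewise invokes Claim~\ref{claim:theta-comm}, pulls the factor $\rho R(x)$ inside the mollification integral, and trades it for $\rho R(x-\theta R(x)y)$ using the pointwise comparability of $\rho R$ on the ball around $x$ (the paper writes the constant as $\max_{y\in B(0,1)} \rho R(x)/\rho R(x-\theta R y) \leq 2/(1-\theta L) \leq 4$, which is the same swap you perform with the factor-$4$ hypothesis and $\theta \leq 1$). No gap beyond the $\theta \leq 1$ point you already flag, which indeed holds in the regime where the lemma is applied.
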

\begin{claim} \label{claim:num-one}
\[ 
\|(\rho R|\gu|_2)_\theta \|_1 \lesssim \| \rho R |\gu|_2\|_1
\] (for all $R$ with $L$-Lipschitz $R$ and $\theta L \leq \frac{1}{2}$. No assumptions on $\rho$).
\end{claim}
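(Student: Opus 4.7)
The plan is to apply Lemma~\ref{lem:moll} directly, as Claim~\ref{claim:num-one} is essentially its specialization to the function $f := \rho R |\gu|_2$. First I would verify the hypotheses: $\rho \geq 0$ by the setup of densities, $R \geq 0$ by the statement of Theorem~\ref{thm:cheeger-buser}, and $|\gu|_2 \geq 0$ trivially, so $f \geq 0$; and $R$ is $L$-Lipschitz with $\theta L \leq 1/2 < 1$ by assumption. Lemma~\ref{lem:moll} then yields
\[
\|f_\theta\|_1 \;\leq\; \frac{1}{1 - \theta L}\|f\|_1 \;\leq\; 2\|f\|_1,
\]
which is precisely the claimed $\|(\rho R |\gu|_2)_\theta\|_1 \lesssim \|\rho R |\gu|_2\|_1$. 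Note that unlike Claims~\ref{claim:theta-comm} and~\ref{claim:num-rhoR}, this claim makes no pointwise assertion and no assumption on $\rho$: the mollification $f_\theta$ is not bounded above by the mollification of the bare integrand in any pointwise sense when $R$ varies, but its \emph{integral} is, which is exactly what we need for the numerator bound.

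If a self-contained argument is preferred, the computation inside Lemma~\ref{lem:moll} proceeds by Tonelli (legal because $f \geq 0$) to swap the order of integration:
\[
\|f_\theta\|_1 \;=\; \int_{\R^d}\!\int_{B(0,1)} f\bigl(x - \theta R(x) y\bigr)\,\phi(y)\,dy\,dx \;=\; \int_{B(0,1)} \phi(y)\!\int_{\R^d} f\bigl(x - \theta R(x) y\bigr)\,dx\,dy.
\]
For each fixed $y \in B(0,1)$ one substitutes $z = x - \theta R(x) y$. The Jacobian is $J_x = I - \theta \grad R(x)\otimes y$, a rank-one perturbation of the identity whose determinant equals $1 - \theta\, y^{\!\top}\!\grad R(x)$; since $|y|\leq 1$ and $|\grad R|\leq L$, this determinant lies in $[1 - \theta L,\, 1 + \theta L]$, and in particular is bounded below by $1/2$ under $\theta L \leq 1/2$. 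Inverting the substitution and using $\int \phi = 1$ gives the same $\frac{1}{1-\theta L}$ factor.

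The main (and essentially only) obstacle is justifying that $x \mapsto x - \theta R(x)y$ is a bijection of $\R^d$ onto $\R^d$, so that the change of variables is valid globally. This follows from the Hadamard global inverse function theorem: under $\theta L < 1$ the derivative $I - \theta \grad R(x)\otimes y$ is uniformly invertible, and the map is a bi-Lipschitz diffeomorphism. Once this is granted, everything else is a one-line bound, and the $\lesssim$ holds with an explicit constant of $2$.
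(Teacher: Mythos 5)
Your proposal is correct and matches the paper's own argument: the paper proves Claim~\ref{claim:num-one} by directly applying Lemma~\ref{lem:moll} to the nonnegative function $\rho R |\gu|_2$, exactly as you do, and your self-contained sketch simply reproduces the paper's proof of Lemma~\ref{lem:moll} itself (Tonelli, the change of variables $z = x - \theta R(x)y$, and the Jacobian determinant bound via $\theta L < 1$). No gap; the extra care you take over global invertibility of the substitution is a reasonable strengthening of the paper's brief remark that invertibility and surjectivity hold when $\theta L < 1$.
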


First, we show how to use these claims to prove our numerator bound. Then we provide full proofs of these claims.
\begin{proof} (of Lemma~\ref{lem:num-one}, numerator bound)
Combine Claim~\ref{claim:num-rhoR} and Claim~\ref{claim:num-one}.\footnote{Note that the weight we attached to $\gut$ was $\rho R$ in these claims, but we could have used any monomial in $\rho$ and $R$ (at the cost of some constant factors)}.

\end{proof}

\begin{proof} (of Claim~\ref{claim:theta-comm}, used in proving Claim~\ref{claim:num-rhoR})
\begin{align*}
    & |\gut(x)|_2 
    \\ &= \left|\grad_x \int_y u(x - \theta R(x) y)\phi(y) dy\right|_2 
    \text{\qquad (Definition of $u_\theta$)}
    \\
    &= \left|\int_y \grad_x(u(x-\theta R(x)y)\phi(y) dy\right|_2 \text{\qquad (Linearity of gradient)}
    \\
    &= \left|\int_y (I - \theta \grad R(x) \otimes y) \cdot \gu(x - \theta R(x)y)\phi(y) dy \right|_2 
    \text{\qquad (Multivariable chain rule)}
    \\
    &\leq (1 + \theta L) \int_y \left|\gu(x - \theta R(x) y)  \right|_2 \phi(y)
    \\
    & \text{\qquad \qquad ($R$ is $L$-Lipschitz, and $|y|_2 \leq 1$ when $\phi(y) \not= 0$)}
    \\
    &= (1 + \theta L)  (|\gu|_2)_{\theta}
\end{align*}
\end{proof}
\begin{proof} (of Claim~\ref{claim:num-rhoR})
\begin{align*}
&(\rho R |\gut|_2)(x) 
\\ 
& \lesssim \rho R(x) (|\gu|_2)_\theta(x) \text{\qquad (by Claim~\ref{claim:theta-comm})}
\\ 
&= \int_y \rho R (x)|(\gu)(x - \theta R(x)y)|_2\phi(y) dy
\\
&\leq \max_{y \in B(0,1)}\left(\frac{\rho R(x)}{\rho R(x - \theta R y)} \right)
\int_y \rho R(x-\theta R y)  |(\gu)(x - \theta R(x)y)|_2\phi(y) dy
\\
&= \max_{y \in B(0,1)}\left(\frac{\rho R(x)}{\rho R(x - \theta R y)}\right)  (\rho R |\gu|_2)_\theta (x)
\\
&\leq \frac{2}{1-\theta L} (\rho R |\gu|_2)_\theta (x)
\end{align*}
\end{proof}

\begin{proof}
This is a direct consequence of Lemma~\ref{lem:moll} applied to the function $\rho R | \gu |_2$
\end{proof}
\subsection{Proofs for Denominator Bound Lemmas}\label{sec:denom-proof}
We prove Lemma~\ref{lem:denom-two}, which we've shown suffices to prove Lemma~\ref{lem:denom}. We do so by introducing an intermediate claim.
\begin{claim} \label{claim:denom-utu}
For $L$-Lipschitz $R$ and $\rho(x)$ varying by no more than a factor of two in balls of radius $R(x)$ centered at $R$, given $\theta L \leq \frac{1}{2}$:
\[ 
\rho(x)(\ut(x)-u(x)) \leq 4 \theta \int_0^1 (\rho R |\gu|_2)_{t \theta} (x) dt
\]
\end{claim}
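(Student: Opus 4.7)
The plan is to express $\ut(x) - u(x)$ as a $y$-average of directional derivatives of $u$ along rays emanating from $x$, and then transfer the weight $\rho(x)R(x)$ from the base point $x$ to the integration point using the two hypotheses on $\rho$ and $R$.

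First, using $\int \phi(y)\,dy = 1$ I would write
\[ \ut(x) - u(x) = \int_{B(0,1)} \bigl[u(x - \theta R(x) y) - u(x)\bigr]\phi(y)\,dy, \]
and apply the fundamental theorem of calculus to $t \mapsto u(x - t\theta R(x) y)$ on $[0,1]$, obtaining $u(x - \theta R(x) y) - u(x) = -\theta R(x)\int_0^1 \gu(x - t\theta R(x) y)\cdot y\,dt$. Substituting, using Fubini to swap the order of integration, and bounding the dot product by $|\gu(\cdots)|_2$ (since $|y|_2 \leq 1$ on $\mathrm{supp}(\phi)$) gives
\[ |\ut(x) - u(x)| \leq \theta R(x) \int_0^1 \int_{B(0,1)} |\gu(x - t\theta R(x)y)|_2\,\phi(y)\,dy\,dt. \]
For indicator $u$, $\gu$ is a boundary measure, but this is standard — I would either smooth $u$ and pass to the limit, or interpret the integral distributionally.

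Next, I would multiply by $\rho(x)$, pull the weight $\rho(x)R(x)$ inside the integral, and bound it pointwise by the corresponding weight at the integration point $x' := x - t\theta R(x) y$. This is where the two hypotheses combine: since $t \in [0,1]$ and $|y|_2 \leq 1$, we have $|x - x'|_2 \leq \theta R(x) \leq R(x)$, so the $\rho$-variation hypothesis yields $\rho(x) \leq 2\rho(x')$; and $L$-Lipschitzness of $R$ together with $\theta L \leq 1/2$ gives $R(x) \leq R(x') + L|x - x'|_2 \leq R(x') + R(x)/2$, hence $R(x) \leq 2 R(x')$. Multiplying, $\rho(x)R(x) \leq 4\rho(x')R(x')$. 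The resulting inner $y$-integral at each fixed $t$ is, by definition of mollification at scale $t\theta$ applied to $\rho R|\gu|_2$, exactly $(\rho R |\gu|_2)_{t\theta}(x)$, which delivers the claimed bound.

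I do not anticipate a serious obstacle. The only subtle point is the coupling between the two hypotheses: one must apply the $\rho$-variation bound on the ball of radius exactly $R(x)$ (not a smaller ball), and simultaneously exploit $\theta L \leq 1/2$ to keep the $R$-distortion at a factor of $2$ rather than something growing in $L$. Together these conspire to produce the constant $4$ on the right-hand side of the claim, matching the hypothesis in Lemma~\ref{lem:denom-two} that $\rho R$ varies by at most a factor of $4$ on the relevant ball.
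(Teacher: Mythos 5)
Your proposal is correct and follows essentially the same route as the paper's proof: expand $\ut - u$ via the mollifier, apply the fundamental theorem of calculus along the ray $t \mapsto x - t\theta R(x)y$, swap the order of integration, transfer the weight $\rho(x)R(x)$ to the evaluation point using the $\rho$-variation hypothesis together with $L$-Lipschitzness of $R$ and $\theta L \leq 1/2$ to get the factor $4$, and recognize the result as $(\rho R|\gu|_2)_{t\theta}(x)$. Your remark about smoothing the indicator $u$ is a reasonable technical aside that the paper glosses over.
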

\begin{proof}
\begin{align*} 
& \rho(x)(u_\theta(x) - u(x))
\\
&= \rho \int_y (u(x - \theta R(x) y) - u(x))\phi(y) dy 
\text{\qquad (definition of $u_\theta$)}
\\
& = \theta \rho \int_y \int_{t \in [0,1]}- R(x)y \cdot \grad u(x - t \theta R(x) y) \phi(y) dt dy
\\
&\text{\qquad \qquad \qquad (Chain rule and fundamental theorem of calculus)}
\\
&= \theta  \rho \int_{t \in [0, 1]} \int_y - R(x) y \cdot \gu(x-t\theta R y)\phi(y) dy dt
\text{\qquad (Swap order of integration)}
\\
&= \theta \rho \int_{t \in [0,1]}\int_y - \frac{ R(x)}{ \rho R (x - t \theta R y)}  \rho R (x -  t\theta R y) \left(y \cdot \gu(x-t \theta R y)\right)\phi(y) dy dt
\\ 
&\leq \theta \max_{y \in B(0,1), t \in [0,1]} \frac{ \rho R (x)}{\rho R(x-t\theta R y)} \left|\int_{t \in [0,1]} \int_y  \rho R (x-t\theta R y )(y \cdot \gu(x - t \theta R y))\phi(y)  dy dt \right|
\\
& \leq \theta \frac{2}{1-\theta L} \left|\int_{t \in [0,1]} \int_y  \rho R  (x-t\theta R y )(y \cdot \gu(x - t \theta R y))\phi(y)  dy dt \right|
\\
& \text{\qquad \qquad \qquad (variation in $\rho$ is bounded in balls of radius $R$, and $R$ is $L$-Lipschitz)}
\\
& \leq 4 \theta \left|\int_{t \in [0,1]} \int_y  \rho R  (x-t\theta R y )(y \cdot \gu(x - t \theta R y))\phi(y)  dy dt \right| \text{\qquad (since $\theta L \leq 1/2$)}
\\
& \leq 4 \theta \int_{t \in [0, 1]} \int_y  \rho R (x - t\theta R y) |\gu(x- t \theta R y) |_2 \phi(y) dy dt \text{\qquad (since $|y|_2 \leq 1$ )}
\\
&  = 4 \theta \int_{t \in [0, 1]} (\rho R |\gu|_2)_{t\theta}(x) dt \text{\qquad (Definition of $f_{t\theta}$ when $f(x) := \rho(x) R(x) |\gu(x)|_2$)}
\end{align*}
\end{proof}

\begin{proof} (of Lemma~\ref{lem:denom-two}, key Lemma for proving the denominator bound)
\begin{align*}
& \| \rho(\ut - u) \|_1
\\
& \leq \left\|4 \theta \int_0^1 (\rho R |\gu|_2)_{t\theta}  \right\|_1 dt \text{\qquad (by Claim~\ref{claim:denom-utu})}
\\
& = 4 \theta \int_0^1 \left\|(\rho R |\gu|_2)_{t\theta} \right\|_1 dt
\\
& \leq 4 \theta \left(\int_0^1 \frac{1}{1-t \theta L} dt \right) \|\rho R|\gu|_2\|_1 \text{\qquad (by Lemma~\ref{lem:moll})}
\\
& 
\leq 4 \theta \frac{1}{1-\theta L}\|\rho R|\gu|_2\|_1
\\
& \leq 8 \theta \|\rho R|\gu|_2\|_1  \text{\qquad (since $\theta L \leq 1/2$)}
\end{align*}
as desired.
\end{proof}

\subsection{Proofs for Useful Tools} \label{app:tool-proof}\label{app:buser-end}
\begin{proof} (of Lemma~\ref{lem:moll}) 

The change of variables formula in multivariable calculus, combined with the inverse function theorem, says that for invertible, surjective functions $g : \R^d \rightarrow \R^d$:
\begin{align}
    \int_{x \in \R^d} f(g(x)) dx = \int_{z \in \R^d} \frac{f(z)}{\det (J_x(g)(g^{-1}(z)))} dz
\end{align}
Now let $g(x) := g_y(x) := x - \theta R(x) y$ where $y$ is a constant. 
Note that the Jacobian $J_x(g)(x) = I - \theta \grad R(x) \otimes y$, and the determinant (by the Matrix determinant lemma) is $1 - y \cdot \theta \grad R(x)$.  We note that $g$ is invertible due to the Jacobian determinant being non-zero when $\theta L < 1$, and it is not difficult to see $g$ must be surjective.

Set $z = x - \theta R(x) y$.
\begin{align}
    & \|f_\theta\|_1 
\\
& = \int_{x \in \R^d} \int_{y \in \R^d} f(x - \theta R(x) y) \phi(y) dx dy
\\
& = \int_{y \in \R^d} \phi(y) \int_{x \in \R^d} f(x - \theta R(x) y) dx dy
\\
& = \int_{y \in \R^d} \phi(y) \int_{z \in \R^d} \frac{f(z)}{1 - y \cdot \theta \grad R(g_y^{-1}(z))}
\\
& \text{\qquad (Multivariable Change of variables and inverse function theorem)}
\\ 
& = \int_{z \in \R^d} f(z) \int_{y \in \R^d} \frac{\phi(y)}{1 - y \cdot \theta \grad R(g_y^{-1}(z))}
\end{align}
which is bounded above and below by $\frac{1}{1+\theta L} \|f\|_1$ and $\frac{1}{1 - \theta L} \|f\|_1$, since:
\begin{itemize}
    \item $\int_{y \in \R^d} \phi(y) = 1$
    \item $|y|_2 \leq 1$ when $\phi(y) \not= 0$, and $|\grad R(x)| \leq L$
    \item $\theta L < 1$ 
\end{itemize} by our initial assumptions on $\phi, R, L,$ and $\theta$.
\end{proof}
\end{appendix}

\section{Special Cases of Weighted Spectral Cuts}\label{app:more-cors}
\begin{cor} \label{cor:lip} If $\rho: \R^d \rightarrow \R$ is a Lipschitz function, then
\[ \Omega(\Phi^2_{\rho, \rho^2}) \leq \lambda_2^{\rho, \rho^3} \leq O(d\Phi_{\rho, \rho^2} + \Phi_{\rho, \rho^2}^2) \]
\end{cor}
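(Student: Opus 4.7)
The plan is to deduce Corollary~\ref{cor:lip} as a direct specialization of Theorem~\ref{thm:cheeger-buser} by choosing the weight function $R$ to be (a constant multiple of) $\rho$ itself. Concretely, if $\rho$ is $K$-Lipschitz, I would set
\[
R(x) := \frac{\rho(x)}{2K},
\]
and then verify both hypotheses of Theorem~\ref{thm:cheeger-buser}: (i) $R$ is $(1/2)$-Lipschitz since $\rho$ is $K$-Lipschitz; and (ii) for any $y$ with $|y - x| \leq R(x) = \rho(x)/(2K)$ one has $|\rho(y) - \rho(x)| \leq K \cdot R(x) = \rho(x)/2$, so $\rho(y) \in [\rho(x)/2,\, 3\rho(x)/2]$, which comfortably satisfies the factor-of-two requirement of Theorem~\ref{thm:cheeger-buser}.

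\textbf{Key computation.} With this choice, $R\rho = \rho^2/(2K)$ and $R^2\rho = \rho^3/(4K^2)$. The definitions of $(m,c)$-isoperimetry and the $(m,s)$ Rayleigh quotient are both linear in the weight in the numerator, so the isoperimetric constant and the fundamental eigenvalue scale cleanly:
\[
\Phi_{\rho,\, R\rho} \;=\; \frac{1}{2K}\,\Phi_{\rho,\rho^2},
\qquad
\lambda_2^{\rho,\, R^2\rho} \;=\; \frac{1}{4K^2}\,\lambda_2^{\rho,\rho^3}.
\]
Plugging these into the Cheeger half of Theorem~\ref{thm:cheeger-buser}, the factors of $K$ cancel and yield $\tfrac{1}{16}\Phi_{\rho,\rho^2}^2 \leq \lambda_2^{\rho,\rho^3}$, which is the $\Omega(\Phi_{\rho,\rho^2}^2) \leq \lambda_2^{\rho,\rho^3}$ lower bound. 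Plugging into the Buser half (with $L = 1/2$) gives
\[
\frac{1}{4K^2}\lambda_2^{\rho,\rho^3} \;\leq\; d \cdot O\!\left(\frac{\Phi_{\rho,\rho^2}}{2K} + \frac{\Phi_{\rho,\rho^2}^2}{4K^2}\right),
\]
and multiplying through by $4K^2$ yields $\lambda_2^{\rho,\rho^3} \leq O(dK\Phi_{\rho,\rho^2} + \Phi_{\rho,\rho^2}^2)$. The Lipschitz constant $K$ is absorbed into the implicit constant in the $O(\cdot)$, giving the stated upper bound.

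\textbf{Obstacle.} There is essentially no technical obstacle here: all of the work is in Theorem~\ref{thm:cheeger-buser}, and the only thing to be careful about is tracking the scaling of the weights and verifying the two-sided bound on $\rho(y)/\rho(x)$ in the small ball, which is why I inserted the factor $1/2$ in the definition of $R$ (if one instead took $R = \rho/K$, one would only get $\rho(y) \in [0, 2\rho(x)]$, which fails the lower bound). The only subtle point worth flagging is that the Lipschitz constant of $\rho$ enters the implicit constant in $O(\cdot)$ but not in $\Omega(\cdot)$, which is consistent with how Corollary~\ref{cor:lip} is stated (the constants may depend on $K$).
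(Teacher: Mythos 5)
Your proposal is correct and follows essentially the same route as the paper, which treats Corollary~\ref{cor:lip} as an immediate specialization of Theorem~\ref{thm:cheeger-buser} with $R$ taken to be a constant multiple of $\rho$ (the paper says $R=\rho/L$; your $R=\rho/(2K)$ is the more careful choice that actually makes the two-sided factor-of-two condition hold, and the Lipschitz constant is absorbed into the implied constants exactly as you describe). The only nitpicks are cosmetic: the Cheeger specialization gives constant $1/4$ rather than $1/16$, and a literal application of the Buser half carries the dimension factor onto the $\Phi^2$ term as well ($O(dK\Phi_{\rho,\rho^2}+d\Phi_{\rho,\rho^2}^2)$), an imprecision your write-up shares with the paper's own statement.
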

Corollary~\ref{cor:lip} is the main result of~\cite{cmww20} (Theorem 1.5 in that paper). But, it is a simple consequence of our Theorem~\ref{thm:cheeger-buser}, implying our new theorem is far more general. 

\begin{cor} \label{cor:square-lip} If $\rho$ is the square of a Lipschitz function, then
\[ \Omega(\Phi^2_{\rho, \rho^{3/2}}) \leq \lambda_2^{\rho, \rho^2} \leq O(d\Phi_{\rho, \rho^{3/2} } + \Phi_{\rho, \rho^{3/2}}^2) \]
\end{cor}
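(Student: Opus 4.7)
The plan is to deduce Corollary~\ref{cor:square-lip} from Theorem~\ref{thm:cheeger-buser} by choosing the weight function $R$ proportional to $\sqrt{\rho}$. Write $\rho = f^2$ where $f$ is $L$-Lipschitz, and set $R(x) = c\,|f(x)| = c\sqrt{\rho(x)}$ for a small constant $c = c(L) > 0$ to be chosen. Then mechanically $R\rho = c\,\rho^{3/2}$ and $R^2\rho = c^2\,\rho^2$, which matches the weights appearing in the statement up to constants.

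First I would verify the two hypotheses of Theorem~\ref{thm:cheeger-buser}. Lipschitzness of $R$ is immediate: since $f$ is $L$-Lipschitz, so is $|f|$, hence $R = c\,|f|$ is $cL$-Lipschitz. For the variation condition, take any $y$ with $|y-x|\le R(x) = c|f(x)|$. Then $|f(y)-f(x)| \le L|y-x| \le cL\,|f(x)|$, so $|f(y)| \in [(1-cL)|f(x)|,\,(1+cL)|f(x)|]$ and therefore
\[
\frac{\rho(y)}{\rho(x)} = \frac{f(y)^2}{f(x)^2} \in \bigl[(1-cL)^2,\,(1+cL)^2\bigr].
\]
Choosing $c$ small enough (depending only on $L$) makes this ratio lie in $[1/2,2]$, which is exactly the hypothesis needed.

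Next I would apply Theorem~\ref{thm:cheeger-buser} with this $R$, obtaining
\[
\tfrac{1}{4}\,\Phi_{\rho,\,c\rho^{3/2}}^{\,2} \;\le\; \lambda_2^{\rho,\,c^2\rho^2} \;\le\; d\cdot O\!\left((cL+1)\,\Phi_{\rho,\,c\rho^{3/2}} + \Phi_{\rho,\,c\rho^{3/2}}^{\,2}\right).
\]
Now pull out the constant $c$ from the weights: $\Phi_{\rho,\,c\rho^{3/2}} = c\,\Phi_{\rho,\,\rho^{3/2}}$ by homogeneity of the isoperimetric ratio in the cut weight, and $\lambda_2^{\rho,\,c^2\rho^2} = c^2\,\lambda_2^{\rho,\,\rho^2}$ by homogeneity of the Rayleigh quotient in the spring weight. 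Dividing through by $c^2$ absorbs $c$ into the $O(\cdot)$ and $\Omega(\cdot)$ notation (since $c$ depends only on $L$), yielding exactly
\[
\Omega(\Phi_{\rho,\,\rho^{3/2}}^{\,2}) \;\le\; \lambda_2^{\rho,\,\rho^2} \;\le\; O\!\left(d\,\Phi_{\rho,\,\rho^{3/2}} + \Phi_{\rho,\,\rho^{3/2}}^{\,2}\right).
\]

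The main obstacle, to the extent there is one, is simply choosing $c$ correctly so that the variation condition holds; there is nothing substantive beyond a one-line inequality on $(1\pm cL)^2$. Everything else is bookkeeping of how the two weights $R\rho$ and $R^2\rho$ scale, and the derivation mirrors the one used for Corollary~\ref{cor:lip} (where $R$ was taken proportional to $\rho$ rather than $\sqrt{\rho}$), so no new technical work beyond Theorem~\ref{thm:cheeger-buser} is required.
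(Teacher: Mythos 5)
Your proposal is correct and follows exactly the route the paper intends: the corollary is stated as a special case of Theorem~\ref{thm:cheeger-buser}, obtained by taking $R$ proportional to $\sqrt{\rho}=|f|$ (just as Corollary~\ref{cor:lip} takes $R$ proportional to $\rho$), verifying the Lipschitz and bounded-variation hypotheses, and absorbing the constant $c=c(L)$ via homogeneity of $\Phi_{\rho,\,c\rho^{3/2}}$ and $\lambda_2^{\rho,\,c^2\rho^2}$. No gaps; the scaling bookkeeping and the choice of $c$ so that $(1\pm cL)^2\in[1/2,2]$ are exactly what is needed.
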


The eigenvalue $\lambda_2^{\rho, \rho^2}$ (in Corollary~\ref{cor:square-lip}) appears when studying unnormalized spectral clustering, when a large number of datapoints are drawn from a density $\rho$~\cite{von2008consistency, TrillosVariational15, TrillosRate15}. This corollary shows that the $(\rho, \rho^2, \rho^{3/2})$ spectral cut of the probability density has good $(\rho, \rho^{3/2})$ isoperimetry. This is almost, but not quite the same, as the spectral cut corresponding to unnormalized spectral clustering (which does a $(\rho, \rho^2, \rho^2)$ cut)~\cite{TrillosVariational15}. This theorem suggests that unnormalized spectral clustering is theoretically un-sound, and a new clustering algorithm can be built which converges to the $(\rho, \rho^2, \rho^{3/2})$ cut of the density. 

\begin{cor}\label{cor:gauss} (Weighted Cheeger-Buser for Mixtures of Gaussians)
Let $\rho$ be the mixture of a finite number of Gaussians in $d$ dimensions with variance bounded away from 0. 

Let $R(x) = \max(1, 1/\sqrt{-\log \rho(x)})$. Then:
\[ \Omega(\Phi^2_{\rho, \rho R}) \leq \lambda_2^{\rho, \rho R^2} \leq O(d\Phi_{\rho, \rho R} + \Phi_{\rho, \rho R}^2) \]
\end{cor}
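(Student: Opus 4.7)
The strategy is to apply Theorem~\ref{thm:cheeger-buser} directly with this $\rho$ and $R$. The Cheeger lower bound $\Omega(\Phi_{\rho, R\rho}^2) \leq \lambda_2^{\rho, R^2\rho}$ holds for every nonnegative $R$, so it requires no further checking. For the Buser upper bound I only need to verify (a) that $R$ is $L$-Lipschitz for some constant $L = L(\sigma_0, d, \text{mixture parameters})$, and (b) that $\rho$ varies by at most a factor of $2$ on each ball $B(x, R(x))$. Both constants can be absorbed into $O(\cdot)$ since the corollary allows dependence on the mixture. If the naive $R$ fails the factor-of-$2$ condition, I simply rescale $R$ by a small constant $c > 0$, which preserves Lipschitzness and, by monotonicity of $\Phi_{\rho, R\rho}$ and $\lambda_2^{\rho, R^2 \rho}$ in the scale of $R$ (up to constants), does not affect the statement.

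\textbf{Step 1: Lipschitzness of $R$.} The maximum of two Lipschitz functions is Lipschitz, so it suffices to bound $|\nabla R|$ on the region where $R(x) = 1/\sqrt{-\log \rho(x)}$. Direct computation gives $\nabla R(x) = \nabla \log \rho(x)/\bigl(2(-\log \rho(x))^{3/2}\bigr)$. For a finite Gaussian mixture $\rho = \sum_i w_i \rho_i$ with each covariance $\Sigma_i \succeq \sigma_0^2 I$, the posterior-weight identity $\nabla \log \rho(x) = -\sum_i p_i(x)\, \Sigma_i^{-1}(x - \mu_i)$ with $p_i(x) := w_i \rho_i(x)/\rho(x)$ gives $|\nabla \log \rho(x)| \leq \sigma_0^{-2}\max_i |x - \mu_i|$. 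On the same region, $-\log \rho(x) \geq \min_i (-\log(w_i \rho_i(x))) \gtrsim \min_i |x - \mu_i|^2$, so $(-\log \rho)^{3/2}$ grows like $|x|^3$ at infinity. Thus $|\nabla R(x)| = O(1/|x|^2)$ for large $|x|$, and is continuous and bounded on any compact region, yielding a global Lipschitz constant depending only on $\sigma_0$, the means, the weights, and $d$.

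\textbf{Step 2: Density variation.} For $|h| \leq R(x)$, the mean value theorem gives $|\log(\rho(x+h)/\rho(x))| \leq R(x)\sup_{y \in B(x,R(x))} |\nabla \log \rho(y)|$. In the regime where $R(x) = 1/\sqrt{-\log \rho(x)}$ we have $R(x) = O(1/|x|)$ and, uniformly on $B(x, R(x))$, $|\nabla \log \rho(y)| = O(|x|/\sigma_0^2)$ by Step~1 (since $R(x)$ is small relative to $|x|$). The product is therefore bounded by an absolute constant $C_0$ depending on the mixture. Replacing $R$ by $cR$ for a sufficiently small $c = c(C_0) > 0$ makes this product at most $\log 2$, which is exactly the factor-of-$2$ variation condition of Theorem~\ref{thm:cheeger-buser}. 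In the other regime $R \equiv 1$, the same computation plus boundedness of $|\nabla \log \rho|$ on compacts again yields the required bound after rescaling.

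\textbf{Step 3: Conclusion.} Applying Theorem~\ref{thm:cheeger-buser} to $\rho$ and this $R$ (rescaled as in Step~2) gives
\[
\frac{\Phi_{\rho, R\rho}^2}{4} \leq \lambda_2^{\rho, R^2 \rho} \leq d \cdot O\bigl((L+1)\Phi_{\rho, R\rho} + \Phi_{\rho, R\rho}^2\bigr),
\]
and absorbing the Lipschitz constant $L$ from Step~1 (and the rescaling constant $c$) into the $O$ yields Corollary~\ref{cor:gauss}.

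\textbf{Main obstacle.} The single-Gaussian case is elementary: $\nabla \log \rho_i$ and $-\log \rho_i$ are quadratic/linear in $x$ and the verifications reduce to algebra. The genuine work is the \emph{mixture} case, where $\log \rho$ is a log-sum-exp and its gradient is a convex combination of component gradients with data-dependent weights. The cleanest route is the posterior-weight identity, together with the envelope bound $-\log \rho \geq \min_i(-\log(w_i\rho_i))$; this essentially reduces all estimates to the dominant Gaussian at each point, while finiteness of the mixture keeps all mixture-dependent constants finite. I do not expect any subtlety beyond bookkeeping once these two tools are in place.
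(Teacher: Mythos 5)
Your overall plan is the intended one: the paper never writes out a proof of Corollary~\ref{cor:gauss}, and the only sensible derivation is exactly what you propose, namely check that $R$ is Lipschitz and that $\rho$ varies by at most a factor of $2$ on balls of radius $R(x)$ (after shrinking $R$ by a mixture-dependent constant), then apply Theorem~\ref{thm:cheeger-buser} and absorb all mixture-dependent constants, using the exact scalings $\Phi_{\rho,cR\rho}=c\,\Phi_{\rho,R\rho}$ and $\lambda_2^{\rho,c^2R^2\rho}=c^2\lambda_2^{\rho,R^2\rho}$. The gap is that your Step 2 does not prove the statement as literally printed, and cannot: with $R(x)=\max\bigl(1,1/\sqrt{-\log\rho(x)}\bigr)$ the branch $1/\sqrt{-\log\rho}$ is active only where $\rho(x)\ge 1/e$ (near the modes, if anywhere), while in the Gaussian tails $1/\sqrt{-\log\rho(x)}\to 0$ and hence $R\equiv 1$. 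On a ball of any fixed radius $c>0$ around a tail point $x$, $\rho$ varies by roughly $e^{c|x|/\sigma^2}$, so the factor-of-$2$ hypothesis fails and no constant rescaling repairs it. Worse, for two far-apart equally weighted unit-variance Gaussians one has $\sup\rho<1/e$, so the printed $R$ is identically $1$ and the corollary would reduce to the constant-$R$ claim that the paper itself declares false in the remark immediately following the corollary. Your regime analysis --- ``where $R=1/\sqrt{-\log\rho}$ we have $R=O(1/|x|)$'' and ``the other regime $R\equiv 1$ is a compact set'' --- is the opposite of what $\max$ gives and is correct precisely for $R(x)=\min\bigl(1,1/\sqrt{-\log\rho(x)}\bigr)$, equivalently $1/\sqrt{\max(1,-\log\rho(x))}$, which is evidently the intended choice (it is the one that makes tail cuts have non-vanishing weighted isoperimetry, as discussed in Section~\ref{sec:past-work}). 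You should state this correction explicitly rather than silently proving the corrected statement.

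With the $\min$ convention your argument is sound up to two small repairs. First, the envelope inequality is reversed: since $\rho\ge w_i\rho_i$, one gets $-\log\rho\le\min_i\bigl(-\log(w_i\rho_i)\bigr)$, not $\ge$; what you need is $\rho\le\max_i\rho_i$ (the weights sum to one), hence $-\log\rho\ge\min_i(-\log\rho_i)\ge c\min_i|x-\mu_i|^2-C$, which still gives $R(x)=O(1/|x|)$ at infinity and the growth of $(-\log\rho)^{3/2}$ used in Step 1. Second, the bound on $\nabla R=\nabla\log\rho/\bigl(2(-\log\rho)^{3/2}\bigr)$ requires $-\log\rho$ bounded away from $0$ on the active branch; under the $\min$ convention that branch is exactly $\{-\log\rho\ge 1\}$, so this is automatic, whereas under the printed $\max$ the denominator can degenerate near the modes (and $R$ even blows up if $\rho$ approaches $1$). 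The posterior-weight bound $|\nabla\log\rho(x)|\le\sigma_0^{-2}\max_i|x-\mu_i|$ and the constant-rescaling device are fine as written.
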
 
We note that Corollary~\ref{cor:gauss} is not true if we instead set $R(x) := C$ for any constant $C$. A counterexample can be found by taking two unit variance Gaussians in one dimension, and spacing them arbitrarily far apart. This corollary suggests that a variant of spectral clustering may be more effective on mixtures of Gaussians, compared to standard methods in use today.

\end{document}